%%%%%%%%%%%%%%%%%%%%%%%%%%%%%%%%%%%%%%%%%%%%%%%%%%%%%%%%%%%%%%%%%%
%%%%%%%% NIPS 2015                               %%%%%%%%%%%%%%%%%
%%%%%%%%%%%%%%%%%%%%%%%%%%%%%%%%%%%%%%%%%%%%%%%%%%%%%%%%%%%%%%%%%%

\documentclass{article} % For LaTeX2e
\usepackage{nips15submit_e}
\usepackage{xr}
\usepackage{epstopdf}
\externaldocument{appendix}
\usepackage{amsmath, amssymb, amsthm}
\usepackage{algorithm}
\usepackage{algorithmic}
\usepackage{times}
%%%%%%%%%%%%%%%%%%%%%%%%%%%%%%%%%%%%%%%%%%%%%%%%%%%%%%%%%%%%%%
%%%%%%%%%%%%%%%%%%%%%%%%%%%%%%%%%%%%%%%%%%%%%%%%%%%%%%%%%%%%%%

\title{Policy Gradient for Coherent Risk Measures}

\author{
Aviv Tamar \\
Electrical Engineering Department\\
The Technion - Israel Institute of Technology\\
%Haifa, Israel 32000 \\
\texttt{avivt@tx.technion.ac.il} \\
\And
Yinlam Chow \\
Institute for Computational \& \\Mathematical Engineering (ICME) \\
Stanford University\\
%Stanford CA, USA 94305 \\
\texttt{ychow@stanford.edu} \\
\And
Mohammad Ghavamzadeh \\
Adobe Research \& INRIA\\
%345 Park Avenue, San Jose CA, USA 95110 \\
\texttt{ghavamza@adobe.com} \\
\And
Shie Mannor \\
Electrical Engineering Department\\
The Technion - Israel Institute of Technology\\
%Haifa, Israel 32000 \\
\texttt{shie@ee.technion.ac.il} \\
}

\newcommand{\citet}{\cite}
\newcommand{\citealt}{\cite}
\newcommand{\citep}{\cite}

\nipsfinalcopy % Uncomment for camera-ready version

\newcommand{\BEAS}{\begin{eqnarray*}}
\newcommand{\EEAS}{\end{eqnarray*}}
\newcommand{\BEQ}{\begin{equation}}
\newcommand{\EEQ}{\end{equation}}
\newcommand{\BIT}{\begin{itemize}}
\newcommand{\EIT}{\end{itemize}}

 % convex hull

%\newcommand{\argmin}{\mathop{\rm argmin}}
%\newcommand{\argmax}{\mathop{\rm argmax}}
 % epigraph
%\newcommand{\Vol}{\mathop{\bf vol}}
 % domain

%\usepackage{mathtools}
%\mathtoolsset{showonlyrefs=true}

%\newcommand{\real}{\mathbb{R}}

%\newcommand{\sign}{\mathop{\bf sign}}

 % for gradient

\usepackage{color}
\usepackage{amsmath}
\usepackage{amssymb}
\usepackage{graphicx}
\usepackage{comment,xspace}
\usepackage{fancybox}
%\usepackage{algpseudocode}
%\usepackage{algorithm,algorithmicx}
%\usepackage{algorithm}

%\newcommand{\todo}[1]{\par\noindent{\color{red}\raggedright\sc{#1}
%    \par\marginpar{\Large \bf $\star$}}}

%%%%%
%% If you use a font encoding package, please enter it here, i.e.,
%  \usepackage{T1enc}

%% How many levels of section head would you like numbered?
%% 0= no section numbers, 1= section, 2= subsection, 3= subsubsection
%%==>>
\setcounter{secnumdepth}{3}

%%For margin comments
%\newcommand{\todomar}[1]{\marginpar{\tiny\color{red}#1}}
%% Math defs
\newcommand{\real}{{\mathbb{R}}}
\newcommand{\reals}{\real}

%\newcommand{\margin}[1]{\marginpar{\tiny\ttfamily#1}}

%\newcommand{\E}[1]{\mathbb{E}\left[#1\right]}

%% theorems (always good to have some)
\newtheorem{theorem}{Theorem}[section]
\newtheorem{proposition}[theorem]{Proposition}
\newtheorem{lemma}[theorem]{Lemma}

\newtheorem{assumption}[theorem]{Assumption}

\newcommand{\pr}{P} % probability measure
\newcommand{\mdp}{\mathcal{M}} % policy parameter
\newcommand{\param}{\theta} % policy parameter
\newcommand{\dt}{\nabla_{\param}} % gradient
\newcommand{\dtN}{\nabla_{\param;N}} % sampled gradient
\newcommand{\pprob}{\pr_{\param}} % parameterized probability measure
\newcommand{\St}{{\mathcal{X}}} % state space
\newcommand{\Ac}{{\mathcal{A}}} % action space
\newcommand{\pol}{\mu_{\param}} % policy
 % cost trajectory r.v.
 % total cost r.v.
\newcommand{\Exp}[1]{\mathbb{E}\left[ #1 \right]}
\newcommand{\ExpW}[2]{\mathbb{E}_{#2} \left[ {#1} \right]}
\newcommand{\U}{{\mathcal{U}}} % coherent risk set U
\newcommand{\cZ}{\mathcal Z}
\newcommand{\spset}{\mathcal S} % saddle point set
\newcommand{\quan}{q_\alpha} % quantile
\newcommand{\cvar}{\rho_{\text{CVaR}}} %cvar
\newcommand{\msd}{\rho_{\text{MSD}}} %mean semideviation
\newcommand{\SD}{\mathbb{SD}} %semideviation
\newcommand{\pemp}{\pr_{\param;N}} % empirical parameterized probability measure
\newcommand{\ind}[1]{\mathbb{I}\left\{ {#1} \right\}} % indicator function
\newcommand{\dotp}[2]{\langle {#1},{#2} \rangle} % dot product
 % empirical dot product
\newcommand{\avgN}{\frac{1}{N}\sum_{i=1}^{N}} %empirical average
\newcommand{\saaspset}{\spset_{N}} % SAA saddle point set
\newcommand{\optset}{\mathcal P} % optimal solution set
\newcommand{\saaoptset}{\mathcal P_N} % SAA optimal solution set
\newcommand{\setdist}[2]{\mathbb D \left\{ {#1}, {#2}\right\}} % set distance
\newcommand{\saaLag}{L_{\theta;N}(\xi^*_{\theta;N},\lambda^{*,\mathcal P}_{\theta;N},\lambda^{*,\mathcal E}_{\theta;N},\lambda^{*,\mathcal I}_{\theta;N})} % SAA Lagrangian
\newcommand{\Lag}{L_{\theta}(\xi^*_{\theta},\lambda^{*,\mathcal P}_{\theta},\lambda^{*,\mathcal E}_{\theta},\lambda^{*,\mathcal I}_{\theta})} % Lagrangian
\newcommand{\alggiven}{\textbf{Given:}}
\newcommand{\algreturn}{\textbf{Return:}} 

\begin{document}

\maketitle

\begin{abstract}
Several authors have recently developed risk-sensitive policy gradient methods that augment the standard expected cost minimization problem with a measure of \emph{variability} in cost. These studies have focused on \emph{specific} risk-measures, such as the variance or conditional value at risk (CVaR). In this work, we extend the policy gradient method to \emph{the whole class} of coherent risk measures, which is widely accepted in finance and operations research, among other fields. We consider both static and time-consistent dynamic risk measures. For static risk measures, our approach is in the spirit of \emph{policy gradient} algorithms and combines a standard sampling approach with convex programming. For dynamic risk measures, our approach is \emph{actor-critic} style and involves explicit approximation of value function. Most importantly, our contribution presents a \emph{unified} approach to risk-sensitive reinforcement learning that generalizes and extends previous results.
\end{abstract}

%%%%%%%%%%%%%%%%%%%%%%%%%%%%%%%%%%%%%%%%%%%%%%%%%%%%%%%%%%%%%%%%%%%%%%%%%%%%%%%%
%%%%%%%%%%%%%%%%%%%%%%%%%%%%%%%%%%%%%%%%%%%%%%%%%%%%%%%%%%%%%%%%%%%%%%%%%%%%%%%%
%%%%%%%%%%%%%%%%%%%%%%%%%%%%%%%%%%%%%%%%%%%%%%%%%%%%%%%%%%%%%%%%%%%%%%%%%%%%%%%%
%%%%%%%%%%%%%%%%%%%%%%%%%%%%%%%%%%%%%%%%%%%%%%%%%%%%%%%%%%%%%%%%%%%%%%%%%%%%%%%%
%%%%%%%%%%%%%%%%%%%%%%%%%%%%%%%%%%%%%%%%%%%%%%%%%%%%%%%%%%%%%%%%%%%%%%%%%%%%%%%%

\section{Introduction}
% Risk is important, also for MDPs
Risk-sensitive optimization considers problems in which the objective involves a \emph{risk measure} of the random cost, in contrast to the typical \emph{expected} cost objective. Such problems are important when the decision-maker wishes to manage the \emph{variability} of the cost, in addition to its expected outcome, and are standard in various applications of finance and operations research. In reinforcement learning (RL) \cite{sutton_reinforcement_1998}, risk-sensitive objectives have gained popularity as a means to regularize the variability of the total (discounted) cost/reward in a Markov decision process (MDP).

% Many risk measures have been studied, Coherent risk is a popular *unified* approach
Many risk objectives have been investigated in the literature and applied to RL, such as the celebrated Markowitz mean-variance model~\cite{Markowitz59PS}, Value-at-Risk (VaR) and Conditional Value at Risk (CVaR)~\citep{moody2001learning,tamar2012policy,prashanth2013actor,delage_percentile_2010,chow2014cvar,tamar2015optimizing}. The view taken in this paper is that the preference of one risk measure over another is \emph{problem-dependent} and depends on factors such as the cost distribution, sensitivity to rare events, ease of estimation from data, and computational tractability of the optimization problem. However, the highly influential paper of Artzner et al.~\citet{artzner1999coherent} identified a set of natural properties that are desirable for a risk measure to satisfy. Risk measures that satisfy these properties are termed \emph{coherent} and have obtained widespread acceptance in financial applications, among others. We focus on such coherent measures of risk in this work.

% time consistency is also important
For sequential decision problems, such as MDPs, another desirable property of a risk measure is \emph{time consistency}. A time-consistent risk measure satisfies a ``dynamic programming" style property: if a strategy is risk-optimal for an $n$-stage problem, then the component of the policy from the $t$-th time until the end (where $t<n$) is also risk-optimal (see principle of optimality in~\citealt{Ber2012DynamicProgramming}). The recently proposed class of dynamic Markov coherent risk measures~\citep{ruszczynski2010risk} satisfies both the coherence and time consistency properties.

% We extend RL to coherent risk
In this work, we present policy gradient algorithms for RL with a coherent risk objective. Our approach applies to \emph{the whole class} of coherent risk measures, thereby generalizing and unifying previous approaches that have focused on individual risk measures.  We consider both \emph{static} coherent risk of the total discounted return from an MDP and  time-consistent {\em dynamic} Markov coherent risk.
%
%Our proposed algorithm for the static risk is in the spirit of \emph{policy gradient} algorithms~\citep{baxter2001infinite}, while the one for the dynamic risk is \emph{actor-critic} style~\citep{konda2000actor}.
%
Our main contribution is formulating the risk-sensitive policy-gradient under the coherent-risk framework. More specifically, we provide:
%\vspace{-10pt}
\begin{itemize}
\item A new formula for the gradient of static coherent risk that is convenient for approximation using sampling.
\item An algorithm for the gradient of general static coherent risk that involves sampling with convex programming and a corresponding consistency result.
\item A new policy gradient theorem for Markov coherent risk, relating the gradient to a suitable \emph{value function} and a corresponding actor-critic algorithm.
%\item A corresponding actor-critic algorithm for the gradient of dynamic Markov coherent risk, with function approximation in the value function. We prove consistency of the gradient, and analyze sensitivity to approximation errors in the value-function.
\end{itemize}
Several previous results are special cases of the results presented here; our approach allows to re-derive them in greater generality and simplicity.

\paragraph{Related Work}
Risk-sensitive optimization in RL for specific risk functions has been studied recently by several authors.~\citet{borkar2001sensitivity} studied exponential utility functions,~\citet{moody2001learning},~\citet{tamar2012policy},~\citet{prashanth2013actor} studied mean-variance models,~\citet{chow2014cvar},~\citet{tamar2015optimizing} studied CVaR in the static setting, and~\citet{petrik2012approximate},~\citet{chow_mpc_14} studied dynamic coherent risk for systems with linear dynamics. Our paper presents a general method \emph{for the whole class} of coherent risk measures (both static and dynamic) and is not limited to a specific choice within that class, nor to particular system dynamics.

Reference~\cite{osogami2012robustness} showed that an MDP with a dynamic coherent risk objective is essentially a robust MDP. The planning for large scale MDPs was considered in ~\citet{tamar2014robust}, using an approximation of the value function. For many problems, approximation in the policy space is more suitable (see, e.g.,~\citealt{MarTsi98}). Our sampling-based RL-style approach is suitable for approximations both in the policy and value function, and scales-up to large or continuous MDPs. We do, however, make use of a technique of~\citet{tamar2014robust} in a part of our method.

Optimization of coherent risk measures was thoroughly investigated by Ruszczynski and Shapiro~\cite{ruszczynski2006optimization} (see also~\citealt{Shapiro2009}) for the stochastic programming case in which the policy parameters do not affect the distribution of the stochastic system (i.e.,~the MDP trajectory), but only the reward function, and thus, this approach is not suitable for most RL problems. For the case of MDPs and dynamic risk,~\citet{ruszczynski2010risk} proposed a dynamic programming approach. This approach does not scale-up to large MDPs, due to the ``curse of dimensionality". For further motivation of risk-sensitive policy gradient methods, we refer the reader to~\citet{moody2001learning,tamar2012policy,prashanth2013actor,chow2014cvar,tamar2015optimizing}.

%In particular, for the special case of CVaR, we obtain similar results to~\citet{tamar2015optimizing}, but under weaker assumptions and simpler derivations.

%%%%%%%%%%%%%%%%%%%%%%%%%%%%%%%%%%%%%%%%%%%%%%%%%%%%%%%%%%%%%%%%%%%%%%%%%%%%%%%%
%%%%%%%%%%%%%%%%%%%%%%%%%%%%%%%%%%%%%%%%%%%%%%%%%%%%%%%%%%%%%%%%%%%%%%%%%%%%%%%%
%%%%%%%%%%%%%%%%%%%%%%%%%%%%%%%%%%%%%%%%%%%%%%%%%%%%%%%%%%%%%%%%%%%%%%%%%%%%%%%%
%%%%%%%%%%%%%%%%%%%%%%%%%%%%%%%%%%%%%%%%%%%%%%%%%%%%%%%%%%%%%%%%%%%%%%%%%%%%%%%%
%%%%%%%%%%%%%%%%%%%%%%%%%%%%%%%%%%%%%%%%%%%%%%%%%%%%%%%%%%%%%%%%%%%%%%%%%%%%%%%%

%\vspace{-0.1in}
\section{Preliminaries}
\label{sec:background}
%\vspace{-0.1in}

Consider a probability space $(\Omega, \mathcal F,\pprob)$, where $\Omega$ is the set of outcomes (sample space), $\mathcal F$ is a $\sigma$-algebra over $\Omega$ representing the set of events we are interested in, and $\pprob \in \mathcal B$, where $\mathcal B:=\left\{ \xi: \int_{\omega\in\Omega} \xi(\omega)=1, \xi\geq 0 \right\}$ is the set of probability distributions, is a probability measure over $\mathcal F$ parameterized by some tunable parameter $\theta \in \mathbb R^{K}$. In the following, we suppress the notation of $\param$ in $\param$-dependent quantities.

To ease the technical exposition, in this paper we restrict our attention to finite probability spaces, i.e.,~$\Omega$ has a finite number of elements. Our results can be extended to the $L_p$-normed spaces without loss of generality, but the details are omitted for brevity.

Denote by $\cZ$ the space of random variables $Z:\Omega\mapsto (-\infty,\infty)$ defined over the probability space $(\Omega, \mathcal F, \pprob)$. In this paper, a random variable $Z\in \cZ$ is interpreted as a cost, i.e.,~the smaller the realization of $Z$, the better. For $Z,W\in\mathcal Z$, we denote by $Z\leq W$ the point-wise partial order, i.e.,~$Z(\omega)\leq W(\omega)$ for all $\omega\in \Omega$. We denote by $\mathbb E_{\xi}[Z]\doteq \sum_{\omega\in\Omega}\pprob(\omega)\xi(\omega)Z(\omega)$ a $\xi$-weighted expectation of $Z$.%We now give a brief definition of MDPs.

An MDP is a tuple $\mdp=(\St,\Ac,C,P,\gamma,x_0)$, where $\St$ and $\Ac$ are the state and action spaces; $C(x)\in[-C_{\max},C_{\max}]$ is a bounded, deterministic, and state-dependent cost; $P(\cdot|x,a)$ is the transition probability distribution; $\gamma$ is a discount factor; and $x_0$ is the initial state.\footnote{Our results may easily be extended to random costs, state-action dependent costs, and random initial states.} Actions are chosen according to a $\param$-parameterized stationary Markov\footnote{For the dynamic Markov risk we study, an optimal policy is stationary Markov, while this is not necessarily the case for the static risk. Our results can be extended to history-dependent policies or stationary Markov policies on a state space augmented with the accumulated cost. The latter has shown to be sufficient for optimizing the CVaR risk~\cite{bauerle2011markov}.} policy $\pol(\cdot|x)$. We denote by $x_0,a_0,\dots,x_T,a_T$ a trajectory of length $T$ drawn by following the policy $\pol$ in the MDP.

%%%%%%%%%%%%%%%%%%%%%%%%%%%%%%%%%%%%%%%%%%%%%%%%%%%%%%%%%%%%%%%%%%%%%%%%%%%%%%%%
%%%%%%%%%%%%%%%%%%%%%%%%%%%%%%%%%%%%%%%%%%%%%%%%%%%%%%%%%%%%%%%%%%%%%%%%%%%%%%%%
%%%%%%%%%%%%%%%%%%%%%%%%%%%%%%%%%%%%%%%%%%%%%%%%%%%%%%%%%%%%%%%%%%%%%%%%%%%%%%%%

\vspace{-0.1in}
\subsection{Coherent Risk Measures}
\label{subsec:coherent}
\vspace{-0.1in}
% Static coherent riskier
A \emph{risk measure} is a function $\rho:\cZ \to \mathbb R$ that maps an uncertain outcome $Z$ to the extended real line $\reals \cup\{ +\infty,-\infty\}$, e.g.,~the expectation $\Exp{Z}$ or the conditional value-at-risk (CVaR) $\min_{\nu\in\mathbb R}\big\{\nu + \frac{1}{\alpha}\mathbb E\big[(Z-\nu)^+\big]\big\}$.
%   - Coherent risk axioms
A risk measure is called \emph{coherent}, if it satisfies the following conditions for all $Z,W\in\mathcal Z$~\cite{artzner1999coherent}:
\begin{description}
\item[A1] Convexity: $\forall\lambda\in[0,1],\;\rho\big(\lambda Z + (1-\lambda)W\big)\leq \lambda\rho(Z) + (1-\lambda)\rho(W)$;
\item[A2] Monotonicity:  if $Z\leq W$, then $\rho(Z)\leq\rho(W)$;
\item[A3] Translation invariance: $\forall a\! \in \!\mathbb R,\;\rho(Z+a)=\rho(Z) + a$;
\item[A4] Positive homogeneity: if $\lambda\geq0$, then $\rho(\lambda Z) = \lambda \rho(Z)$.
\end{description}
Intuitively, these condition ensure the ``rationality" of single-period risk assessments: A1 ensures that diversifying an investment will reduce its risk; A2 guarantees that an asset with a higher cost for every possible scenario is indeed riskier; A3, also known as `cash invariance', means that the deterministic part of an investment portfolio does not contribute to its risk; the intuition behind A4 is that doubling a position in an asset doubles its risk.
We further refer the reader to~\citet{artzner1999coherent} for a more detailed motivation of coherent risk.
%   - Coherent risk as min-max problem over the set U

The following representation theorem~\cite{Shapiro2009} shows an important property of coherent risk measures that is fundamental to our gradient-based approach.
\begin{theorem}
\label{thm:rep}
A risk measure $\rho:\mathcal Z \rightarrow \mathbb R$ is coherent if and only if there exists a convex bounded and closed set $\U \subset \mathcal B$ such that\footnote{When we study risk in MDPs, the risk envelop $\U(P_\theta)$ in Eq.~\ref{eq:coherent_as_optimization} also depends on the state $x$.
}%We drop this dependency for simplicity in the case of static risk analysis.}
\begin{equation}
\label{eq:coherent_as_optimization}
\rho(Z)=\max_{\xi\,:\,\xi P_\theta\in \U(P_\theta)} \mathbb E_{\xi}[Z].
\end{equation}
\end{theorem}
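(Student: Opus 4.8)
The plan is to prove the two implications separately. The \textbf{sufficiency} direction is a direct verification: assuming $\rho(Z)=\max_{q\in\U}\sum_{\omega}q(\omega)Z(\omega)$ for a convex, closed, bounded set $\U$ of probability measures (writing a measure as $q=\xi P_\theta$), each of A1--A4 follows by writing the relevant (in)equality for a fixed $q$ and then taking the maximum over $q\in\U$: linearity of $q\mapsto\sum_\omega q(\omega)Z(\omega)$ in $Z$ gives convexity A1; $q\ge0$ turns $Z\le W$ into $\sum_\omega q(\omega)Z(\omega)\le\sum_\omega q(\omega)W(\omega)$ and hence A2; $\sum_\omega q(\omega)=1$ gives translation invariance A3; and for $\lambda\ge0$ the scalar pulls out of both the sum and the max, giving positive homogeneity A4.

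For \textbf{necessity}, since $\Omega$ is finite I would first identify $\cZ$ with $\mathbb R^n$, $n=|\Omega|$, so that $\sum_\omega q(\omega)Z(\omega)=\dotp{q}{Z}$ is the ordinary inner product. From A4 one gets $\rho(0)=0$, and A1 together with A4 gives subadditivity, $\rho(Z+W)=2\rho\!\left(\tfrac12 Z+\tfrac12 W\right)\le\rho(Z)+\rho(W)$; hence $\rho$ is a finite-valued sublinear functional on $\mathbb R^n$. A finite convex function on $\mathbb R^n$ is continuous, so by the standard support-function (Legendre--Fenchel) representation of sublinear functionals, $\rho$ is the support function of the set
\[
  \U \;:=\; \bigl\{\, q\in\mathbb R^n \;:\; \dotp{q}{Z}\le\rho(Z)\ \text{ for all }Z\in\cZ \,\bigr\} \;=\; \partial\rho(0),
\]
which is nonempty, convex, closed, and bounded (boundedness is exactly where continuity/finiteness of $\rho$ enters), and $\rho(Z)=\max_{q\in\U}\dotp{q}{Z}$ with the maximum attained by compactness.

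It then remains to check that $\U\subset\mathcal B$, which I would do as follows. \emph{Nonnegativity}: for $Z\le 0$ we have $\rho(Z)\le\rho(0)=0$ by A2, so for any $q\in\U$ and any $\omega_0$, taking $Z=-\ind{\omega=\omega_0}$ yields $-q(\omega_0)=\dotp{q}{Z}\le 0$; thus $q\ge0$. \emph{Normalization}: applying A3 with $Z\equiv0$ gives $\rho(a)=a$ for all $a\in\mathbb R$, while the representation gives $\rho(a)=\max_{q\in\U}a\sum_\omega q(\omega)$, which equals $a\max_{q\in\U}\sum_\omega q(\omega)$ for $a\ge0$ and $a\min_{q\in\U}\sum_\omega q(\omega)$ for $a<0$; both being equal to $a$ forces $\sum_\omega q(\omega)=1$ for every $q\in\U$. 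Hence $\U$ is a convex, closed, bounded subset of $\mathcal B$, and writing each $q\in\U$ as $q=\xi P_\theta$ (taking, as is standard, $P_\theta$ to have full support on $\Omega$, or else restricting $\Omega$ to that support) yields $\rho(Z)=\max_{\xi\,:\,\xi P_\theta\in\U}\mathbb E_\xi[Z]$.

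I expect the main obstacle to be the support-function step --- obtaining a closed convex set with the maximum \emph{attained} --- rather than the axiom-by-axiom bookkeeping, which is routine. In finite dimensions this step is clean (finiteness $\Rightarrow$ continuity $\Rightarrow$ bounded subdifferential at $0$, and compactness gives attainment); in the $L_p$ setting mentioned in the text one instead needs a Hahn--Banach separation argument together with weak-$*$ compactness of the relevant dual set, which is the only place where the general proof departs from the one above.
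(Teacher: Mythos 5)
Your proof is correct. Note, however, that the paper does not prove this statement at all: Theorem~\ref{thm:rep} is imported verbatim from the cited reference (Shapiro et al., 2009, Theorem~6.4ff.), so there is no in-paper argument to compare against. What you have written is essentially the finite-dimensional specialization of that standard argument: the reference proceeds via Fenchel--Moreau conjugate duality (lower semicontinuity plus convexity give the biconjugate representation, positive homogeneity forces the conjugate to be the indicator of a set $\U$, and monotonicity plus translation invariance pin $\U$ inside the densities), whereas you shortcut directly to the support-function/subdifferential description $\U=\partial\rho(0)$, which is legitimate because in the finite-$\Omega$ setting the paper adopts, finiteness of $\rho$ already yields continuity, nonemptiness and compactness of $\partial\rho(0)$, and attainment of the maximum --- no separate closedness hypothesis or weak-$*$ compactness argument is needed. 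The axiom-by-axiom verification of sufficiency, the derivation of $q\ge 0$ from A2 via indicator test functions, and the normalization $\sum_\omega q(\omega)=1$ from A3 applied at $Z\equiv 0$ are all sound, and you correctly flag the only genuine bookkeeping issue in matching the paper's notation, namely that writing $q=\xi P_\theta$ presupposes $P_\theta$ has full support (or that $\Omega$ is restricted to it).
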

%
%TODO - do we need this weakly* closed comment?
\vspace{-5pt}
The result essentially states that any coherent risk measure is an expectation w.r.t.~a worst-case density function $\xi P_\theta$, chosen adversarially from a suitable set of test density functions $\U(P_\theta)$, referred to as \emph{risk envelope}. Moreover, it means that any coherent risk measure is \emph{uniquely represented} by its risk envelope. Thus, in the sequel, we shall interchangeably refer to coherent risk-measures either by their explicit functional representation, or by their corresponding risk-envelope.

%   - The set U in our representation (constraints)
In this paper, we assume that the risk envelop $\U(P_\theta)$ is given in a canonical convex programming formulation, and satisfies the following conditions.
\begin{assumption}[The General Form of Risk Envelope]\label{assume:risk_envelope}
For each given policy parameter $\theta\in\mathbb R^K$, the risk envelope $\U$ of a coherent risk measure can be written as

\vspace{-0.25in}
\begin{small}
\begin{equation}\label{eq:U_as_optimization}
%\begin{split}
\mathcal U(\pprob)=\bigg\{\xi\pprob:\;g_e(\xi,\pprob)=0,\;\forall e\in\mathcal E,
\;f_i(\xi,\pprob)\leq 0,\;\forall i\in\mathcal I,
\;\sum_{\omega\in\Omega}\xi(\omega)\pprob(\omega)=1,\;\xi(\omega)\geq 0\bigg\},
%\end{split}
\end{equation}
\end{small}
\vspace{-0.225in}

where each constraint $g_e(\xi,P_\theta)$ is an affine function in $\xi$, each constraint $f_i(\xi,P_\theta)$ is a convex function in $\xi$, and there exists a strictly feasible point $\overline\xi$. $\mathcal E$ and $\mathcal I$ here denote the sets of equality and inequality constraints, respectively. Furthermore, for any given $\xi\in\mathcal B$, $f_i({\xi},p)$ and $g_e({\xi},p)$ are twice differentiable in $p$, and there exists a $M>0$ such that

%\vspace{-0.15in}
\begin{small}
\begin{equation*}
\max\left\{\max_{i\in\mathcal I}\left|\frac{ d f_i({\xi},p)}{d p(\omega)}\right|,\max_{e\in\mathcal E}\left|\frac{ d g_e({\xi},p)}{d p(\omega)}\right|\right\}\leq M,\,\forall \omega\in\Omega.
\end{equation*}
\end{small}
%\vspace{-0.2in}
\end{assumption}
Assumption~\ref{assume:risk_envelope} implies that the risk envelope $\U(P_\theta)$ is known in an \emph{explicit} form. From Theorem 6.6 of \citet{Shapiro2009}, in the case of a finite probability space, $\rho$ is a coherent risk if and only if $\U(P_\theta)$ is a convex and compact set.
%In the case of a finite probability space, by Theorem 3.18~of~\citet{rudin1991functional}, we conclude that if $\rho$ is a coherent risk, $\mathcal U(P_\theta)$ is convex and compact.\textcolor{red}{Can we simplify this?}
This justifies the affine assumption of $g_e$ and the convex assumption of $f_i$. Moreover, the additional assumption on the smoothness of the constraints holds for many popular coherent risk measures, such as the CVaR, the mean-semi-deviation, and spectral risk measures~\cite{acerbi2002spectral}.

%%%%%%%%%%%%%%%%%%%%%%%%%%%%%%%%%%%%%%%%%%%%%%%%%%%%%%%%%%%%%%%%%%%%%%%%%%%%%%%%
%%%%%%%%%%%%%%%%%%%%%%%%%%%%%%%%%%%%%%%%%%%%%%%%%%%%%%%%%%%%%%%%%%%%%%%%%%%%%%%%
%%%%%%%%%%%%%%%%%%%%%%%%%%%%%%%%%%%%%%%%%%%%%%%%%%%%%%%%%%%%%%%%%%%%%%%%%%%%%%%%

% Dynamic Markov coherent risk
\subsection{Dynamic Risk Measures}
%   - Explain motivation (Bellman equation, time consistency)

The risk measures defined above do not take into account any temporal structure that the random variable might have, such as when it is associated with the return of a trajectory in the case of MDPs. In this sense, such risk measures are called \emph{static}. {\em Dynamic} risk measures, on the other hand, explicitly take into account the temporal nature of the stochastic outcome. A primary motivation for considering such measures is the issue of \emph{time consistency}, usually defined as follows~\cite{ruszczynski2010risk}: if a certain outcome is considered less risky in all states of the world at stage $t+1$, then it should also be considered less risky at stage $t$.
%A very important class of dynamic risk measures are those that are {\em time consistent}.
Example 2.1 in~\citet{iancu2011tight} shows the importance of time consistency in the evaluation of risk in a dynamic setting. It illustrates that for multi-period decision-making, optimizing a static measure can lead to ``time-inconsistent" behavior. Similar paradoxical results could be obtained with other risk metrics; we refer the readers to~\citet{ruszczynski2010risk} and~\citet{iancu2011tight} for further insights.

\vspace{-0.1in}
\paragraph{Markov Coherent Risk Measures.}
Markov risk measures were introduced in \citet{ruszczynski2010risk} and are a useful class of dynamic time-consistent risk measures that are particularly important for our study of risk in MDPs.
%Following Eq.~\ref{eq:time-cons}, for an MDP $\mdp$,
For a $T$-length horizon and MDP $\mdp$, the Markov coherent risk measure $\rho_T(\mdp)$ is

%Equipped with the notion of time consistent risk measures, in this paper we are interested in \emph{stationary time consistent coherent risk measures}, denoted by $\rho_T$ and defined for an MDP $\mdp$ as follows:

\vspace{-10pt}
\begin{small}
\begin{equation}\label{eq:dynamic_risk_def}
\rho_T(\mdp) = C(x_0) + \gamma\rho\Bigg(C(x_1) + \ldots +\gamma\rho\Big(C(x_{T-1}) + \gamma\rho\big(C(x_T)\big)\Big)\Bigg),
\end{equation}
\end{small}
%\begin{small}
%\begin{equation}\label{eq:dynamic_risk_def}
%\rho_T(\mdp) = C(x_0,a_0) + \gamma\rho\Bigg(C(x_1,a_1) + \ldots +\gamma\rho\Big(C(x_{T-1},a_{T-1}) + \gamma\rho\big(C(x_T,a_T)\big)\Big)\Bigg),
%\end{equation}
%\end{small}
\vspace{-0.15in}

where $\rho$ is a static coherent risk measure that satisfies Assumption \ref{assume:risk_envelope} and $x_0,\dots,x_T$ is a trajectory drawn from the MDP $\mdp$ under policy $\pol$. It is important to note that in \eqref{eq:dynamic_risk_def}, each static coherent risk $\rho$ at state $x\in\mathcal X$ is induced by the transition probability $P_\theta(\cdot|x)=\sum_{a\in\mathcal A}P(x'|x,a)\mu_\theta(a|x)$. We also define $\rho_\infty(\mdp) \doteq \lim_{T \to \infty}\rho_T(\mdp)$, which is well-defined since $\gamma<1$ and the cost is bounded. We further assume that $\rho$ in \eqref{eq:dynamic_risk_def} is a \emph{Markov risk} measure, i.e.,~the evaluation of each static coherent risk measure $\rho$ is not allowed to depend on the whole past.
%Explicitly, for any $t\geq 0$ and state dependent random variable $Z(x_{t+1})\in\mathcal Z_{t+1}$, the risk evaluation is given by
%
%%\vspace{-0.2in}
%\begin{small}
%\begin{equation}
%\label{eq:representation-result}
%\rho\big(Z(x_{t+1})\big)=\max_{\xi\,:\, \xi P_\theta(\cdot |x_t)\in \U(x_t,P_\theta(\cdot |x_t))}\mathbb E_{\xi}\big[Z(x_{t+1})\big],
%\end{equation}
%\end{small}
%%\vspace{-0.2in}
%
%where we let $\U(x_t,P_\theta(\cdot |x_t))$ denote the risk-envelope \eqref{eq:U_as_optimization} with $\pprob$ replaced with $P_\theta(\cdot |x_t)$. The Markovian assumption on the risk measure $\rho_T(\mdp)$ allows us to optimize it using dynamic programming techniques. More details can be found in Section~\ref{sec:dynamic}.

%%%%%%%%%%%%%%%%%%%%%%%%%%%%%%%%%%%%%%%%%%%%%%%%%%%%%%%%%%%%%%%%%%%%%%%%%%%%%%%%
%%%%%%%%%%%%%%%%%%%%%%%%%%%%%%%%%%%%%%%%%%%%%%%%%%%%%%%%%%%%%%%%%%%%%%%%%%%%%%%%
%%%%%%%%%%%%%%%%%%%%%%%%%%%%%%%%%%%%%%%%%%%%%%%%%%%%%%%%%%%%%%%%%%%%%%%%%%%%%%%%
%%%%%%%%%%%%%%%%%%%%%%%%%%%%%%%%%%%%%%%%%%%%%%%%%%%%%%%%%%%%%%%%%%%%%%%%%%%%%%%%
%%%%%%%%%%%%%%%%%%%%%%%%%%%%%%%%%%%%%%%%%%%%%%%%%%%%%%%%%%%%%%%%%%%%%%%%%%%%%%%%

\vspace{-0.1in}
\section{Problem Formulation}\label{sec:problem_formulation}
\vspace{-0.1in}

In this paper, we are interested in solving two risk-sensitive optimization problems. Given a random variable $Z$ and a static coherent risk measure $\rho$ as defined in Section~\ref{sec:background}, the static risk problem (SRP) is given by
%
%\vspace{-3pt}
\begin{equation}\label{eq:SRP_problem}
    \min_{\param} \quad \rho(Z).
\end{equation}
%\vspace{-0.25in}
For example, in an RL setting, $Z$ may correspond to the cumulative discounted cost $Z = C(x_0) + \gamma C(x_1) + \dots +\gamma^T C(x_{T})$ of a trajectory induced by an MDP with a policy parameterized by $\theta$.

For an MDP $\mathcal{M}$ and a dynamic Markov coherent risk measure $\rho_T$ as defined by Eq.~\ref{eq:dynamic_risk_def}, the dynamic risk problem (DRP) is given by
%
%\vspace{-6pt}
\begin{equation}\label{eq:DRP_problem}
    \min_{\param} \quad \rho_\infty(\mdp).
\end{equation}
%\vspace{-0.25in}
Except for very limited cases, there is no reason to hope that neither the SRP in~\eqref{eq:SRP_problem} nor the DRP in~\eqref{eq:DRP_problem} should be tractable problems, since the dependence of the risk measure on $\theta$ may be complex and non-convex. %TODO - say this better...
In this work, we aim towards a more modest goal and search for a \emph{locally} optimal $\param$. Thus, the main problem that we are trying to solve in this paper is how to calculate the gradients of the SRP's and DRP's objective functions
%
%\vspace{-2pt}
\begin{equation*}
    \dt \rho(Z) \quad\quad \text{and} \quad\quad \dt \rho_\infty(\mdp).
\end{equation*}
%\vspace{-0.1in}
%
We are interested in non-trivial cases in which the gradients cannot be calculated analytically. In the static case, this would correspond to a non-trivial dependence of $Z$ on $\param$. For dynamic risk, we also consider cases where the state space is too large for a tractable computation. Our approach for dealing with such difficult cases is through sampling. We assume that in the static case, we may obtain i.i.d.~samples of the random variable $Z$. For the dynamic case, we assume that for each state and action $(x,a)$ of the MDP, we may obtain i.i.d.~samples of the next state $x'\sim P(\cdot|x,a)$. We show that sampling may indeed be used in both cases to devise suitable estimators for the gradients.

To finally solve the SRP and DRP problems, a gradient estimate may be plugged into a standard stochastic gradient descent (SGD) algorithm for learning a locally optimal solution to~\eqref{eq:SRP_problem} and~\eqref{eq:DRP_problem}. %TODO - say something about convergence
From the structure of the dynamic risk in Eq.~\ref{eq:dynamic_risk_def}, one may think that a gradient estimator for $\rho(Z)$ may help us to estimate the gradient $\dt \rho_\infty(\mdp)$. Indeed, we follow this idea and begin with estimating the gradient in the static risk case.
%The rest of the paper is thus structured as follows. In Section~\ref{sec:static}, we propose an estimator for $\dt \rho(Z)$,
%which is used in Section~\ref{sec:dynamic} to estimate $\dt \rho_\infty(\mdp)$.

%%%%%%%%%%%%%%%%%%%%%%%%%%%%%%%%%%%%%%%%%%%%%%%%%%%%%%%%%%%%%%%%%%%%%%%%%%%%%%%%
%%%%%%%%%%%%%%%%%%%%%%%%%%%%%%%%%%%%%%%%%%%%%%%%%%%%%%%%%%%%%%%%%%%%%%%%%%%%%%%%
%%%%%%%%%%%%%%%%%%%%%%%%%%%%%%%%%%%%%%%%%%%%%%%%%%%%%%%%%%%%%%%%%%%%%%%%%%%%%%%%
%%%%%%%%%%%%%%%%%%%%%%%%%%%%%%%%%%%%%%%%%%%%%%%%%%%%%%%%%%%%%%%%%%%%%%%%%%%%%%%%
%%%%%%%%%%%%%%%%%%%%%%%%%%%%%%%%%%%%%%%%%%%%%%%%%%%%%%%%%%%%%%%%%%%%%%%%%%%%%%%%

\vspace{-0.1in}
\section{Gradient Formula for Static Risk}\label{sec:static}
\vspace{-0.1in}
% General gradient formula using envelope theorem + proof

In this section, we consider a static coherent risk measure $\rho(Z)$ and propose sampling-based estimators for $\dt \rho(Z)$. We make the following assumption on the policy parametrization, which is standard in the policy gradient literature~\citep{MarTsi98}.
\begin{assumption}\label{ass:LR_well_behaved}
The likelihood ratio $\dt \log P(\omega)$ is well-defined and bounded for all $\omega \!\in \!\Omega$.
\end{assumption}
%
%\vspace{-2pt}
Moreover, our approach implicitly assumes that given some $\omega\in\Omega$, $\dt \log P(\omega)$ may be easily calculated. This is also a standard requirement for policy gradient algorithms~\cite{MarTsi98} and is satisfied in various applications such as queueing systems, inventory management, and financial engineering (see, e.g.,~the survey by Fu~\citealt{Fu2006gradients}).

Using Theorem~\ref{thm:rep} and Assumption~\ref{assume:risk_envelope}, for each $\theta$, we have that $\rho(Z)$ is the solution to the convex optimization problem~\eqref{eq:coherent_as_optimization} (for that value of $\theta$). The Lagrangian function of~\eqref{eq:coherent_as_optimization}, denoted by $L_{\theta}(\xi,\lambda^{\mathcal P},\lambda^{\mathcal E},\lambda^{\mathcal I})$, may be written as

\vspace{-0.2in}
\begin{small}
\begin{equation}\label{eq:Lagrangian}
%\begin{split}
L_{\theta}(\xi,\lambda^{\mathcal P}\!\!,\lambda^{\mathcal E}\!\!,\lambda^{\mathcal I})
\!=\!\!\sum_{\omega \in \Omega} \!\xi(\omega) P_\theta(\omega) Z(\omega)-\lambda^{\mathcal P}\!\left(\sum_{\omega \in \Omega}\xi(\omega)P_\theta(\omega)\!-\!1\!\right)
-\sum_{e\in\mathcal E}\lambda^{\mathcal E}(e) g_e(\xi,\!P_\theta)-\sum_{i\in\mathcal I}\lambda^{\mathcal I}(i) f_i(\xi,\!P_\theta).
%\end{split}
\end{equation}
\end{small}
\vspace{-0.2in}

The convexity of~\eqref{eq:coherent_as_optimization} and its strict feasibility due to Assumption~\ref{assume:risk_envelope} implies that $L_{\theta}(\xi,\lambda^{\mathcal P},\lambda^{\mathcal E},\lambda^{\mathcal I})$ has a non-empty set of saddle points $\spset$. The next theorem presents a formula for the gradient $\dt \rho(Z)$. As we shall subsequently show, this formula is particularly convenient for devising sampling based estimators for $\dt \rho(Z)$.
\begin{theorem}\label{thm:static_gradient}
Let Assumptions~\ref{assume:risk_envelope} and~\ref{ass:LR_well_behaved} hold. For any saddle point $(\xi^*_{\theta},\lambda^{*,\mathcal P}_{\theta},\lambda^{*,\mathcal E}_{\theta},\lambda^{*,\mathcal I}_{\theta}) \in \spset$ of~\eqref{eq:Lagrangian}, we have
\begin{equation*}
%\begin{split}
  \dt{\rho(Z)} = \ExpW{\dt \log P(\omega) (Z - \lambda^{*,\mathcal P}_{\theta})}{\xi^*_{\theta}}
     - \sum_{e\in\mathcal E} \lambda^{*,\mathcal E}_{\theta}(e) \dt{g_e(\xi^*_{\theta};P_\theta)}
     -\sum_{i\in\mathcal I} \lambda^{*,\mathcal I}_{\theta}(i) \dt{f_i(\xi^*_{\theta};P_\theta)}.
%\end{split}
\end{equation*}
\end{theorem}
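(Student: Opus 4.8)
The natural approach is the envelope theorem (Danskin-type differentiation) applied to the parametrized convex program \eqref{eq:coherent_as_optimization}, where the parameter $\theta$ enters both through $P_\theta$ and through the constraint functions $g_e(\cdot,P_\theta)$, $f_i(\cdot,P_\theta)$. First I would write $\rho(Z) = L_\theta(\xi^*_\theta,\lambda^{*,\mathcal P}_\theta,\lambda^{*,\mathcal E}_\theta,\lambda^{*,\mathcal I}_\theta)$ at any saddle point: since Assumption~\ref{assume:risk_envelope} guarantees strict feasibility (Slater's condition), strong duality holds, complementary slackness kills the inequality terms, and feasibility of $\xi^*_\theta$ kills the equality and normalization terms, so the Lagrangian value equals $\mathbb E_{\xi^*_\theta}[Z]=\rho(Z)$. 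The plan is then to differentiate the identity $\rho(Z) = L_\theta(\xi^*_\theta,\lambda^{*,\mathcal P}_\theta,\dots)$ with respect to $\theta$, arguing that the terms coming from the $\theta$-dependence of the saddle point itself vanish by first-order optimality, leaving only the terms where $\theta$ appears explicitly (i.e.\ through $P_\theta$ inside $L$).

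\textbf{Key steps.} (i) Establish the saddle-point value identity and strong duality as above. (ii) Invoke an envelope/Danskin theorem for saddle functions: because $(\xi^*_\theta,\lambda^*_\theta)$ is a saddle point, the total derivative of $L_\theta(\xi^*_\theta,\lambda^*_\theta)$ equals the partial derivative $\partial L_\theta / \partial\theta$ evaluated at the saddle point, with $\xi,\lambda$ held fixed. (iii) Compute that explicit partial derivative term by term. The term $\sum_\omega \xi(\omega)P_\theta(\omega)Z(\omega)$ differentiates, via the likelihood-ratio trick $\dt P_\theta(\omega) = P_\theta(\omega)\dt\log P_\theta(\omega)$, to $\mathbb E_{\xi^*_\theta}[\dt\log P(\omega)\,Z]$. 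The normalization term $-\lambda^{*,\mathcal P}_\theta(\sum_\omega \xi(\omega)P_\theta(\omega)-1)$ differentiates to $-\lambda^{*,\mathcal P}_\theta\,\mathbb E_{\xi^*_\theta}[\dt\log P(\omega)]$, and combining these two gives the first claimed term $\mathbb E_{\xi^*_\theta}[\dt\log P(\omega)(Z-\lambda^{*,\mathcal P}_\theta)]$. The constraint terms $-\sum_e\lambda^{*,\mathcal E}_\theta(e)g_e(\xi^*_\theta,P_\theta)$ and $-\sum_i\lambda^{*,\mathcal I}_\theta(i)f_i(\xi^*_\theta,P_\theta)$, where $\theta$ enters only through $P_\theta$, contribute the chain-rule terms $-\sum_e\lambda^{*,\mathcal E}_\theta(e)\dt g_e(\xi^*_\theta;P_\theta)$ and $-\sum_i\lambda^{*,\mathcal I}_\theta(i)\dt f_i(\xi^*_\theta;P_\theta)$; the twice-differentiability and uniform bound $M$ in Assumption~\ref{assume:risk_envelope} together with boundedness of the likelihood ratio ensure these derivatives exist and the interchange of sum and derivative is legitimate.

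\textbf{Main obstacle.} The delicate point is justifying step (ii) rigorously, i.e.\ that the value function $\theta\mapsto\rho(Z)$ is differentiable and that its gradient coincides with the partial gradient of the Lagrangian at a saddle point, even though the saddle point need not be unique and the map $\theta\mapsto(\xi^*_\theta,\lambda^*_\theta)$ need not be differentiable or even single-valued. The standard route is to note that $\rho(Z)=\max_{\xi}\min_{\lambda}L_\theta(\xi,\lambda)$ (or the dual order) is a max/min of a function that is jointly smooth in $(\theta,\xi)$ for fixed $\lambda$ and concave in $\xi$, apply Danskin's theorem to get directional derivatives as the max (resp.\ min) over the solution set of the partial derivative, and then observe that the partial derivative $\partial L_\theta/\partial\theta$ has the \emph{same} value at every saddle point (the $Z$-term depends on $\xi^*$ only through the fixed constraint $\mathbb E_{\xi^*}[\cdot]$ structure, complementary slackness pins down which $f_i$ terms survive, etc.), so the directional derivative is linear in the direction and hence a genuine gradient. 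I would either carry out this Danskin argument explicitly or cite a perturbation-analysis result (e.g.\ from \citet{Shapiro2009}) that gives differentiability of the optimal value of a smoothly parametrized convex program under Slater's condition, with the gradient formula in terms of Lagrange multipliers; the uniform bound $M$ in Assumption~\ref{assume:risk_envelope} is exactly what is needed to make such a perturbation result applicable.
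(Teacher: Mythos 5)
Your proposal is correct and follows essentially the same route as the paper: strong duality via Slater's condition, the envelope theorem for saddle-point problems (the paper cites Theorem~4 of \citet{milgrom2002envelope}, verifying equi-differentiability and Lipschitz continuity of $L_\theta$ in $\theta$ from Assumption~\ref{assume:risk_envelope} to handle the non-uniqueness concern you raise), and then the likelihood-ratio trick applied to the two $P_\theta$-weighted terms to produce $\mathbb{E}_{\xi^*_\theta}[\dt\log P(\omega)(Z-\lambda^{*,\mathcal P}_\theta)]$.
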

%
%\vspace{-10pt}
The proof of this theorem, given in the supplementary material, involves an application of the Envelope theorem~\cite{milgrom2002envelope} and a standard `likelihood-ratio' trick. We now demonstrate the utility of Theorem \ref{thm:static_gradient} with several examples in which we show that it generalizes previously known results, and also enables deriving new useful gradient formulas.
%The full details for deriving these results are in the supplementary material.
% submitted along with the paper.

%%%%%%%%%%%%%%%%%%%%%%%%%%%%%%%%%%%%%%%%%%%%%%%%%%%%%%%%%%%%%%%%%%%%%%%%%%%%%%%%
%%%%%%%%%%%%%%%%%%%%%%%%%%%%%%%%%%%%%%%%%%%%%%%%%%%%%%%%%%%%%%%%%%%%%%%%%%%%%%%%
%%%%%%%%%%%%%%%%%%%%%%%%%%%%%%%%%%%%%%%%%%%%%%%%%%%%%%%%%%%%%%%%%%%%%%%%%%%%%%%%

% Example 1 (analytic) - CVaR (just gradient formula + relation to previous results)
\vspace{-0.1in}
\subsection{Example 1: CVaR}
\vspace{-0.05in}

The CVaR at level $\alpha\in[0,1]$ of a random variable $Z$, denoted by $\cvar(Z;\alpha)$, is a very popular coherent risk measure~\citep{rockafellar2000optimization}, defined as
\begin{equation*}
\cvar(Z;\alpha) \doteq \inf_{t\in\reals} \big\{ t + \alpha^{-1} \Exp{(Z-t)_{+}}\big\}.
\end{equation*}
When $Z$ is continuous, $\cvar(Z;\alpha)$ is well-known to be the mean of the $\alpha$-tail distribution of $Z$, $\Exp{ \left. Z \right| Z > \quan }$, where $\quan$ is a $(1-\alpha)$-quantile of $Z$. Thus, selecting a small $\alpha$ makes CVaR particularly sensitive to rare, but very high costs.

The risk envelope for CVaR is known to be~\citep{Shapiro2009}
%\begin{equation*}
$
\U = \big\{ \xi \pprob: \xi(\omega) \in [0,\alpha^{-1}],\quad \sum_{\omega\in\Omega} \xi(\omega)P_\theta(\omega)=1\big\}.
$
%\end{equation*}
Furthermore, \citet{Shapiro2009} show that the saddle points of \eqref{eq:Lagrangian} satisfy $\xi^*_{\theta}(\omega)=\alpha^{-1}$ when $Z(\omega)>\lambda^{*,\mathcal P}_{\theta}$, and $\xi^*_{\theta}(\omega)=0$ when $Z(\omega)<\lambda^{*,\mathcal P}_{\theta}$, where $\lambda^{*,\mathcal P}_{\theta}$ is any $(1-\alpha)$-quantile of $Z$. Plugging this result into Theorem \ref{thm:static_gradient}, we can easily show that
\begin{equation*}
    \dt\cvar(Z;\alpha) = \Exp{\left. \dt \log P(\omega) (Z - \quan)\right| Z(\omega) > \quan}.
\end{equation*}
This formula was recently proved in~\citet{tamar2015optimizing} for the case of continuous distributions by an explicit calculation of the conditional expectation, and under several additional smoothness assumptions. Here we show that it holds regardless of these assumptions and in the discrete case as well. Our proof is also considerably simpler.

%%%%%%%%%%%%%%%%%%%%%%%%%%%%%%%%%%%%%%%%%%%%%%%%%%%%%%%%%%%%%%%%%%%%%%%%%%%%%%%%
%%%%%%%%%%%%%%%%%%%%%%%%%%%%%%%%%%%%%%%%%%%%%%%%%%%%%%%%%%%%%%%%%%%%%%%%%%%%%%%%
%%%%%%%%%%%%%%%%%%%%%%%%%%%%%%%%%%%%%%%%%%%%%%%%%%%%%%%%%%%%%%%%%%%%%%%%%%%%%%%%

% Example 2 (analytic) - Mean-semideviation (just gradient formula)
\vspace{-0.1in}
\subsection{Example 2: Mean-Semideviation}
\vspace{-0.1in}
The semi-deviation of a random variable $Z$ is defined as $\SD[Z] \doteq \left( \Exp{(Z - \Exp{Z})_{+}^{2}} \right)^{1 / 2}$. The semi-deviation captures the variation of the cost only \emph{above its mean}, and is an appealing alternative to the standard deviation, which does not distinguish between the variability of upside and downside deviations.
For some $\alpha \in[0,1]$, the \emph{mean-semideviation} risk measure is defined as $\msd(Z;\alpha) \doteq \Exp{Z} + \alpha \SD[Z]$, and is a coherent risk measure~\citep{Shapiro2009}. We have the following result:
\begin{proposition}\label{prop:msd_grad}
Under Assumption~\ref{ass:LR_well_behaved}, with $\dt{\Exp{Z}} = \Exp{\dt \log P(\omega) Z}$, we have
\begin{equation*}
%\begin{split}
     \dt{\msd(Z;\alpha)} = \dt{\Exp{Z}} +
     \! \frac{\alpha \Exp{(Z \!-\! \Exp{Z})_{+} \!\left(\dt \log P(\omega) (Z \!-\! \Exp{Z}) \!-\! \dt{\Exp{Z}}\right)}}{\SD(Z)}.
%\end{split}
\end{equation*}
\end{proposition}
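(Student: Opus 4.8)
The plan is to apply Theorem~\ref{thm:static_gradient} directly to the mean-semideviation risk measure, which only requires identifying its risk envelope in the canonical form of Assumption~\ref{assume:risk_envelope} together with the associated saddle-point multipliers. First I would recall (or verify) the known risk-envelope representation of $\msd(Z;\alpha)$. Writing the semideviation via its variational form, one has $\SD[Z] = \max_{\|h\|_2 \le 1,\, h \ge 0}\Exp{h(\omega)\,(Z-\Exp{Z})}$ over the appropriate weighting, so that $\msd(Z;\alpha) = \max_{\xi}\ExpW{Z}{\xi}$ where the feasible $\xi$ take the form $\xi(\omega) = 1 + \alpha\bigl(h(\omega) - \Exp{h}\bigr)$ with $h \ge 0$ and $\Exp{h^2} \le 1$. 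This is exactly of the form~\eqref{eq:U_as_optimization}: the affine equality constraint $\sum_\omega \xi(\omega)P_\theta(\omega) = 1$ is automatic, $h \ge 0$ is the nonnegativity constraint, and $\Exp{h^2} \le 1$ is the single convex inequality $f_1(\xi,P_\theta) \le 0$. Strict feasibility holds via $h \equiv 1/2$ (say). I would then note that the smoothness condition on the constraints in $P_\theta$ is satisfied since the constraints depend on $P_\theta$ only through the expectation operator, which is linear and bounded under Assumption~\ref{ass:LR_well_behaved}.

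Next I would identify the optimal $\xi^*_\theta$ and the Lagrange multipliers at the saddle point. Solving the inner maximization, the optimal $h^*$ aligns with the positive part of the centered cost: $h^*(\omega) \propto (Z(\omega) - \Exp{Z})_+$, normalized so that $\Exp{(h^*)^2} = 1$ when $Z$ is not a.s.\ below its mean, giving $h^*(\omega) = (Z(\omega)-\Exp{Z})_+ / \SD[Z]$ and hence $\xi^*_\theta(\omega) = 1 + \alpha\bigl((Z(\omega)-\Exp{Z})_+ - \SD[Z]\bigr)/\SD[Z]$ after using $\Exp{(Z-\Exp{Z})_+^2}^{1/2} = \SD[Z]$ and the fact that $\Exp{h^*} = \Exp{(Z-\Exp{Z})_+}/\SD[Z]$. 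The multiplier $\lambda^{*,\mathcal P}_\theta$ associated with the normalization constraint is $\Exp{Z}$ (consistent with translation invariance A3), and the multiplier on the convex constraint is $\lambda^{*,\mathcal I}_\theta(1) = \alpha\SD[Z]/2$ from complementary slackness and the stationarity condition in $h$.

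Then I would substitute these into the formula of Theorem~\ref{thm:static_gradient}. The term $\ExpW{\dt\log P(\omega)(Z - \lambda^{*,\mathcal P}_\theta)}{\xi^*_\theta}$ becomes $\ExpW{\dt\log P(\omega)(Z-\Exp{Z})}{\xi^*_\theta}$; expanding $\xi^*_\theta$ gives $\Exp{\dt\log P(\omega)(Z-\Exp{Z})}$ plus $\frac{\alpha}{\SD[Z]}\Exp{\dt\log P(\omega)(Z-\Exp{Z})(Z-\Exp{Z})_+}$ plus a constant multiple of $\Exp{\dt\log P(\omega)(Z-\Exp{Z})}$; the first of these equals $\dt\Exp{Z}$ by the likelihood-ratio identity and the fact that $\Exp{\dt\log P(\omega)\Exp{Z}} = \Exp{Z}\,\dt\Exp{1} = 0$ up to the correction that turns $\Exp{\dt\log P(\omega)(Z-\Exp{Z})}$ into $\dt\Exp{Z}$. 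Meanwhile $\dt f_1(\xi^*_\theta;P_\theta) = \dt\Exp{(h^*)^2}$, which when differentiated through the $P_\theta$-dependence of the expectation yields a likelihood-ratio term; combined with $-\lambda^{*,\mathcal I}_\theta(1)\dt f_1$ and the cross terms, one collects everything into the single fraction in the statement. There are no equality constraints beyond normalization, so the $\mathcal E$-sum is absent. Careful bookkeeping of which $P_\theta$-dependence is held fixed (the constraints treat $\xi$ as the variable and $P_\theta$ as a parameter) is what makes the algebra line up.

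The main obstacle I anticipate is the careful treatment of the $\Exp{Z}$ terms that appear \emph{inside} the constraint functions and inside $h^*$: since $\Exp{Z}$ itself depends on $\theta$, one must be scrupulous about the distinction in Theorem~\ref{thm:static_gradient} between the partial derivative $\dt f_i(\xi^*_\theta;P_\theta)$ (which differentiates only the explicit $P_\theta$ appearing in $f_i$, with $\xi$ frozen) and the total $\theta$-dependence — and verify that the Envelope-theorem bookkeeping already underlying Theorem~\ref{thm:static_gradient} correctly accounts for the implicit $\Exp{Z}$ dependence through the saddle point. A clean way to sidestep some of this is to parametrize the envelope using $h$ rather than $\xi$ and re-derive the gradient formula in those coordinates, or equivalently to write the mean-semideviation's dual directly and apply the Envelope theorem to it; either way, once the multipliers are correctly identified the remaining steps are routine algebra and an application of the likelihood-ratio identity $\Exp{\dt\log P(\omega)\,Y} = \dt\Exp{Y} - \Exp{\dt Y}$ with $Y$ constant in $\theta$ in the relevant places.
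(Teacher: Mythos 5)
Your plan is genuinely different from the paper's proof, and the difference is exactly where the trouble lies. The paper does not invoke Theorem~\ref{thm:static_gradient} at all: it rewrites $\msd(Z;\alpha)=\sup\{\langle 1+\alpha h-\alpha\Exp{h},Z\rangle:\ \Exp{h^2}\le 1,\ h\ge 0\}$, asserts that this constraint set does not depend on $\theta$, and then differentiates only the objective at the frozen optimizer $h^{*}=(Z-\Exp{Z})_{+}/\SD(Z)$ via the plain envelope theorem, finishing with likelihood-ratio manipulations. You instead keep the constraint $f_1(h,P_\theta)=\Exp{h^2}-1\le 0$ explicit, observe (correctly) that it is active at $h^{*}$ and that it \emph{does} depend on $P_\theta$ (it is the $L_2(P_\theta)$-ball, not a fixed Euclidean ball), identify $\lambda^{*,\mathcal I}=\alpha\SD(Z)/2$, and retain the term $-\lambda^{*,\mathcal I}\dt f_1(h^{*};P_\theta)$ prescribed by Theorem~\ref{thm:static_gradient}.

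The gap is your final assertion that these pieces ``collect into the single fraction in the statement'': they do not. Since $\dt f_1(h^{*};P_\theta)=\Exp{\dt\log P(\omega)\,(Z-\Exp{Z})_{+}^{2}}/\SD(Z)^{2}$, your multiplier term equals $-\tfrac{1}{2}$ times the $\tfrac{\alpha}{\SD(Z)}\Exp{\dt\log P(\omega)(Z-\Exp{Z})_{+}^{2}}$ piece produced by the $\xi^{*}$-weighted first term, so it half-cancels rather than disappears. Carried through, your computation gives
\begin{equation*}
\dt\msd(Z;\alpha)=\dt\Exp{Z}+\frac{\alpha}{\SD(Z)}\left[\tfrac12\,\Exp{\dt \log P(\omega)(Z-\Exp{Z})_{+}^{2}}-\dt\Exp{Z}\;\Exp{(Z-\Exp{Z})_{+}}\right],
\end{equation*}
which is also what the chain rule applied directly to $\Exp{Z}+\alpha\big(\Exp{(Z-\Exp{Z})_{+}^{2}}\big)^{1/2}$ yields, and which differs from the displayed proposition by a factor of $\tfrac12$ on the first bracketed term. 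A two-point check makes the discrepancy concrete: with $P_\theta(1)=\theta$, $Z(1)=0$, $Z(2)=1$ one has $\SD(Z)=\theta\sqrt{1-\theta}$, whose derivative at $\theta=1/2$ is $1/(2\sqrt2)$, whereas the proposition's second term evaluates to $(1-2\theta)/\sqrt{1-\theta}=0$ there. So your more careful accounting is the sound one, but it refutes rather than reproduces the stated formula; the paper's proof reaches the stated formula only by the step ``the constraints do not depend on $\theta$,'' which is precisely the contribution your approach keeps. To complete a proof of the proposition as written you would need to show that the multiplier term vanishes, and it does not.
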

This proposition can be used to devise a sampling based estimator for $\dt{\msd(Z;\alpha)}$ by replacing all the expectations with sample averages. The algorithm along with the proof of the proposition are in the supplementary material. In Section \ref{sec:experiment} we provide a numerical illustration of optimization with a mean-semideviation objective.

%%%%%%%%%%%%%%%%%%%%%%%%%%%%%%%%%%%%%%%%%%%%%%%%%%%%%%%%%%%%%%%%%%%%%%%%%%%%%%%%
%%%%%%%%%%%%%%%%%%%%%%%%%%%%%%%%%%%%%%%%%%%%%%%%%%%%%%%%%%%%%%%%%%%%%%%%%%%%%%%%
%%%%%%%%%%%%%%%%%%%%%%%%%%%%%%%%%%%%%%%%%%%%%%%%%%%%%%%%%%%%%%%%%%%%%%%%%%%%%%%%

\vspace{-0.1in}
\subsection{General Gradient Estimation Algorithm}
\vspace{-0.05in}
% General sampling procedure
%   - Give algorithm
%   - Consistency proof (?)
%   - Example - CVaR
%   - Example - Mean-semideviation (?)

In the two previous examples, we obtained a gradient formula by \emph{analytically} calculating the Lagrangian saddle point~\eqref{eq:Lagrangian} and plugging it into the formula of Theorem~\ref{thm:static_gradient}. We now consider a general coherent risk $\rho(Z)$ for which, in contrast to the CVaR and mean-semideviation cases, the Lagrangian saddle-point is not known analytically. \emph{We only assume that we know the structure of the risk-envelope} as given by~\eqref{eq:U_as_optimization}.
We show that in this case, $\dt \rho(Z)$ may be estimated using a \emph{sample average approximation} (SAA;~\citealt{Shapiro2009}) of the formula in Theorem~\ref{thm:static_gradient}.

Assume that we are given $N$ i.i.d.~samples $\omega_i \sim \pprob$, $i=1,\dots,N$, and let $\pemp(\omega) \doteq \frac{1}{N}\sum_{i=1}^N \ind{ \omega_i = \omega }$ denote the corresponding empirical distribution. Also, let the \emph{sample risk envelope} $\mathcal U(\pemp)$ be defined according to Eq.~\ref{eq:U_as_optimization} with $\pprob$ replaced by $\pemp$. Consider the following SAA version of the optimization in Eq.~\ref{eq:coherent_as_optimization}:
\begin{equation}\label{eq:SAA_coherent}
\rho_N(Z) = \max_{\xi:\xi \pemp \in \mathcal U(\pemp)} \sum_{i \in 1,\dots,N} \pemp(\omega_i) \xi(\omega_i) Z(\omega_i).
\end{equation}
Note that~\eqref{eq:SAA_coherent} defines a convex optimization problem with $\mathcal O(N)$ variables and constraints. In the following, we assume that a solution to~\eqref{eq:SAA_coherent} may be computed efficiently using standard convex programming tools such as interior point methods~\cite{boyd2009convex}. Let $\xi^*_{\theta;N}$ denote a solution to~\eqref{eq:SAA_coherent} and $\lambda^{*,\mathcal P}_{\theta;N},\lambda^{*,\mathcal E}_{\theta;N},\lambda^{*,\mathcal I}_{\theta;N}$ denote the corresponding KKT multipliers, which can be obtained from the convex programming algorithm~\cite{boyd2009convex}. We propose the following estimator for the gradient-based on Theorem~\ref{thm:static_gradient}:
\vspace{-0.05in}
\begin{align}\label{eq:SAA_gradient}
  \dtN{\rho(Z)} &= \sum_{i = 1}^{N} \pemp(\omega_i) \xi^*_{\theta;N}(\omega_i) \dt \log P(\omega_i) (Z(\omega_i) - \lambda^{*,\mathcal P}_{\theta;N}) \\
    & - \sum_{e\in\mathcal E} \lambda^{*,\mathcal E}_{\theta;N}(e) \dt{g_e(\xi^*_{\theta;N};P_{\theta;N})}\nonumber
     -\sum_{i\in\mathcal I} \lambda^{*,\mathcal I}_{\theta;N}(i) \dt{f_i(\xi^*_{\theta;N};P_{\theta;N})}.
\end{align}
Thus, our gradient estimation algorithm is a two-step procedure involving \emph{both sampling and convex programming}. In the following, we show that under some conditions on the set $\mathcal U(\pprob)$, $\dtN{\rho(Z)}$ is a consistent estimator of $\dt{\rho(Z)}$. The proof has been reported in the supplementary material.
\begin{proposition}\label{prop:consistent}
Let Assumptions~\ref{assume:risk_envelope} and~\ref{ass:LR_well_behaved} hold. Suppose there exists a compact set $C = C_\xi \times C_\lambda$ such that:
(I) The set of Lagrangian saddle points $\spset \subset C$ is non-empty and bounded.
(II) The functions $f_e(\xi,P_\theta)$ for all $e\in\mathcal E$ and $f_i(\xi,P_\theta)$ for all $i\in\mathcal I$ are finite-valued and continuous (in $\xi$) on $C_\xi$.
(III) For $N$ large enough, the set $\saaspset$ is non-empty and $\saaspset \subset C$ w.p.~1.
Further assume that:
(IV) If $\xi_N\pemp \in \mathcal U(\pemp)$ and $\xi_N$ converges w.p.~1 to a point $\xi$, then $\xi\pprob\in\mathcal U(\pprob)$.
We then have that $\lim_{N\to \infty} \rho_N(Z) = \rho(Z) $ and $\lim_{N\to \infty} \dtN{\rho(Z)} = \dt{\rho(Z)} $ w.p.~1.
\end{proposition}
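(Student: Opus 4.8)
The plan is to cast the result as an instance of sample-average-approximation (SAA) consistency for the parametric convex program~\eqref{eq:coherent_as_optimization}, following the stability theory of~\citet{Shapiro2009}. Fix $\theta$. First I would record the elementary but crucial fact that $\pemp \to \pprob$ pointwise on the finite set $\Omega$ w.p.~1, by the strong law of large numbers applied to the indicators $\ind{\omega_i = \omega}$. Since $Z$ is bounded, $\Omega$ is finite, and, by Assumption~\ref{assume:risk_envelope}, each $g_e(\xi,\cdot)$ and $f_i(\xi,\cdot)$ is continuous (indeed twice differentiable) in the distribution argument, this yields that the SAA Lagrangian associated with~\eqref{eq:SAA_coherent} converges to $L_\theta$ in~\eqref{eq:Lagrangian} \emph{uniformly} on the compact set $C = C_\xi \times C_\lambda$, and likewise that the feasible-set mapping $\pemp \mapsto \mathcal U(\pemp)$ is outer semicontinuous at $\pprob$ (this is precisely condition (IV)) and, using the strictly feasible point $\overline\xi$ of Assumption~\ref{assume:risk_envelope}, inner semicontinuous at $\pprob$. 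Together with conditions (I)--(III), a standard epi-convergence / set-stability argument gives $\liminf_N \rho_N(Z) \ge \rho(Z)$ (approximate the true optimal $\xi^*$ by SAA-feasible points via inner semicontinuity) and $\limsup_N \rho_N(Z) \le \rho(Z)$ (extract a limit of the bounded SAA optimizers, which is true-feasible by (IV), and pass to the limit in the linear SAA objective), hence $\rho_N(Z) \to \rho(Z)$ w.p.~1, which is the first claim.

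For the gradient I would argue subsequentially. By condition (III), for $N$ large the SAA saddle-point set $\saaspset$ is non-empty and contained in the compact set $C$, so any sequence of SAA saddle points $(\xi^*_{\theta;N},\lambda^*_{\theta;N})$ (writing $\lambda^*_{\theta;N}$ for the triple $\lambda^{*,\mathcal P}_{\theta;N},\lambda^{*,\mathcal E}_{\theta;N},\lambda^{*,\mathcal I}_{\theta;N}$) has a convergent subsequence with limit $(\overline\xi,\overline\lambda)\in C$. The key step is to show $(\overline\xi,\overline\lambda)\in\spset$: I would pass to the limit in the saddle-point inequalities for the SAA Lagrangian, using the uniform convergence established above, the continuity of the constraint functions in both arguments, and condition (IV) to guarantee that $\overline\xi$ stays feasible for $\mathcal U(\pprob)$, so that $\overline\xi$ is primal-optimal and $\overline\lambda$ are valid KKT multipliers for~\eqref{eq:coherent_as_optimization}. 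Then, since by Assumption~\ref{ass:LR_well_behaved} the likelihood ratios $\dt \log P(\omega)$ are bounded on the finite set $\Omega$ and the derivatives $\dt g_e(\xi;\cdot)$, $\dt f_i(\xi;\cdot)$ are continuous, the SAA estimator~\eqref{eq:SAA_gradient} evaluated along the subsequence converges to the expression of Theorem~\ref{thm:static_gradient} evaluated at $(\overline\xi,\overline\lambda)$ --- which, by that theorem (the formula yields the same value $\dt\rho(Z)$ at \emph{every} saddle point), equals $\dt \rho(Z)$. As every subsequence of $\dtN\rho(Z)$ thus has a further subsequence converging to the common limit $\dt\rho(Z)$, the whole sequence converges to $\dt\rho(Z)$ w.p.~1.

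I expect the main obstacle to be the second step: showing that limit points of the SAA saddle points are genuine saddle points of the true Lagrangian, i.e.\ the joint convergence of both the worst-case density $\xi^*_{\theta;N}$ \emph{and} the multipliers $\lambda^*_{\theta;N}$. The difficulty is that the feasible region itself is perturbed ($\mathcal U(\pemp)$ versus $\mathcal U(\pprob)$), so one cannot merely invoke continuity of an argmax over a fixed set; one must control the set convergence, which is exactly where conditions (I)--(IV) and the Slater condition in Assumption~\ref{assume:risk_envelope} enter --- (IV) prevents the feasible sets from collapsing inward in the limit, the boundedness in (I)--(III) supplies the compactness needed to extract limits of the multipliers, and strict feasibility keeps the dual problem well-posed so the KKT multipliers cannot escape to infinity. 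Once this set-stability is in place, the remainder is continuity of the relevant functions and the likelihood-ratio bound of Assumption~\ref{ass:LR_well_behaved}.
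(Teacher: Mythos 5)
Your proposal is correct and follows essentially the same route as the paper: the first claim via SAA/epi-convergence stability of the perturbed feasible set (the paper invokes Theorem~5.5 of~\citealt{Shapiro2009} for exactly the two-sided argument you sketch), and the second via uniform convergence of the SAA Lagrangian on the compact set $C$, extraction of convergent (sub)sequences of saddle points whose limits are shown to be true saddle points using (I)--(IV), and finally boundedness of the likelihood ratios and constraint derivatives to pass to the limit in~\eqref{eq:SAA_gradient}. Your observation that Theorem~\ref{thm:static_gradient} gives the \emph{same} gradient value at every saddle point, so that the subsequence argument closes, is the same fact the paper relies on implicitly in its Part~3.
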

The set of assumptions for Proposition~\ref{prop:consistent} is large, but rather mild. Note that (I) is implied by the Slater condition of Assumption~\ref{assume:risk_envelope}. For satisfying (III), we need that the risk be well-defined for every empirical distribution, which is a natural requirement. Since $P_{\theta;N}$ always converges to $P_{\theta}$ uniformly on $\Omega$, (IV) essentially requires smoothness of the constraints. We remark that in particular, constraints (I) to (IV) are satisfied for the popular CVaR, mean-semideviation, and spectral risk measures.

% Explain that this result motivates our approach for static+dynamic : sampling (or analytic calculation) to solve the optimization and find u, and then sampling according to u to calculate the gradient.
To summarize this section, we have seen that by exploiting the special structure of coherent risk measures in Theorem~\ref{thm:rep} and by the envelope-theorem style result of Theorem~\ref{thm:static_gradient}, we were able to derive sampling-based, likelihood-ratio style algorithms for estimating the policy gradient $\dt{\rho(Z)}$ of coherent static risk measures. The gradient estimation algorithms developed here for static risk measures will be used as a sub-routine in our subsequent treatment of dynamic risk measures.

%%%%%%%%%%%%%%%%%%%%%%%%%%%%%%%%%%%%%%%%%%%%%%%%%%%%%%%%%%%%%%%%%%%%%%%%%%%%%%%%
%%%%%%%%%%%%%%%%%%%%%%%%%%%%%%%%%%%%%%%%%%%%%%%%%%%%%%%%%%%%%%%%%%%%%%%%%%%%%%%%
%%%%%%%%%%%%%%%%%%%%%%%%%%%%%%%%%%%%%%%%%%%%%%%%%%%%%%%%%%%%%%%%%%%%%%%%%%%%%%%%
%%%%%%%%%%%%%%%%%%%%%%%%%%%%%%%%%%%%%%%%%%%%%%%%%%%%%%%%%%%%%%%%%%%%%%%%%%%%%%%%
%%%%%%%%%%%%%%%%%%%%%%%%%%%%%%%%%%%%%%%%%%%%%%%%%%%%%%%%%%%%%%%%%%%%%%%%%%%%%%%%

%\vspace{-0.1in}
\section{Gradient Formula for Dynamic Risk}
\label{sec:dynamic}
%\vspace{-0.1in}

In this section, we derive a new formula for the gradient of the Markov coherent dynamic risk measure, $\dt \rho_\infty(\mdp)$. Our approach is based on combining the static gradient formula of Theorem~\ref{thm:static_gradient}, with a dynamic-programming decomposition of $\rho_\infty(\mdp)$.

The risk-sensitive \emph{value-function} for an MDP $\mdp$ under the policy $\theta$ is defined as $V_\theta(x)=\rho_\infty(\mdp | x_0 = x)$, where with a slight abuse of notation, $\rho_\infty(\mdp | x_0 = x)$ denotes the Markov-coherent dynamic risk in~\eqref{eq:dynamic_risk_def} when the initial state $x_0$ is $x$. It is shown in~\citep{ruszczynski2010risk} that due to the structure of the Markov dynamic risk $\rho_\infty(\mdp)$, the value function is the unique solution to the \emph{risk-sensitive Bellman equation}
\begin{equation}\label{eq:T}
V_\theta(x) = C(x) + \gamma\max_{\xi P_\theta(\cdot |x)\in \U(x,P_\theta(\cdot |x))}\mathbb E_{\xi}[V_\theta(x')],
\end{equation}
%
%where $C_\theta(x)=\sum_{a\in\mathcal A}C(x,a)\mu_\theta(a|x)$ \textcolor{red}{(is this correct??)} is the stage-wise cost function induced by policy $\mu_\theta$.
where the expectation is taken over the next state transition. Note that by definition, we have $\rho_\infty(\mdp) = V_\theta(x_0)$, and thus, $\dt \rho_\infty(\mdp) = \dt V_\theta(x_0)$.

We now develop a formula for $\dt V_\theta(x)$; this formula extends the well-known ``policy gradient theorem"~\cite{sutton_policy_2000,konda2000actor}, developed for the expected return, to Markov-coherent dynamic risk measures. We make a standard assumption, analogous to Assumption \ref{ass:LR_well_behaved} of the static case.
\begin{assumption}\label{assume:ll_ratio_bounded}
The likelihood ratio $\nabla_\theta\log\mu_\theta(a|x)$ is well-defined and bounded for all $x\in\mathcal X$ and $a\in\mathcal A$.
\end{assumption}
\vspace{-0.1in}
For each state $x\in\mathcal X$, let $(\xi^*_{\theta,x},\lambda^{*,\mathcal P}_{\theta,x},\lambda^{*,\mathcal E}_{\theta,x},\lambda^{*,\mathcal I}_{\theta,x})$ denote a saddle point of~\eqref{eq:Lagrangian}, corresponding to the state $x$, with $P_\theta(\cdot |x)$ replacing $P_\theta$ in \eqref{eq:Lagrangian} and $V_\theta$ replacing $Z$. The next theorem presents a formula for $\dt V_\theta(x)$; the proof is in the supplementary material.

\begin{theorem}\label{thm:dynamic_risk}
Under Assumptions~\ref{assume:risk_envelope} and~\ref{assume:ll_ratio_bounded}, we have
\begin{equation*}
\nabla V_\theta(x) = \mathbb E_{\xi^*_\theta} \left[\left.\sum_{t=0}^{\infty}\gamma^t\nabla_\theta\log\mu_\theta(a_t|x_t)h_\theta(x_t,a_t)\right|  x_0=x\right],
\end{equation*}
where $\mathbb E_{\xi^*_\theta}[\cdot]$ denotes the expectation w.r.t.~trajectories generated by the Markov chain with transition probabilities $P_\theta(\cdot|x)\xi_{\theta,x}^*(\cdot)$, and the stage-wise cost function $h_\theta(x,a)$ is defined as
\begin{small}
\begin{equation*}
h_\theta(x,a) \!=\! C(x) + \!\sum_{x'  \in \mathcal X} \!\!P(x'|x,a)\xi^*_{\theta,x}(x')\!\!\left[\gamma V_\theta(x')\!-\!{\lambda}^{*,\mathcal P}_{\theta,x}
\!-\!\sum_{i\in\mathcal I} \!{\lambda}^{*,\mathcal I}_{\theta,x}(i)\frac{ d f_i(\xi^*_{\theta,x},p)}{d p(x')} \!-\! \sum_{e\in\mathcal E}\!{\lambda}^{*,\mathcal E}_{\theta,x}(e) \frac{ d g_e(\xi^*_{\theta,x},p)}{d p(x')}\right]\!\!.
\label{eq:h}
\end{equation*}
\end{small}
\end{theorem}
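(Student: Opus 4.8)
The plan is to combine the static gradient formula of Theorem \ref{thm:static_gradient} with the fixed-point structure of the risk-sensitive Bellman equation \eqref{eq:T}, and then unroll the resulting recursion. First I would fix a state $x$ and apply Theorem \ref{thm:static_gradient} to the inner optimization in \eqref{eq:T}, treating $V_\theta$ as the ``random variable'' $Z$ and $P_\theta(\cdot|x)$ as the underlying distribution. There is one subtlety: in \eqref{eq:T} the quantity $V_\theta(x')$ being averaged \emph{itself} depends on $\theta$, so when differentiating $V_\theta(x) = C(x) + \gamma \max_{\xi}\mathbb E_\xi[V_\theta(x')]$ I must account both for the explicit $\theta$-dependence of the transition law / constraints (handled by the Envelope theorem exactly as in Theorem \ref{thm:static_gradient}) and for the implicit dependence through $V_\theta(x')$. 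The Envelope theorem kills the derivative with respect to the maximizing variable $\xi$, so the implicit term simply contributes $\gamma\sum_{x'} P_\theta(x'|x)\xi^*_{\theta,x}(x')\,\nabla V_\theta(x')$. Writing out the explicit terms using the likelihood-ratio identity $\nabla_\theta P_\theta(x'|x) = P_\theta(x'|x)\sum_a \mu_\theta(a|x)P(x'|x,a)\nabla_\theta\log\mu_\theta(a|x) / P_\theta(x'|x)$ and the constraint-derivative terms $d f_i/dp(x')$, $d g_e/dp(x')$, I obtain a recursion of the form
\begin{equation*}
\nabla V_\theta(x) = r_\theta(x) + \gamma\sum_{x'}P_\theta(x'|x)\xi^*_{\theta,x}(x')\,\nabla V_\theta(x'),
\end{equation*}
where $r_\theta(x)$ collects the ``stage-wise'' terms.

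The second step is to identify $r_\theta(x)$ precisely and absorb the state-transition pieces into it so that it matches $h_\theta$. Here I would be careful with the bookkeeping: the $\nabla_\theta\log\mu_\theta(a|x)$ factors arising from differentiating $P_\theta(\cdot|x)$ must be grouped with the correct $x'$-sums, and the terms $-\lambda^{*,\mathcal P}_{\theta,x}$, $-\sum_i\lambda^{*,\mathcal I}_{\theta,x}(i)\,df_i/dp(x')$, $-\sum_e\lambda^{*,\mathcal E}_{\theta,x}(e)\,dg_e/dp(x')$ appearing in Theorem \ref{thm:static_gradient} must be multiplied by $P(x'|x,a)\xi^*_{\theta,x}(x')$ and summed so that, together with $C(x)$ and $\gamma V_\theta(x')$, they assemble into exactly the bracketed expression defining $h_\theta(x,a)$. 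The outcome should be
\begin{equation*}
\nabla V_\theta(x) = \mathbb E_{a\sim\mu_\theta(\cdot|x),\,x'\sim P(\cdot|x,a)\xi^*_{\theta,x}(\cdot)}\!\left[\nabla_\theta\log\mu_\theta(a|x)\,h_\theta(x,a) + \gamma\,\nabla V_\theta(x')\right].
\end{equation*}

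The third step is to unroll this recursion. Since $\gamma<1$, the costs are bounded, and Assumption \ref{assume:ll_ratio_bounded} bounds the likelihood ratios (and Assumption \ref{assume:risk_envelope} bounds the constraint derivatives by $M$ and keeps the multipliers bounded, so $h_\theta$ is bounded), iterating the recursion gives an absolutely convergent series; I would justify the interchange of limit and expectation by dominated convergence using the geometric bound $\gamma^t \cdot \text{const}$. Telescoping yields
\begin{equation*}
\nabla V_\theta(x) = \mathbb E_{\xi^*_\theta}\!\left[\left.\sum_{t=0}^\infty \gamma^t\,\nabla_\theta\log\mu_\theta(a_t|x_t)\,h_\theta(x_t,a_t)\right| x_0=x\right],
\end{equation*}
where the expectation is over trajectories of the Markov chain whose transition kernel at state $x$ is the reweighted $P_\theta(x'|x)\xi^*_{\theta,x}(x')$ (equivalently, $a_t\sim\mu_\theta(\cdot|x_t)$ and $x_{t+1}\sim P(\cdot|x_t,a_t)\xi^*_{\theta,x_t}(\cdot)$). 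Finally, setting $x=x_0$ and using $\rho_\infty(\mdp) = V_\theta(x_0)$ gives the claimed formula.

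The main obstacle I anticipate is the bookkeeping in the first two steps — correctly propagating the implicit $\theta$-dependence of $V_\theta(x')$ through the $\max$ while invoking the Envelope theorem, and then reorganizing the many stage-wise terms (likelihood-ratio factors, the $\lambda^{*,\mathcal P}$ term, and the constraint-derivative terms weighted by $P(x'|x,a)\xi^*_{\theta,x}(x')$) into the single compact cost $h_\theta(x,a)$. A secondary technical point is verifying that the saddle point $(\xi^*_{\theta,x},\lambda^*_{\theta,x})$ can be chosen measurably/uniformly enough in $x$ that the recursion and its unrolling are well-defined; this is immediate in the finite state-space setting assumed here.
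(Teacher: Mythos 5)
Your plan matches the paper's proof essentially step for step: apply the envelope theorem (via the static gradient formula) to the risk-sensitive Bellman equation, keep the implicit contribution $\gamma\sum_{x'}P_\theta(x'|x)\xi^*_{\theta,x}(x')\nabla_\theta V_\theta(x')$ alongside the explicit stage-wise term $\sum_a\mu_\theta(a|x)\nabla_\theta\log\mu_\theta(a|x)h_\theta(x,a)$, and then unroll the recursion, killing the tail with boundedness of $\nabla_\theta V_\theta$ and $\gamma<1$ via bounded convergence. The bookkeeping you flag is exactly where the paper's work lies, and your grouping of the likelihood-ratio, $\lambda^{*,\mathcal P}$, and constraint-derivative terms into $h_\theta(x,a)$ is correct.
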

Theorem \ref{thm:dynamic_risk} may be used to develop an \emph{actor-critic} style~\cite{sutton_policy_2000,konda2000actor} sampling-based algorithm for solving the DRP problem \eqref{eq:DRP_problem}, composed of two interleaved procedures:

\textbf{Critic:} For a given policy $\theta$, calculate the risk-sensitive value function $V_\theta$, and \\
\textbf{Actor:} Using the critic's $V_\theta$ and Theorem \ref{thm:dynamic_risk}, estimate $\dt \rho_\infty(\mdp)$ and update $\theta$.

Space limitation restricts us from specifying the full details of our actor-critic algorithm and its analysis. In the following, we highlight only the key ideas and results. For the full details, we refer the reader to the full paper version, provided in the supplementary material.

For the critic, the main challenge is calculating the value function when the state space $\mathcal X$ is large and dynamic programming cannot be applied due to the `curse of dimensionality'. To overcome this, we exploit the fact that $V_\theta$ is equivalent to the value function in a robust MDP~\cite{osogami2012robustness} and modify a recent algorithm in~\citet{tamar2014robust} to estimate it using function approximation.

For the actor, the main challenge is that in order to estimate the gradient using Thm.~\ref{thm:dynamic_risk}, we need to sample from an MDP with $\xi_{\theta}^*$-weighted transitions. Also, $h_\theta(x,a)$ involves an expectation for each $s$ and $a$. Therefore, we propose a \emph{two-phase sampling procedure} to estimate $\nabla V_\theta$ in which we first use the critic's estimate of $V_\theta$ to derive $\xi_{\theta}^*$, and sample a trajectory from an MDP with $\xi_{\theta}^*$-weighted transitions. For each state in the trajectory, we then sample several next states to estimate $h_\theta(x,a)$.

The convergence analysis of the actor-critic algorithm and the gradient error incurred from function approximation of $V_\theta$ are reported in the supplementary material.

%%%%%%%%%%%%%%%%%%%%%%%%%%%%%%%%%%%%%%%%%%%%%%%%%%%%%%%%%%%%%%%%%%%%%%%%%%%%%%%%%
%%%%%%%%%%%%%%%%%%%%%%%%%%%%%%%%%%%%%%%%%%%%%%%%%%%%%%%%%%%%%%%%%%%%%%%%%%%%%%%%%
%%%%%%%%%%%%%%%%%%%%%%%%%%%%%%%%%%%%%%%%%%%%%%%%%%%%%%%%%%%%%%%%%%%%%%%%%%%%%%%%%
%%%%%%%%%%%%%%%%%%%%%%%%%%%%%%%%%%%%%%%%%%%%%%%%%%%%%%%%%%%%%%%%%%%%%%%%%%%%%%%%%
%%%%%%%%%%%%%%%%%%%%%%%%%%%%%%%%%%%%%%%%%%%%%%%%%%%%%%%%%%%%%%%%%%%%%%%%%%%%%%%%%

%\vspace{-0.12in}
\section{Numerical Illustration}\label{sec:experiment}
\vspace{-0.1in}

In this section, we illustrate our approach with a numerical example. The purpose of this illustration is to emphasize the importance of \emph{flexibility} in designing risk criteria for selecting an \emph{appropriate} risk-measure -- such that suits both the user's risk preference \emph{and} the problem-specific properties.

We consider a trading agent that can invest in one of three assets (see Figure~\ref{fig:1} for their distributions). The returns of the first two assets, $A1$ and $A2$, are normally distributed: $A1\sim \mathcal N (1,1)$ and $A2\sim \mathcal N (4,6)$. The return of the third asset $A3$ has a Pareto distribution: $f(z) = \frac{\alpha}{z^{\alpha+1}}~\forall z>1$, with $\alpha = 1.5$. The mean of the return from $A3$ is 3 and its variance is infinite; such heavy-tailed distributions are widely used in financial modeling \cite{rachev2000stable}. The agent selects an action randomly, with probability $P(A_i)\propto \exp (\theta_i)$, where $\theta\in \mathbb R^3$ is the policy parameter. We trained three different policies $\pi_1$, $\pi_2$, and $\pi_3$. Policy $\pi_1$ is risk-neutral, i.e.,~$\max_\theta \Exp{Z}$, and it was trained using standard policy gradient \cite{MarTsi98}. Policy $\pi_2$ is risk-averse and had a mean-semideviation objective $\max_\theta \Exp{Z} - \SD[Z]$, and was trained using the algorithm in Section~\ref{sec:static}. Policy $\pi_3$ is also risk-averse, with a mean-standard-deviation objective, as proposed in~\cite{tamar2012policy,prashanth2013actor}, $\max_\theta \Exp{Z} - \sqrt{\textrm{Var}[Z]}$, and was trained using the algorithm of~\cite{tamar2012policy}. For each of these policies, Figure~\ref{fig:1} shows the probability of selecting each asset vs.~training iterations. Although $A2$ has the highest mean return, the risk-averse policy $\pi_2$ chooses $A3$, since it has a lower downside, as expected. However, because of the heavy upper-tail of $A3$, policy $\pi_3$ opted to choose $A1$ instead. This is counter-intuitive as a rational investor should not avert high returns. In fact, in this case $A3$ stochastically dominates $A1$~\cite{hadar1969rules}.

\begin{figure}
\centering
\includegraphics[width=\textwidth]{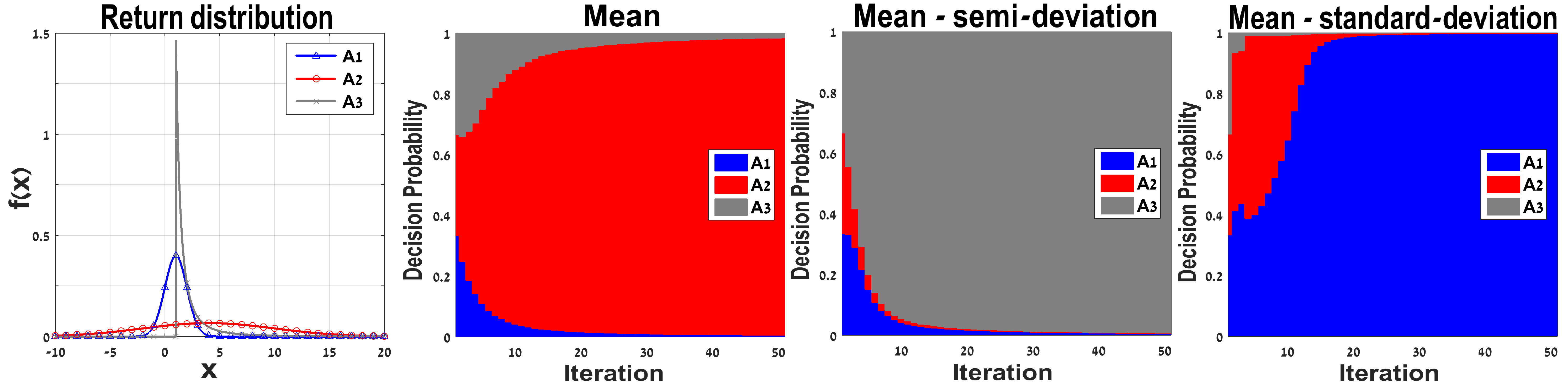}
  \caption{Numerical illustration - selection between 3 assets. A: Probability density of asset return. B,C,D: Bar plots of the probability of selecting each asset vs.~training iterations, for policies $\pi_1$, $\pi_2$, and $\pi_3$, respectively. At each iteration, 10,000 samples were used for gradient estimation.}\label{fig:1}
  \vspace{-0.1in}
\end{figure}

%%%%%%%%%%%%%%%%%%%%%%%%%%%%%%%%%%%%%%%%%%%%%%%%%%%%%%%%%%%%%%%%%%%%%%%%%%%%%%%%
%%%%%%%%%%%%%%%%%%%%%%%%%%%%%%%%%%%%%%%%%%%%%%%%%%%%%%%%%%%%%%%%%%%%%%%%%%%%%%%%
%%%%%%%%%%%%%%%%%%%%%%%%%%%%%%%%%%%%%%%%%%%%%%%%%%%%%%%%%%%%%%%%%%%%%%%%%%%%%%%%
%%%%%%%%%%%%%%%%%%%%%%%%%%%%%%%%%%%%%%%%%%%%%%%%%%%%%%%%%%%%%%%%%%%%%%%%%%%%%%%%
%%%%%%%%%%%%%%%%%%%%%%%%%%%%%%%%%%%%%%%%%%%%%%%%%%%%%%%%%%%%%%%%%%%%%%%%%%%%%%%%

\vspace{-0.05in}
\section{Conclusion}
\vspace{-0.05in}

We presented algorithms for estimating the gradient of both static and dynamic coherent risk measures using two new policy gradient style formulas that combine sampling with convex programming. Thereby, our approach extends risk-sensitive RL to the whole class of coherent risk measures, and generalizes several recent studies that focused on specific risk measures.

On the technical side, an important future direction is to improve the convergence rate of gradient estimates using importance sampling methods. This is especially important for risk criteria that are sensitive to rare events, such as the CVaR \cite{bardou2009computing}.

From a more conceptual point of view, the coherent-risk framework explored in this work provides the decision maker  with
\emph{flexibility} in designing risk preference. As our numerical example shows, such flexibility is important for selecting appropriate \emph{problem-specific} risk measures for managing the cost variability. However, we believe that our approach has much more potential than that.

In almost every real-world application, uncertainty emanates from  stochastic dynamics, but also, and perhaps more importantly, from modeling errors (model uncertainty). A prudent policy should protect against \emph{both} types of uncertainties. The representation duality of coherent-risk (Theorem \ref{thm:rep}), naturally relates the risk to model uncertainty. In \cite{osogami2012robustness}, a similar connection was made between model-uncertainty in MDPs and dynamic Markov coherent risk. We believe that by carefully shaping the risk-criterion, the decision maker may be able to take uncertainty into account in a  \emph{broad} sense.
Designing a principled procedure for such \emph{risk-shaping} is not trivial, and is beyond the scope of this paper. However, we believe that there is much potential to risk shaping as it may be the key for handling model misspecification in dynamic decision making.

\begin{small}
\bibliography{CoherentRiskArXiv15}
\bibliographystyle{plain}
\end{small}
%%%%%%%%%%%%%%%%%%%%%%%%%%%%%%%%%%%%%%%%%%%%%%%%%%%%%%%%%%%%%%%%%%%%%%%%%%%%%%%%
%%%%%%%%%%%%%%%%%%%%%%%%%%%%%%%%%%%%%%%%%%%%%%%%%%%%%%%%%%%%%%%%%%%%%%%%%%%%%%%%
%%%%%%%%%%%%%%%%%%%%%%%%%%%%%%%%%%%%%%%%%%%%%%%%%%%%%%%%%%%%%%%%%%%%%%%%%%%%%%%%
%%%%%%%%%%%%%%%%%%%%%%%%%%%%%%%%%%%%%%%%%%%%%%%%%%%%%%%%%%%%%%%%%%%%%%%%%%%%%%%%
%%%%%%%%%%%%%%%%%%%%%%%%%%%%%%%%%%%%%%%%%%%%%%%%%%%%%%%%%%%%%%%%%%%%%%%%%%%%%%%%
%%%%%%%%%%%%%%%%%%%%%%%%%%%%%%%%%%%%%%%%%%%%%%%%%%%%%%%%%%%%%%%%%%%%%%%%%%%%%%%%
%%%%%%%%%%%%%%%%%%%%%%%%%%%%%%%%%%%%%%%%%%%%%%%%%%%%%%%%%%%%%%%%%%%%%%%%%%%%%%%%
%%%%%%%%%%%%%%%%%%%%%%%%%%%%%%%%%%%%%%%%%%%%%%%%%%%%%%%%%%%%%%%%%%%%%%%%%%%%%%%%
%%%%%%%%%%%%%%%%%%%%%%%%%%%%%%%%%%%%%%%%%%%%%%%%%%%%%%%%%%%%%%%%%%%%%%%%%%%%%%%%
%%%%%%%%%%%%%%%%%%%%%%%%%%%%%%%%%%%%%%%%%%%%%%%%%%%%%%%%%%%%%%%%%%%%%%%%%%%%%%%%
\newpage
\appendix
\onecolumn

%%%%%%%%%%%%%%%%%%%%%%%%%%%%%%%%%%%%%%%%%%%%%%%%%%%%%%%%%%%%%%%%%%%%%%%%%%%%%%%%
%%%%%%%%%%%%%%%%%%%%%%%%%%%%%%%%%%%%%%%%%%%%%%%%%%%%%%%%%%%%%%%%%%%%%%%%%%%%%%%%
%%%%%%%%%%%%%%%%%%%%%%%%%%%%%%%%%%%%%%%%%%%%%%%%%%%%%%%%%%%%%%%%%%%%%%%%%%%%%%%%
%%%%%%%%%%%%%%%%%%%%%%%%%%%%%%%%%%%%%%%%%%%%%%%%%%%%%%%%%%%%%%%%%%%%%%%%%%%%%%%%
%%%%%%%%%%%%%%%%%%%%%%%%%%%%%%%%%%%%%%%%%%%%%%%%%%%%%%%%%%%%%%%%%%%%%%%%%%%%%%%%
\section{Proof of Theorem \ref{thm:static_gradient}}

First note from Assumption~\ref{assume:risk_envelope} that
\begin{description}
\item [(i)] Slater's condition holds in the primal optimization problem~\eqref{eq:coherent_as_optimization},
\item [(ii)] $L_{\theta}(\xi,\lambda^{\mathcal P},\lambda^{\mathcal E},\lambda^{\mathcal I})$ is convex in $\xi$ and concave in $(\lambda^{\mathcal P},\lambda^{\mathcal E},\lambda^{\mathcal I})$.
\end{description}
Thus by the duality result in convex optimization \cite{boyd2009convex}, the above conditions imply strong duality and we have $\rho(Z) = \max_{\xi\geq 0}\min_{\lambda^{\mathcal P},\lambda^{\mathcal I}\geq 0,\lambda^{\mathcal E}} L_{\theta}(\xi,\lambda^{\mathcal P},\lambda^{\mathcal E},\lambda^{\mathcal I})= \min_{\lambda^{\mathcal P},\lambda^{\mathcal I}\geq 0,\lambda^{\mathcal E}}\max_{\xi\geq 0} L_{\theta}(\xi,\lambda^{\mathcal P},\lambda^{\mathcal E},\lambda^{\mathcal I})$. From Assumption~\ref{assume:risk_envelope}, one can also see that the family of functions $\{L_{\theta}(\xi,\lambda^{\mathcal P},\lambda^{\mathcal E},\lambda^{\mathcal I})\}_{(\xi,\lambda^{\mathcal P},\lambda^{\mathcal E},\lambda^{\mathcal I})\in\reals^{|\Omega|}\times\reals\times\reals^{|\mathcal E|}\times\reals^{|\mathcal I|}}$ is equi-differentiable in $\theta$, $ L_{\theta}(\xi,\lambda^{\mathcal P},\lambda^{\mathcal E},\lambda^{\mathcal I})$ is Lipschitz, as a result, an absolutely continuous function in $\theta$, and thus, $\nabla_\theta L_{\theta}(\xi,\lambda^{\mathcal P},\lambda^{\mathcal E},\lambda^{\mathcal I})$ is continuous and bounded at each $(\xi,\lambda^{\mathcal P},\lambda^{\mathcal E},\lambda^{\mathcal I})$. Then for every selection of saddle point $(\xi^*_{\theta},\lambda^{*,\mathcal P}_{\theta},\lambda^{*,\mathcal E}_{\theta},\lambda^{*,\mathcal I}_{\theta}) \in \spset$ of~\eqref{eq:Lagrangian}, using the Envelop theorem for saddle-point problems (see Theorem~4~of \citealt{milgrom2002envelope}), we have
\begin{equation}\label{eq:envelope_theorem}
\nabla_\theta\max_{\xi\geq 0}\;\min_{\lambda^{\mathcal P},\lambda^{\mathcal I}\geq 0,\lambda^{\mathcal E}} L_{\theta}(\xi,\lambda^{\mathcal P},\lambda^{\mathcal E},\lambda^{\mathcal I}) = \nabla_\theta L_{\theta}(\xi,\lambda^{\mathcal P},\lambda^{\mathcal E},\lambda^{\mathcal I})\!\!\mid_{(\xi^*_{\theta},\lambda^{*,\mathcal P}_{\theta},\lambda^{*,\mathcal E}_{\theta},\lambda^{*,\mathcal I}_{\theta})}.
\end{equation}
The result follows by writing the gradient in~\eqref{eq:envelope_theorem} explicitly, and using the likelihood-ratio trick:
\begin{equation*}
   \sum_{\omega \in \Omega} \!\! \xi(\omega) \dt P_\theta(\omega) Z(\omega)\!-\!\lambda^{\mathcal P}\sum_{\omega \in \Omega}\xi(\omega)\dt P_\theta(\omega) = \sum_{\omega \in \Omega} \!\! \xi(\omega) P(\omega) \dt \log P(\omega) \left(Z(\omega)\!-\!\lambda^{\mathcal P}\right),
\end{equation*}
where the last equality is justified by Assumption~\ref{ass:LR_well_behaved}.

%%%%%%%%%%%%%%%%%%%%%%%%%%%%%%%%%%%%%%%%%%%%%%%%%%%%%%%%%%%%%%%%%%%%%%%%%%%%%%%%
%%%%%%%%%%%%%%%%%%%%%%%%%%%%%%%%%%%%%%%%%%%%%%%%%%%%%%%%%%%%%%%%%%%%%%%%%%%%%%%%
%%%%%%%%%%%%%%%%%%%%%%%%%%%%%%%%%%%%%%%%%%%%%%%%%%%%%%%%%%%%%%%%%%%%%%%%%%%%%%%%
%%%%%%%%%%%%%%%%%%%%%%%%%%%%%%%%%%%%%%%%%%%%%%%%%%%%%%%%%%%%%%%%%%%%%%%%%%%%%%%%
%%%%%%%%%%%%%%%%%%%%%%%%%%%%%%%%%%%%%%%%%%%%%%%%%%%%%%%%%%%%%%%%%%%%%%%%%%%%%%%%

\section{Gradient Results for Static Mean-Semideviation}\label{sec:MSD_supp}

In this section we consider the mean-semideviation risk measure, defined as follows:
\begin{equation}\label{eq:mean-sd_def}
    \msd(Z) = \Exp{Z} + c\left( \Exp{(Z - \Exp{Z})_{+}^{2}} \right)^{1 / 2},
\end{equation}
Following the derivation in \cite{Shapiro2009}, note that $\left( \Exp{|Z|^{2}} \right)^{1 / 2} = \|Z\|_2$, where $\|\cdot\|_2$ denotes the $L_2$ norm of the space $\mathcal L_2 (\Omega,\mathcal F,P_\theta)$. The norm may also be written as:
\begin{equation*}
    \|Z\|_2 = \sup_{\|\xi\|_2 \leq 1} \dotp{\xi}{Z},
\end{equation*}
and hence
\begin{equation*}
\begin{split}
  \left( \Exp{(Z - \Exp{Z})_{+}^{2}} \right)^{1 / 2} = \sup_{\|\xi\|_2 \leq 1} \dotp{\xi}{(Z - \Exp{Z})_{+}} &= \sup_{\|\xi\|_2 \leq 1, \xi \geq 0} \dotp{\xi}{Z - \Exp{Z}} \\
    &= \sup_{\|\xi\|_2 \leq 1, \xi \geq 0} \dotp{\xi - \Exp{\xi}}{Z}.
\end{split}
\end{equation*}
It follows that Eq. \eqref{eq:coherent_as_optimization} holds with
\begin{equation*}
    \U = \left\{ \xi' \in \cZ^* : \quad \xi' = 1 + c \xi - c \Exp{\xi}, \quad \|\xi\|_q \leq 1, \quad \xi \geq 0  \right\}.
\end{equation*}
For this case it will be more convenient to write Eq. \eqref{eq:coherent_as_optimization} in the following form
\begin{equation}\label{eq:msd_as_optimization}
    \msd(Z) = \sup_{\|\xi\|_q \leq 1, \xi \geq 0} \dotp{1 + c \xi - c \Exp{\xi}}{Z}.
\end{equation}
Let $\bar{\xi}$ denote an optimal solution for \eqref{eq:msd_as_optimization}. In \cite{Shapiro2009} it is shown that $\bar{\xi}$ is a contact point of $(Z - \Exp{Z})_{+}$, that is
\begin{equation*}
    \bar{\xi} \in \arg \max \left\{ \dotp{\xi}{(Z - \Exp{Z})_{+}}: \|\xi\|_2 \leq 1 \right\},
\end{equation*}
and we have that
\begin{equation}\label{eq:bar_xi_p2}
\bar{\xi} = \frac{(Z - \Exp{Z})_{+}}{\|(Z - \Exp{Z})_{+}\|_2} = \frac{(Z - \Exp{Z})_{+}}{\SD(Z)}.
\end{equation}
Note that $\bar{\xi}$ is not necessarily a probability distribution, but for $c\in[0,1]$, it can be shown \cite{Shapiro2009} that $1 + c \bar{\xi} - c \Exp{\bar{\xi}}$ always is.

In the following we show that $\bar{\xi}$ may be used to write the gradient $\dt{\msd(Z)}$ as an expectation, which will lead to a sampling algorithm for the gradient.

\begin{proposition}\label{prop:msd_grad_supp}
    Under Assumption \ref{ass:LR_well_behaved}, we have that
\begin{equation*}
    \dt{\msd(Z)} = \dt{\Exp{Z}} + \frac{c}{\SD(Z)}\Exp{(Z - \Exp{Z})_{+} \left(\dt \log P(\omega) (Z - \Exp{Z}) - \dt{\Exp{Z}}\right)},
\end{equation*}
and, according to the standard likelihood-ratio method,
\begin{equation*}
    \dt{\Exp{Z}} = \Exp{\dt \log P(\omega) Z}.
\end{equation*}
\end{proposition}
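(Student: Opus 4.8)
The plan is to apply Theorem~\ref{thm:static_gradient} to the specific representation of $\msd$ given in~\eqref{eq:msd_as_optimization}. First I would identify the risk-envelope constraints explicitly: from the representation $\msd(Z) = \sup_{\|\xi\|_q \leq 1,\,\xi \geq 0} \dotp{1 + c\xi - c\Exp{\xi}}{Z}$ we have (after reparametrizing so that the maximization is over a density $\xi' P_\theta$ with $\xi' = 1 + c\xi - c\Exp{\xi}$) an inequality constraint $f_1(\xi,P_\theta) = \|\xi\|_q^q - 1 \leq 0$ together with the nonnegativity and normalization constraints built into Assumption~\ref{assume:risk_envelope}. Actually it is cleaner to work directly with~\eqref{eq:msd_as_optimization}: the objective $\langle 1 + c\xi - c\Exp{\xi},\, Z\rangle = \Exp{Z} + c\,\Exp{\xi Z} - c\,\Exp{\xi}\Exp{Z}$ depends on $\theta$ both through $Z(\omega)$ (via the trajectory distribution) and through the expectation operator $\Exp{\cdot} = \sum_\omega P_\theta(\omega)(\cdot)$, so I would set up the Lagrangian of~\eqref{eq:msd_as_optimization} and invoke the Envelope theorem exactly as in the proof of Theorem~\ref{thm:static_gradient}.

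The key steps, in order: (1) Write $\msd(Z) = \max_\xi L_\theta(\xi,\lambda)$ where $L_\theta$ incorporates the constraint $\|\xi\|_q \le 1$ (for the semideviation case $q=2$), noting strict feasibility holds (e.g. $\xi = 0$), so strong duality and the hypotheses of the Envelope theorem are met. (2) Apply the Envelope theorem: $\dt \msd(Z) = \dt L_\theta(\xi,\lambda)\big|_{(\bar\xi,\bar\lambda)}$, i.e. differentiate the objective holding $\xi = \bar\xi$ and the multiplier fixed. (3) Compute this derivative: since $\dt \langle 1 + c\bar\xi - c\Exp{\bar\xi},\,Z\rangle$ with $\bar\xi$ held fixed equals $\dt \Exp{Z} + c\,\dt\Exp{\bar\xi Z} - c\,\dt(\Exp{\bar\xi}\Exp{Z})$, apply the likelihood-ratio identity $\dt \Exp{f} = \Exp{\dt \log P(\omega)\, f(\omega)}$ to each expectation. (4) Substitute $\bar\xi = (Z - \Exp{Z})_+ / \SD(Z)$ from~\eqref{eq:bar_xi_p2} and simplify: the term $c\,\dt\Exp{\bar\xi Z} - c\Exp{\bar\xi}\dt\Exp{Z} - c\Exp{Z}\dt\Exp{\bar\xi}$ should collapse, after using $\Exp{\bar\xi(Z-\Exp{Z})} = \SD(Z)$ and regrouping, into $\frac{c}{\SD(Z)}\Exp{(Z-\Exp{Z})_+(\dt\log P(\omega)(Z - \Exp{Z}) - \dt\Exp{Z})}$. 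The second displayed identity is just the standard likelihood-ratio formula applied to $\Exp{Z}$.

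One subtlety I would want to handle carefully: $\bar\xi$ itself depends on $\theta$ (both explicitly through $Z$ and through $\Exp{Z}$), so a naive product-rule differentiation would produce extra terms; the whole point of invoking the Envelope theorem is that those terms vanish because $\bar\xi$ is optimal. I should therefore verify that the objective in~\eqref{eq:msd_as_optimization} is of the right form (jointly in the decision variable and $\theta$) for Theorem~4 of~\cite{milgrom2002envelope} to apply — in particular that it is equi-differentiable and Lipschitz in $\theta$, which follows from Assumption~\ref{ass:LR_well_behaved} (bounded likelihood ratios) and boundedness of $Z$ on the finite sample space, mirroring the argument already given in the proof of Theorem~\ref{thm:static_gradient}. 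The main obstacle is really just the bookkeeping in step (4): carefully expanding $\dt$ of the three expectation terms, applying the likelihood-ratio trick consistently, and checking that the algebra regroups exactly into the claimed single expectation — there is a genuine risk of sign errors or of a stray $\dt\Exp{Z}$ term surviving, so I would double-check by confirming that the claimed formula reduces correctly in the degenerate case $c = 0$ (recovering $\dt\Exp{Z}$) and is consistent dimensionally with the $\SD(Z)$ in the denominator.
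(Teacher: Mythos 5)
Your proposal is correct and follows essentially the same route as the paper's proof: invoke the envelope theorem on the representation~\eqref{eq:msd_as_optimization} (where the constraint set is $\theta$-independent, so the simple Danskin-style version suffices), hold $\bar{\xi}$ fixed, differentiate the three expectation terms with the likelihood-ratio trick and the product rule, and substitute $\bar{\xi}=(Z-\Exp{Z})_{+}/\SD(Z)$ to regroup into the claimed formula. The algebraic collapse you anticipate in step (4) is exactly what the paper carries out, so no gap remains.
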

\begin{proof}
Note that in Eq. \eqref{eq:msd_as_optimization} the constraints do not depend on $\theta$. Therefore, using the envelope theorem we obtain that
\begin{equation}\label{eq:msd_proof_1}
\begin{split}
  \dt{\rho(Z)} &= \dt{ \dotp{1 + c\bar{\xi} - c\Exp{\bar{\xi}}}{Z} } \\
    &= \dt{ \dotp{1}{Z} } +c \dt{ \dotp{\bar{\xi}}{Z} } - c \dt{ \dotp{\Exp{\bar{\xi}}}{Z} }.
\end{split}
\end{equation}
We now write each of the terms in Eq. \eqref{eq:msd_proof_1} as an expectation.
We start with the following standard likelihood-ratio result:
\begin{equation*}
    \dt{\dotp{1}{Z}} = \dt{\Exp{Z}} = \Exp{\dt \log P(\omega) Z}.
\end{equation*}
Also, we have that
\begin{equation*}
    \dotp{\Exp{\bar{\xi}}}{Z} = \Exp{\bar{\xi}}\Exp{Z},
\end{equation*}
therefore, by the derivative of a product rule:
\begin{equation*}
    \dt{\dotp{\Exp{\bar{\xi}}}{Z}} = \dt{\Exp{\bar{\xi}}}\Exp{Z} + \Exp{\bar{\xi}}\dt{\Exp{Z}}.
\end{equation*}
By the likelihood-ratio trick and Eq. \eqref{eq:bar_xi_p2} we have that
\begin{equation*}
    \dt{\Exp{\bar{\xi}}} = \frac{1}{\SD(Z)} \Exp{\dt \log P(\omega) (Z - \Exp{Z})_{+} }.
\end{equation*}

Also, by the likelihood-ratio trick
\begin{equation*}
\dt{\Exp{\bar{\xi} Z}} = \Exp{\dt \log P(\omega) \bar{\xi} Z}.
\end{equation*}
Plugging these terms back in Eq. \eqref{eq:msd_proof_1}, we have that
\begin{equation*}
\begin{split}
  \dt{\rho(Z)} &= \dt{\Exp{Z}} + c\dt{\Exp{\bar{\xi} Z}} - c\dt{\Exp{\bar{\xi}}}\Exp{Z} -c\Exp{\bar{\xi}}\dt{\Exp{Z}} \\
    &= \dt{\Exp{Z}} + c\Exp{\bar{\xi} \left(\dt \log P(\omega) Z - \dt{\Exp{Z}}\right)} - c\dt{\Exp{\bar{\xi}}}\Exp{Z} \\
    &= \dt{\Exp{Z}} + \frac{c}{\SD(Z)}\Exp{(Z - \Exp{Z})_{+} \left(\dt \log P(\omega) Z - \dt{\Exp{Z}}\right)} - c\dt{\Exp{\bar{\xi}}}\Exp{Z} \\
    &= \dt{\Exp{Z}} + \frac{c}{\SD(Z)}\Exp{(Z - \Exp{Z})_{+} \left(\dt \log P(\omega) (Z - \Exp{Z}) - \dt{\Exp{Z}}\right)}.\\
\end{split}
\end{equation*}
\end{proof}

Proposition \ref{prop:msd_grad} naturally leads to a sampling-based gradient estimation algortihm, which we term \texttt{GMSD} (Gradient of Mean Semi-Deviation). The algorithm is described in Algorithm \ref{alg:Gmsd}.

\begin{algorithm}
\caption{\texttt{GMSD}}\label{alg:Gmsd}
1: \alggiven
\begin{itemize}
\item Risk level $c$
\item An i.i.d. sequence $z_{1},\dots,z_{N} \sim P_\theta$.
\end{itemize}
2: Set
\begin{equation*}
    \widehat{\Exp{Z}} = \avgN z_i.
\end{equation*}
3: Set
\begin{equation*}
    \widehat{\SD(Z)} = \left( \avgN ( z_i - \widehat{\Exp{Z}} )_{+}^{2} \right)^{1 / 2}.
\end{equation*}
4: Set
\begin{equation*}
    \widehat{\dt{\Exp{Z}}} = \avgN \dt \log P(z_i) z_i.
\end{equation*}
5: \algreturn
\begin{equation*}
    \hat{\dt{\rho(Z)}} = \widehat{\dt{\Exp{Z}}} + \frac{c}{\widehat{\SD(Z)}}\avgN (z_i - \widehat{\Exp{Z}})_{+} \left(\dt \log P(z_i) (z_i - \widehat{\Exp{Z}}) - \widehat{\dt{\Exp{Z}}}\right).
\end{equation*}
\end{algorithm}

%%%%%%%%%%%%%%%%%%%%%%%%%%%%%%%%%%%%%%%%%%%%%%%%%%%%%%%%%%%%%%%%%%%%%%%%%%%%%%%%
%%%%%%%%%%%%%%%%%%%%%%%%%%%%%%%%%%%%%%%%%%%%%%%%%%%%%%%%%%%%%%%%%%%%%%%%%%%%%%%%
%%%%%%%%%%%%%%%%%%%%%%%%%%%%%%%%%%%%%%%%%%%%%%%%%%%%%%%%%%%%%%%%%%%%%%%%%%%%%%%%
%%%%%%%%%%%%%%%%%%%%%%%%%%%%%%%%%%%%%%%%%%%%%%%%%%%%%%%%%%%%%%%%%%%%%%%%%%%%%%%%
%%%%%%%%%%%%%%%%%%%%%%%%%%%%%%%%%%%%%%%%%%%%%%%%%%%%%%%%%%%%%%%%%%%%%%%%%%%%%%%%

\section{Consistency Proof}

Let $\left( \Omega_{SAA},\mathcal F_{SAA},P_{SAA} \right)$ denote the probability space of the SAA functions (i.e., the randomness due to sampling).

Let $L_{\theta;N}(\xi,\lambda^{\mathcal P},\lambda^{\mathcal E},\lambda^{\mathcal I})$ denote the Lagrangian of the SAA problem
\begin{equation}\label{eq:SAA_Lagrangian}
\begin{split}
L_{\theta;N}(\xi,\lambda^{\mathcal P},\lambda^{\mathcal E},\lambda^{\mathcal I})=&\sum_{\omega \in \Omega} \!\! \xi(\omega) P_{\theta;N}(\omega) Z(\omega)\!-\!\lambda^{\mathcal P}\left(\sum_{\omega \in \Omega}\xi(\omega)P_{\theta;N}(\omega)\!-\!1\!\right)\\
&-\sum_{e\in\mathcal E}\lambda^{\mathcal E}(e) f_e(\xi,P_{\theta;N})-\sum_{i\in\mathcal I}\lambda^{\mathcal I}(i) f_i(\xi,P_{\theta;N}).
\end{split}
\end{equation}
Recall that $\spset \subset \reals^{|\Omega|}\times \reals\times \reals^{|\mathcal E|}\times \reals^{|\mathcal I|}_{+}$ denotes the set of saddle points of the true Lagrangian \eqref{eq:Lagrangian}.
Let $\saaspset\subset \reals^{|\Omega|}\times \reals\times \reals^{|\mathcal E|}\times \reals^{|\mathcal I|}_{+}$ denote the set of SAA Lagrangian \eqref{eq:SAA_Lagrangian} saddle points.

Suppose that there exists a compact set $C \equiv C_\xi \times C_\lambda$, where $C_\xi \subset \reals^{|\Omega|}$ and $C_\lambda \subset \reals\times \reals^{|\mathcal E|}\times \reals^{|\mathcal I|}_{+}$ such that:
\begin{description}
\item[(i)] The set of Lagrangian saddle points $\spset \subset C$ is non-empty and bounded.
\item[(ii)] The functions $f_e(\xi,P_\theta)$ for all $e\in\mathcal E$ and $f_i(\xi,P_\theta)$ for all $i\in\mathcal I$ are finite valued and continuous (in $\xi$) on $C_\xi$.
\item[(iii)] For $N$ large enough the set $\saaspset$ is non-empty and $\saaspset \subset C$ w.p.~1.
\end{description}

Recall from Assumption \ref{assume:risk_envelope} that for each fixed $\xi\in\mathcal B$, both $f_i(\xi,p)$ and $g_e(\xi,p)$ are continuous in $p$. Furthermore, by the S.L.L.N. of Markov chains, for each policy parameter, we have $P_{\theta,N}\rightarrow P_{\theta}$ w.p.~1. From the definition of the Lagrangian function and continuity of constraint functions,  one can easily see that for each $(\xi,\lambda^{\mathcal P},\lambda^{\mathcal E},\lambda^{\mathcal I})\in\reals^{|\Omega|}\times\reals\times\reals^{|\mathcal E|}\times\reals_{+}^{|\mathcal I|}$, $L_{\theta;N}(\xi,\lambda^{\mathcal P},\lambda^{\mathcal E},\lambda^{\mathcal I})\rightarrow L_{\theta}(\xi,\lambda^{\mathcal P},\lambda^{\mathcal E},\lambda^{\mathcal I})$ w.p.~1. Denote with $\setdist{A}{B}$ the deviation of set $A$ from set $B$, i.e., $\setdist{A}{B}=\sup_{x\in A}\inf_{y\in B}\|x-y\|$.  Further assume that:
\begin{description}
\item[(iv)] If $\xi_N \in \mathcal U(\pemp)$ and $\xi_N$ converges w.p.~1 to a point $\xi$, then $\xi\in\mathcal U(\pprob)$.
\end{description}
According to the discussion in Page 161 of \citet{Shapiro2009}, the Slater condition of Assumption \ref{assume:risk_envelope} guarantees the following condition:
\begin{description}
\item[(v)] For some point $\xi \in \optset$ there exists a sequence $\xi_N \in \mathcal U(\pemp)$ such that $\xi_N \to \xi$ w.p.~1,
\end{description}
and from Theorem 6.6 in \citet{Shapiro2009}, we know that both sets $ \mathcal U(\pemp)$ and $ \mathcal U(P_\theta)$ are convex and compact.
Furthermore, note that we have
\begin{description}
\item[(vi)] The objective function on \eqref{eq:coherent_as_optimization} is linear, finite valued and continuous in $\xi$ on $C_\xi$ (these conditions obviously hold for almost all $\omega\in\Omega$ in the integrand function $\xi(\omega)Z(\omega)$).
\item[(vii)] S.L.L.N. holds point-wise for any $\xi$.
\end{description}
From (i,iv,v,vi,vii), and under the same lines of proof as in Theorem 5.5 of \citet{Shapiro2009}, we have that
\begin{equation}\label{eq:consistency_prf_1}
    \rho_N(Z) \to \rho(Z) \textrm{ w.p. } 1 \textrm{ as } N \to \infty,
\end{equation}
\begin{equation}\label{eq:consistency_prf_2}
    \setdist{\saaoptset}{\optset} \to 0 \textrm{ w.p. } 1 \textrm{ as } N \to \infty,
\end{equation}

In part 1 and part 2 of the following proof, we show, by following similar derivations as in Theorem 5.2, Theorem 5.3 and Theorem 5.4 of \citet{Shapiro2009}, that $L_{\theta;N}(\xi^*_{\theta;N},\lambda^{*,\mathcal P}_{\theta;N},\lambda^{*,\mathcal E}_{\theta;N},\lambda^{*,\mathcal I}_{\theta;N})\rightarrow L_{\theta}(\xi^*_{\theta},\lambda^{*,\mathcal P}_{\theta},\lambda^{*,\mathcal E}_{\theta},\lambda^{*,\mathcal I}_{\theta})$ w.p.~1 and $\setdist{\saaspset}{\spset} \to 0 \textrm{ w.p. } 1 \textrm{ as } N \to \infty$. Based on the definition of the deviation of sets, the limit point of any element in $\saaspset$ is also an element in $\spset$.

Assumptions (i) and (iii) imply that we can restrict our attention to the set $C$.

\paragraph{Part 1}
We first show that $L_{\theta;N}(\xi^*_{\theta;N},\lambda^{*,\mathcal P}_{\theta;N},\lambda^{*,\mathcal E}_{\theta;N},\lambda^{*,\mathcal I}_{\theta;N})$ converges to $L_{\theta}(\xi^*_{\theta},\lambda^{*,\mathcal P}_{\theta},\lambda^{*,\mathcal E}_{\theta},\lambda^{*,\mathcal I}_{\theta})$ w.p.~1 as $N\to \infty$.

%%%%%%%%%%%%%%%%%%%%%%%%%
For each fixed $(\lambda^{\mathcal P},\lambda^{\mathcal E},\lambda^{\mathcal I})\in C_\lambda$, the function $L_{\theta}(\xi,\lambda^{\mathcal P},\lambda^{\mathcal E},\lambda^{\mathcal I})$ is convex and continuous in $\xi$.  Together with the point-wise S.L.L.N. property, Theorem 7.49 of \citet{Shapiro2009} implies that $L_{\theta;N}(\xi,\lambda^{\mathcal P},\lambda^{\mathcal E},\lambda^{\mathcal I}) - L_{\theta}(\xi,\lambda^{\mathcal P},\lambda^{\mathcal E},\lambda^{\mathcal I})\overset{e}{\rightarrow}0$, where $\overset{e}{\rightarrow}$ denotes epi-convergence.  Furthermore, since the objective and constraint functions are convex in $\xi$ and are finite valued on $C_\xi$, the set $\text{dom} L_{\theta}(\cdot,\lambda^{\mathcal P},\lambda^{\mathcal E},\lambda^{\mathcal I})$ has non-empty interior. It follows from Theorem 7.27 of  \citet{Shapiro2009} that epi-convergence of $L_{\theta,N}$ to $L_\theta$ implies uniform convergence on $C_\xi$, i.e., $ \sup_{\xi \in C_\xi} \left| L_{\theta;N}(\xi,\lambda^{\mathcal P},\lambda^{\mathcal E},\lambda^{\mathcal I}) - L_{\theta}(\xi,\lambda^{\mathcal P},\lambda^{\mathcal E},\lambda^{\mathcal I})\right| \leq \epsilon
$.
On the other hand, for each fixed $\xi\in C_\xi$,
the function $L_{\theta}(\xi,\lambda^{\mathcal P},\lambda^{\mathcal E},\lambda^{\mathcal I})$ is linear and thus continuous in $(\lambda^{\mathcal P},\lambda^{\mathcal E},\lambda^{\mathcal I})$ and $\text{dom} L_{\theta}(\xi,\cdot,\cdot,\cdot)=\reals\times\reals^{|\mathcal E|}\times\reals^{|\mathcal I|}$ has non-empty interior. It follows from analogous arguments that $ \sup_{(\lambda^{\mathcal P},\lambda^{\mathcal E},\lambda^{\mathcal I}) \in C_{\lambda}} \left| L_{\theta;N}(\xi,\lambda^{\mathcal P},\lambda^{\mathcal E},\lambda^{\mathcal I}) - L_{\theta}(\xi,\lambda^{\mathcal P},\lambda^{\mathcal E},\lambda^{\mathcal I})\right| \leq \epsilon$. Combining these results implies
 that for any $\epsilon > 0$ and a.e. $\omega_{SAA} \in \Omega_{SAA}$ there is a $N^*(\epsilon,\omega_{SAA})$ such that
\begin{equation}\label{eq:consistency_prf_3}
    \sup_{(\xi,\lambda^{\mathcal P},\lambda^{\mathcal E},\lambda^{\mathcal I}) \in C} \left| L_{\theta;N}(\xi,\lambda^{\mathcal P},\lambda^{\mathcal E},\lambda^{\mathcal I}) - L_{\theta}(\xi,\lambda^{\mathcal P},\lambda^{\mathcal E},\lambda^{\mathcal I})\right| \leq \epsilon.
\end{equation}
Now, assume by contradiction that for some $N > N^*(\epsilon,\omega_{SAA})$ we have $\saaLag - \Lag > \epsilon$. Then by definition of the saddle points
\begin{equation*}
\begin{split}
    L_{\theta;N}(\xi^*_{\theta;N},\lambda^{*,\mathcal P}_{\theta},\lambda^{*,\mathcal E}_{\theta},\lambda^{*,\mathcal I}_{\theta}) &\geq \saaLag  \\ &> \Lag + \epsilon \geq L_{\theta}(\xi^*_{\theta;N},\lambda^{*,\mathcal P}_{\theta},\lambda^{*,\mathcal E}_{\theta},\lambda^{*,\mathcal I}_{\theta}) + \epsilon,
\end{split}
\end{equation*}
contradicting \eqref{eq:consistency_prf_3}.

Similarly, assuming by contradiction that $ \Lag - \saaLag> \epsilon$ gives
\begin{equation*}
\begin{split}
    L_{\theta}(\xi^*_{\theta},\lambda^{*,\mathcal P}_{\theta;N},\lambda^{*,\mathcal E}_{\theta;N},\lambda^{*,\mathcal I}_{\theta;N}) &\geq \Lag \\ &> \saaLag  + \epsilon \geq L_{\theta;N}(\xi^*_{\theta},\lambda^{*,\mathcal P}_{\theta;N},\lambda^{*,\mathcal E}_{\theta;N},\lambda^{*,\mathcal I}_{\theta;N}) + \epsilon,
\end{split}
\end{equation*}
also contradicting \eqref{eq:consistency_prf_3}.

It follows that $\left| \saaLag - \Lag \right| \leq \epsilon$ for all $N > N^*(\epsilon,\omega_{SAA})$, and therefore
\begin{equation}\label{eq:consistency_prf_L_converges}
  \lim_{N\to \infty} L_{\theta;N}(\xi^*_{\theta;N},\lambda^{*,\mathcal P}_{\theta;N},\lambda^{*,\mathcal E}_{\theta;N},\lambda^{*,\mathcal I}_{\theta;N}) = L_{\theta}(\xi^*_{\theta},\lambda^{*,\mathcal P}_{\theta},\lambda^{*,\mathcal E}_{\theta},\lambda^{*,\mathcal I}_{\theta}),
\end{equation}
w.p.~1.

\paragraph{Part 2}
Let us now show that $\setdist{\saaspset}{\spset} \to 0$. We argue by a contradiction. Suppose that $\setdist{\saaspset}{\spset} \nrightarrow 0$. Since $C$ is compact, we can assume that there exists a sequence $(\xi^*_{\theta;N},\lambda^{*,\mathcal P}_{\theta;N},\lambda^{*,\mathcal E}_{\theta;N},\lambda^{*,\mathcal I}_{\theta;N}) \in \saaspset$ that converges to a point $(\bar{\xi}^*,\bar{\lambda}^{*,\mathcal P},\bar{\lambda}^{*,\mathcal E},\bar{\lambda}^{*,\mathcal I})\in C$ and
$(\bar{\xi}^*,\bar{\lambda}^{*,\mathcal P},\bar{\lambda}^{*,\mathcal E},\bar{\lambda}^{*,\mathcal I}) \not\in \spset$. However, from \eqref{eq:consistency_prf_2} we must have that $\bar{\xi}^* \in \optset$. Therefore, we must have that
\begin{equation*}
    L_{\theta}(\bar{\xi}^*,\bar{\lambda}^{*,\mathcal P},\bar{\lambda}^{*,\mathcal E},\bar{\lambda}^{*,\mathcal I}) > L_{\theta}(\bar{\xi}^*,\lambda^{*,\mathcal P}_{\theta},\lambda^{*,\mathcal E}_{\theta},\lambda^{*,\mathcal I}_{\theta}),
\end{equation*}
by definition of the saddle point set.

Now,
\begin{equation}\label{eq:consistency_prf_4}
\begin{split}
    &L_{\theta;N}(\xi^*_{\theta;N},\lambda^{*,\mathcal P}_{\theta;N},\lambda^{*,\mathcal E}_{\theta;N},\lambda^{*,\mathcal I}_{\theta;N}) - L_{\theta}(\bar{\xi}^*,\bar{\lambda}^{*,\mathcal P},\bar{\lambda}^{*,\mathcal E},\bar{\lambda}^{*,\mathcal I}) \\=& \left[L_{\theta;N}(\xi^*_{\theta;N},\lambda^{*,\mathcal P}_{\theta;N},\lambda^{*,\mathcal E}_{\theta;N},\lambda^{*,\mathcal I}_{\theta;N}) - L_{\theta}(\xi^*_{\theta;N},\lambda^{*,\mathcal P}_{\theta;N},\lambda^{*,\mathcal E}_{\theta;N},\lambda^{*,\mathcal I}_{\theta;N})\right] + \\ &+ \left[L_{\theta}(\xi^*_{\theta;N},\lambda^{*,\mathcal P}_{\theta;N},\lambda^{*,\mathcal E}_{\theta;N},\lambda^{*,\mathcal I}_{\theta;N}) - L_{\theta}(\bar{\xi}^*,\bar{\lambda}^{*,\mathcal P},\bar{\lambda}^{*,\mathcal E},\bar{\lambda}^{*,\mathcal I})\right].
\end{split}
\end{equation}
The first term in the r.h.s. of \eqref{eq:consistency_prf_4} tends to zero, using the argument from  \eqref{eq:consistency_prf_3}, and the second by continuity of $L_{\theta}$ guaranteed by (ii). We thus obtain that $L_{\theta;N}(\xi^*_{\theta;N},\lambda^{*,\mathcal P}_{\theta;N},\lambda^{*,\mathcal E}_{\theta;N},\lambda^{*,\mathcal I}_{\theta;N})$ tends to $L_{\theta}(\bar{\xi}^*,\bar{\lambda}^{*,\mathcal P},\bar{\lambda}^{*,\mathcal E},\bar{\lambda}^{*,\mathcal I}) > L_{\theta}(\xi^*_{\theta},\lambda^{*,\mathcal P}_{\theta},\lambda^{*,\mathcal E}_{\theta},\lambda^{*,\mathcal I}_{\theta})$, which is a contradiction to \eqref{eq:consistency_prf_L_converges}.

\paragraph{Part 3}
We now show the consistency of $\dtN \rho(Z)$.

Consider Eq. \eqref{eq:SAA_gradient}. Since $\dt \log P(\cdot)$ is bounded by Assumption \ref{ass:LR_well_behaved}, and $\dt{f_i(\cdot;P_{\theta})}$ and $\dt{g_e(\cdot;P_{\theta})}$ are bounded by Assumption \ref{assume:risk_envelope}, and using our previous result  $\setdist{\saaspset}{\spset} \to 0$, we have that for a.e. $\omega_{SAA}\in \Omega_{SAA}$

\begin{equation*}
\begin{split}
  \lim_{N\to\infty}\dtN{\rho(Z)}  &=  \sum_{\omega\in\Omega} \pprob(\omega) \xi^*_{\theta}(\omega) \dt \log P(\omega) (Z(\omega) - \lambda^{*,\mathcal P}_{\theta}) \\
    & - \sum_{e\in\mathcal E} \lambda^{*,\mathcal E}_{\theta}(e) \dt{g_e(\xi^*_{\theta};P_{\theta})}\\
    & -\sum_{i\in\mathcal I} \lambda^{*,\mathcal I}_{\theta}(i) \dt{f_i(\xi^*_{\theta};P_{\theta})} \\
    &= \dt{\rho(Z)}.
\end{split}
\end{equation*}
where the first equality is obtained from the Envelop theorem (see Theorem \ref{thm:static_gradient}) with $(\xi^*_{\theta},\lambda^{*,\mathcal P}_{\theta},\lambda^{*,\mathcal E}_{\theta},\lambda^{*,\mathcal I}_{\theta})\in \saaspset\cap\spset$ is the limit point of the converging sequence $\{(\xi^*_{\theta;N},\lambda^{*,\mathcal P}_{\theta;N},\lambda^{*,\mathcal E}_{\theta;N},\lambda^{*,\mathcal I}_{\theta;N})\}_{N\in\mathbb N}$.

%%%%%%%%%%%%%%%%%%%%%%%%%%%%%%%%%%%%%%%%%%%%%%%%%%%%%%%%%%%%%%%%%%%%%%%%%%%%%%%%
%%%%%%%%%%%%%%%%%%%%%%%%%%%%%%%%%%%%%%%%%%%%%%%%%%%%%%%%%%%%%%%%%%%%%%%%%%%%%%%%
%%%%%%%%%%%%%%%%%%%%%%%%%%%%%%%%%%%%%%%%%%%%%%%%%%%%%%%%%%%%%%%%%%%%%%%%%%%%%%%%
%%%%%%%%%%%%%%%%%%%%%%%%%%%%%%%%%%%%%%%%%%%%%%%%%%%%%%%%%%%%%%%%%%%%%%%%%%%%%%%%
%%%%%%%%%%%%%%%%%%%%%%%%%%%%%%%%%%%%%%%%%%%%%%%%%%%%%%%%%%%%%%%%%%%%%%%%%%%%%%%%
\section{Proof of Theorem \ref{thm:dynamic_risk}}

Similar to the proof of Theorem \ref{thm:static_gradient}, recall the saddle point definition of $(\xi^*_{\theta,x},\lambda^{*,\mathcal P}_{\theta,x},\lambda^{*,\mathcal E}_{\theta,x},\lambda^{*,\mathcal I}_{\theta,x}) \in \spset$ and strong duality result, i.e.,
\begin{equation*}
\begin{split}
\max_{\xi\,:\,\xi P_\theta(\cdot|x)\in\mathcal U(x,P_\theta(\cdot|x))}\sum_{x'  \in \mathcal X} \, \xi(x') P_\theta(x'|x)V_\theta(x')&=\max_{\xi\geq 0}\min_{\lambda^{\mathcal P},\lambda^{\mathcal I}\geq 0,\lambda^{\mathcal E}}L_{\theta,x}(\xi,\lambda^{\mathcal P},\lambda^{\mathcal E},\lambda^{\mathcal I})\\&=\min_{\lambda^{\mathcal P},\lambda^{\mathcal I}\geq 0,\lambda^{\mathcal E}}\max_{\xi\geq 0}L_{\theta,x}(\xi,\lambda^{\mathcal P},\lambda^{\mathcal E},\lambda^{\mathcal I}).
\end{split}
\end{equation*}
the gradient formula in \eqref{eq:envelope_theorem} can be written as
\[
\begin{split}
\nabla_\theta V_\theta(x)&=\nabla_\theta \left[C_\theta(x) \!+\! \gamma\max_{\xi\,:\,\xi P_\theta(\cdot |x)\in \U(x,P_\theta(\cdot |x))}\mathbb E_{\xi}[V_\theta]\right]\\
&=\gamma\! \sum_{x'  \in \mathcal X} \! \xi^*_{\theta,x}(x') P_\theta(x'|x)\nabla_\theta V_\theta(x')+\sum_{a\in\mathcal A}\mu_\theta(a|x)\nabla_\theta\log\mu_\theta(a|x)h_\theta(x,a),
\end{split}
\]
where the stage-wise cost function $h_\theta(x,a)$ is defined in \eqref{eq:h}. By defining $\widehat{h}_\theta(x)=\sum_{a\in\mathcal A}\mu_\theta(a|x)\nabla_\theta\log\mu_\theta(a|x)h_\theta(x,a)$ and unfolding the recursion, the above expression implies
\[
\begin{split}
\nabla_\theta V_\theta(x_0)=&\widehat{h}_\theta(x_0)+\gamma\sum_{x_1\in\mathcal X}P_\theta(x_1|x_0)\xi_\theta^*(x_1)\Bigg[\widehat{h}_\theta(x_1)+\gamma \sum_{x_2\in\mathcal X}P_\theta(x_2|x_1)\xi_\theta^*(x_2) \nabla_\theta V_\theta\left(x_2\right)\Bigg].
\end{split}
\]
Now since $\nabla_\theta V_\theta$ is continuously differentiable with bounded derivatives, when $t\rightarrow\infty$, one obtains $\gamma^t\nabla_\theta V_\theta(x)\rightarrow 0$ for any $x\in\mathcal X$. Therefore, by Bounded Convergence Theorem, $\lim_{t\rightarrow\infty}\rho(\gamma^t V_{\theta}(x_t))=0$, when $x_0=x$ the above expression implies the result of this theorem.

%%%%%%%%%%%%%%%%%%%%%%%%%%%%%%%%%%%%%%%%%%%%%%%%%%%%%%%%%%%%%%%%%%%%%%%%%%%%%%%%
%%%%%%%%%%%%%%%%%%%%%%%%%%%%%%%%%%%%%%%%%%%%%%%%%%%%%%%%%%%%%%%%%%%%%%%%%%%%%%%%
%%%%%%%%%%%%%%%%%%%%%%%%%%%%%%%%%%%%%%%%%%%%%%%%%%%%%%%%%%%%%%%%%%%%%%%%%%%%%%%%
%%%%%%%%%%%%%%%%%%%%%%%%%%%%%%%%%%%%%%%%%%%%%%%%%%%%%%%%%%%%%%%%%%%%%%%%%%%%%%%%
%%%%%%%%%%%%%%%%%%%%%%%%%%%%%%%%%%%%%%%%%%%%%%%%%%%%%%%%%%%%%%%%%%%%%%%%%%%%%%%%

\section{Gradient Formula for Dynamic Risk - Full Results}
\label{sec:dynamic}

In this section, we first derive a new formula for the gradient of a general Markov-coherent dynamic risk measure $\dt \rho_\infty(\mdp)$ that involves the \emph{value function} of the risk objective $\rho_\infty(\mdp)$ (e.g.,~the value function proposed by~\citealt{ruszczynski2010risk}). This formula extends the well-known ``policy gradient theorem"~\cite{sutton_policy_2000,konda2000actor} developed for the expected return to Markov-coherent dynamic risk measures. Using this formula, we suggest the following actor-critic style algorithm for estimating $\dt \rho_\infty(\mdp)$:

%\begin{itemize}
%\item
\textbf{Critic:} For a given policy $\theta$, calculate the \emph{risk-sensitive value function} of $\rho_\infty(\mdp)$ (see Section~\ref{sec:val_fn_rpprox}), and \\
%\item
\textbf{Actor:} Using the critic's value function, estimate $\dt \rho_\infty(\mdp)$ by sampling (see Section~\ref{sec:pol_grad}).
%\end{itemize}

The value function proposed by~\citet{ruszczynski2010risk} assigns to each state a particular value that encodes the long-term risk starting from that state. When the state space $\mathcal X$ is large, calculating the value function by dynamic programming (as suggested by~\citealt{ruszczynski2010risk}) becomes intractable due to the ``curse of dimensionality". For the risk-neutral case, a standard solution to this problem is to approximate the value function by a set of state-dependent features, and use sampling to calculate the parameters of this approximation~\citep{BT96}. In particular, \emph{temporal difference} (TD) learning methods~\cite{sutton_reinforcement_1998} are popular for this purpose, which have been recently extended to robust MDPs by~\citet{tamar2014robust}. We use their (robust) TD algorithm and show how our critic use it to approximates the {\em risk-sensitive} value function. We then discuss how the error introduced by this approximation affects the gradient estimate of the actor.
%by projected \emph{risk-sensitive Bellman iteration}
% Explain general actor-critic approach - value function estimation + policy gradient theorem

%%%%%%%%%%%%%%%%%%%%%%%%%%%%%%%%%%%%%%%%%%%%%%%%%%%%%%%%%%%%%%%%%%%%%%%%%%%%%%%%
%%%%%%%%%%%%%%%%%%%%%%%%%%%%%%%%%%%%%%%%%%%%%%%%%%%%%%%%%%%%%%%%%%%%%%%%%%%%%%%%
%%%%%%%%%%%%%%%%%%%%%%%%%%%%%%%%%%%%%%%%%%%%%%%%%%%%%%%%%%%%%%%%%%%%%%%%%%%%%%%%
\subsection{Dynamic Risk}
We provide a multi-period generalization of the concepts presented in Section~\ref{subsec:coherent}. Here we closely follow the discussion in~\citet{ruszczynski2010risk}.

Consider a probability space $(\Omega, \mathcal F, P_\theta)$, a filtration $\mathcal F_0\subset \mathcal F_1\subset \mathcal F_2 \cdots \subset \mathcal F_T \subset \mathcal F$, and an adapted sequence of real-valued random variables $Z_t$, $t\in \{0, \ldots,T\}$. We assume that $\mathcal F_0 = \{\Omega, \emptyset\}$, i.e.,~$Z_0$ is deterministic. For each $t\in\{0, \ldots, T\}$, we denote by $\mathcal Z_t$ the space of random variables defined over the probability space $(\Omega, \mathcal F_t, P_\theta)$, and also let $\mathcal Z_{t, T}:=\mathcal Z_t \times \cdots \times \mathcal Z_T$ be a sequence of these spaces. The sequence of random variables $Z_t$ can be interpreted as the stage-wise costs observed along a trajectory generated by an MDP parameterized by a parameter $\theta$, i.e.,~$Z_{0,T} \doteq \big(Z_0=\gamma^0C(x_0,a_0),\dots,Z_T=\gamma^TC(x_T,a_T)\big)\in\mathcal Z_{0,T}$.

% Present the trajectory as the fundamental random variable

In particular, we are interested in the sequence of random variables induced by the trajectories from a Markov decision process (MDP) parameterized by parameter $\theta$.

Explicitly, for any $t\geq 0$ and state dependent random variable $Z(x_{t+1})\in\mathcal Z_{t+1}$, the risk evaluation is given by

%\vspace{-0.2in}
\begin{small}
\begin{equation}
\label{eq:representation-result}
\rho\big(Z(x_{t+1})\big)=\max_{\xi\,:\, \xi P_\theta(\cdot |x_t)\in \U(x_t,P_\theta(\cdot |x_t))}\mathbb E_{\xi}\big[Z(x_{t+1})\big],
\end{equation}
\end{small}
%\vspace{-0.2in}

where we let $\U(x_t,P_\theta(\cdot |x_t))$ denote the risk-envelope \eqref{eq:U_as_optimization} with $\pprob$ replaced with $P_\theta(\cdot |x_t)$. The Markovian assumption on the risk measure $\rho_T(\mdp)$ allows us to optimize it using dynamic programming techniques.
\subsection{Risk-Sensitive Bellman Equation}
\label{subsec:Risk-Bellman}

Our value-function estimation method is driven by a Bellman-style equation for Markov coherent risks. Let $B(\mathcal X)$ denote the space of real-valued bounded functions on $\mathcal X$ and $C_\theta(x)=\sum_{a\in\mathcal A}C(x,a)\mu_\theta(a|x)$ be the stage-wise cost function induced by policy $\mu_\theta$. We now define the risk sensitive Bellman operator $T_{\theta}[V] : B(\mathcal X) \mapsto B(\mathcal X)$ as
\begin{equation}\label{eq:T_supp}
T_{\theta}[V](x) :=C_\theta(x) + \gamma\max_{\xi P_\theta(\cdot |x)\in \U(x,P_\theta(\cdot |x))}\mathbb E_{\xi}[V].
\end{equation}
According to Theorem~1 in~\citet{ruszczynski2010risk}, the operator $T_\theta$ has a unique fixed-point $V_\theta$, i.e.,~ $T_{\theta}[V_\theta](x)=V_\theta(x),\;\forall x\in\mathcal{X}$, that is equal to the risk objective function induced by $\theta$, i.e.,~$V_\theta(x_0)=\rho_\infty(\mdp)$. However, when the state space $\mathcal X$ is large, exact enumeration of the Bellman equation is intractable due to ``curse of dimensionality''. Next, we provide an iterative approach to approximate the risk sensitive value function.

%%%%%%%%%%%%%%%%%%%%%%%%%%%%%%%%%%%%%%%%%%%%%%%%%%%%%%%%%%%%%%%%%%%%%%%%%%%%%%%%
%%%%%%%%%%%%%%%%%%%%%%%%%%%%%%%%%%%%%%%%%%%%%%%%%%%%%%%%%%%%%%%%%%%%%%%%%%%%%%%%
%%%%%%%%%%%%%%%%%%%%%%%%%%%%%%%%%%%%%%%%%%%%%%%%%%%%%%%%%%%%%%%%%%%%%%%%%%%%%%%%

\subsection{Value Function Approximation}
\label{sec:val_fn_rpprox}

Consider the linear approximation of the risk-sensitive value function $V_\theta(x)\approx v^\top\phi(x)$, where $\phi(\cdot)\in\mathbb R^{\kappa_2}$ is the $\kappa_2$-dimensional state-dependent feature vector. Thus, the approximate value function belongs to the low dimensional sub-space $\mathcal{V}=\left\{\Phi v|v\in\mathbb R^{\kappa_2}\right\}$, where $\Phi:\mathcal X\rightarrow\mathbb R^{\kappa_2}$ is a function mapping such that $\Phi(x)=\phi(x)$. %Let $v_\theta^*\in\mathbb R^{\kappa_2}$ be the best approximation parameter, and thus, $\tilde V_{\theta}(x)=\phi(x)^\top v_\theta^*$ be the best linear approximation of $ V_{\theta}(x)$.
The goal of our critric is to find a good approximation of $V_\theta$ from simulated trajectories of the MDP. In order to have a well-defined approximation scheme, we first impose the following standard assumption~\citep{BT96}.
\begin{assumption}
The mapping $\Phi$ has full column rank.
\end{assumption}
For a function $y:\mathcal X\rightarrow\mathbb R$, we define its weighted (by $d$) $\ell_2$-norm as $\|y\|_d=\sqrt{\sum_{x'}d(x^\prime|x)y(x^\prime)^2}$, where $d$ is a distribution over $\mathcal X$. Using this, we define $\Pi:\mathcal X\rightarrow\mathcal{V}$, the orthogonal projection from $\mathbb R$ to $\mathcal V$, w.r.t.~a norm weighted by the stationary distribution of the policy, $d_\theta(x'|x)$.

Note that the TD methods approximate the value function $V_\theta$ with the fixed-point of the joint operator $\Pi T_\theta$, i.e.,~$\tilde V_\theta(x)=v_\theta^{*\top}\phi(x)$, such that
\begin{equation}\label{eq:projected_fixed_point}
\forall x\in\mathcal X, \quad\quad\quad \tilde V_\theta(x)=\Pi T_{\theta} [\tilde V_\theta](x).
\end{equation}
From Eq.~\ref{eq:representation-result} that has been derived from Theorem~\ref{thm:rep} for dynamic risks, it is easy to see that the risk-sensitive Bellman equation~\eqref{eq:T_supp} is a robust Bellman equation~\citep{nilim_robust_2005} with uncertainty set $\mathcal U(x,P_\theta(\cdot|x))$. Thus, we may use the TD approximation of the robust Bellman equation proposed by~\citet{tamar2014robust} to find an approximation of $V_\theta$.
%In this section we mainly summarize the value function approximation results of robust projected Bellman iteration from \citet{tamar2014robust}.
%
We will need the following assumption analogous to Assumption~2 in~\citet{tamar2014robust}.
\begin{assumption}\label{assume:risk}
There exists $\kappa \in (0, 1)$ such that $\xi(x') \leq \kappa/\gamma$, for all $\xi(\cdot) P_\theta(\cdot|x) \in \mathcal U(x, P_\theta(\cdot|x))$ and all $x,x' \in\mathcal X$.
\end{assumption}
Given Assumption~\ref{assume:risk}, Proposition~3 in~\citet{tamar2014robust} guarantees that the projected risk-sensitive Bellman operator $\Pi T_{\theta}$ is a contraction w.r.t.~$d_{\theta}$-norm. Therefore, Eq.~\ref{eq:projected_fixed_point} has a unique fixed-point solution $\tilde V_\theta(x)=v_\theta^{*\top}\phi(x)$. This means that $v_\theta^*\in\mathbb R^{\kappa_2}$ satisfies $v_\theta^*\in\arg\min_{v}\|T_{\theta}[\Phi v]-\Phi v\|_{d_\theta}^2$. By the projection theorem on Hilbert spaces, the orthogonality condition for $v_\theta^*$ becomes
\begin{align*}
%\begin{split}
\sum_{x\in\mathcal X}&d_\theta(x|x_0)\phi(x)\phi(x)^\top v_\theta^*= \sum_{x\in\mathcal X} d_\theta(x|x_0)\phi(x) C_\theta(x)\\
&+\gamma \sum_{x\in\mathcal X}d_\theta(x|x_0)\phi(x)\max_{\xi\,:\,\xi P_\theta(\cdot |x) \in \U(x,P_\theta(\cdot |x))} \mathbb E_{\xi}[\Phi v_\theta^* ].
%\end{split}
\end{align*}
As a result, given a long enough trajectory $x_0$, $a_0$, $x_1$, $a_1$, $\ldots$, $x_{N-1}$, $a_{N-1}$ generated by policy $\theta$, we may estimate the fixed-point solution $v^*_\theta$ using the projected risk sensitive value iteration (PRSVI) algorithm with the update rule
\begin{align}
v_{k+1}&=\left(\frac{1}{N}\sum_{t=0}^{N-1} \phi(x_t)\phi(x_t)^{\top}\right)^{-1}\bigg[\frac{1}{N}\sum_{t=0}^{N-1}\phi(x_t) C_{\theta}(x_t)\nonumber\\
&+\gamma\frac{1}{N}\sum_{t=0}^{N-1} \phi(x_t)\max_{\xi P_\theta(\cdot |x_t)\in \U(x_t,P_\theta(\cdot |x_t))} \mathbb E_{\xi}[\Phi v_k ]\bigg].
\label{eq:PRSVI}
\end{align}
Note that using the law of large numbers, as both $N$ and $k$ tend to infinity, $v_k$ converges w.p.~1 to $v_\theta^*$, the unique solution of the fixed point equation $\Pi T_{\theta} [\Phi v]=\Phi v$.

%\subsection{SAA Formulation of $\max_{\xi P_\theta(\cdot |x)\in \U(x,P_\theta(\cdot |x))} \mathbb E_{\xi}[\Phi v]$}\label{sec:val_fn_rpprox_SAA}

In order to implement the iterative algorithm~\eqref{eq:PRSVI}, one must repeatedly solve the inner optimization problem $\max_{\xi P_\theta(\cdot |x)\in \U(x,P_\theta(\cdot |x))} \mathbb E_{\xi}[\Phi v ]$. When the state space $\mathcal X$ is large, solving this optimization problem is often computationally expensive or even intractable. Similar to Section~3.4 of~\citet{tamar2014robust}, we propose the following SAA approach to solve this problem. For the trajectory, $x_0$, $a_0$, $x_1$, $a_1$, $\ldots$, $x_{N-1}$, $a_{N-1}$, we define the empirical transition probability $P_N(x'|x,a) \doteq \frac{\sum_{t=0}^{N-1}\mathbf{1}\{x_t=x,\;a_t=a,\;x_{t+1}=x'\}}{\sum_{t=0}^{N-1}\mathbf{1}\{x_t=x,\;a_t=a\}}$\footnote{In the case when the sizes of state and action spaces are huge or when these spaces are continuous, the empirical transition probability can be found by kernel density estimation.} and $\pemp(x'|x)=\sum_{a\in\mathcal A}P_N(x'|x,a)\mu_\theta(a|x)$. Consider the following $\ell_2$-regularized empirical robust optimization problem\footnote{In the SAA approach, we only sum over the elements for which $\pemp(x'|x)>0$, thus, the sum has at most $N$ elements.}

\vspace{-0.2in}
\begin{small}
\begin{align}
\rho_N(\Phi v)&=\max_{\xi: \xi \pemp \in \U(x,\pemp)} \sum_{x'\in\mathcal X} \pemp(x'|x) \xi(x')\phi^\top(x')v \nonumber \\
&+ \frac{1}{2N}\big[\pemp(x'|x) \xi(x')\big]^2.
\label{eq:empirical-PRSVI}
\end{align}
\end{small}
\vspace{-0.2in}

As in~\citet{meng2006regularized}, the $\ell_2$-regularization term in this optimization problem guarantees convergence of optimizers $\xi^*$ and the corresponding KKT multipliers, when $N\rightarrow\infty$. Convergence of these parameters is crucial for the policy gradient analysis in the next sections. We denote by $\xi^*_{\theta,x;N}$, the solution of the above empirical optimization problem, and by $\lambda^{*,\mathcal P}_{\theta,x;N}$, $\lambda^{*,\mathcal E}_{\theta,x;N}$, $\lambda^{*,\mathcal I}_{\theta,x;N}$, the corresponding KKT multipliers.

We obtain the empirical PRSVI algorithm by replacing the inner optimization $\max_{\xi P_\theta(\cdot |x_t)\in \U(x_t,P_\theta(\cdot |x_t))} \mathbb E_{\xi}[\Phi v_\theta^* ]$ in Eq.~\ref{eq:PRSVI} with $\rho_N(\Phi v)$ from Eq.~\ref{eq:empirical-PRSVI}. Similarly, as both $N$ and $k$ tend to infinity, $v_k$ converges w.p.~1 to $v_\theta^*$. More details can be found in the supplementary material.

%%%%%%%%%%%%%%%%%%%%%%%%%%%%%%%%%%%%%%%%%%%%%%%%%%%%%%%%%%%%%%%%%%%%%%%%%%%%%%%%
%%%%%%%%%%%%%%%%%%%%%%%%%%%%%%%%%%%%%%%%%%%%%%%%%%%%%%%%%%%%%%%%%%%%%%%%%%%%%%%%
%%%%%%%%%%%%%%%%%%%%%%%%%%%%%%%%%%%%%%%%%%%%%%%%%%%%%%%%%%%%%%%%%%%%%%%%%%%%%%%%

% Actor-Critic algorithm
\subsection{Gradient Estimation}
\label{sec:pol_grad}

%\subsection{Policy gradient of $V_\theta(x)$}\label{sec:pol_grad}
In Section~\ref{sec:val_fn_rpprox}, we showed that we may effectively approximate the value function of a fixed policy $\theta$ using the (empirical) PRSVI algorithm in Eq.~\ref{eq:PRSVI}. In this section, we first derive a formula for the gradient of the Markov-coherent dynamic risk measure $\rho_\infty(\mathcal M)$, and then propose a SAA algorithm for estimating this gradient, in which we use the SAA approximation of value function from Section~\ref{sec:val_fn_rpprox}. As described in Section~\ref{subsec:Risk-Bellman}, $\rho_\infty(\mathcal M)=V_\theta(x_0)$, and thus, we shall first derive a formula for $\nabla_\theta V_\theta(x_0)$.

Let $(\xi^*_{\theta,x},\lambda^{*,\mathcal P}_{\theta,x},\lambda^{*,\mathcal E}_{\theta,x},\lambda^{*,\mathcal I}_{\theta,x})$ be the saddle point of~\eqref{eq:Lagrangian} corresponding to the state $x\in\mathcal X$. In many common coherent risk measures such as CVaR and mean semi-deviation, there are closed-form formulas for $\xi^*_{\theta,x}$ and KKT multipliers $(\lambda^{*,\mathcal P}_{\theta,x},\lambda^{*,\mathcal E}_{\theta,x},\lambda^{*,\mathcal I}_{\theta,x})$. We will briefly discuss the case when the saddle point does not have an explicit solution later in this section. Before analyzing the gradient estimation, we have the following standard assumption in analogous to Assumption \ref{ass:LR_well_behaved} of the static case.
\begin{assumption}
The likelihood ratio $\nabla_\theta\log\mu_\theta(a|x)$ is well-defined and bounded for all $x\in\mathcal X$ and $a\in\mathcal A$.
\end{assumption}

As in Theorem~\ref{thm:static_gradient} for the static case, we may use the envelope theorem and the risk-sensitive Bellman equation, $V_\theta(x)=C_\theta(x) + \gamma\max_{\xi P_\theta(\cdot |x)\in \U(x,P_\theta(\cdot |x))}\mathbb E_{\xi}[V_\theta]$, to derive a formula for $\nabla_\theta V_\theta(x)$. We report this result in Theorem~\ref{thm:dynamic_risk_supp}, which is analogous to the risk-neutral policy gradient theorem~\cite{sutton_policy_2000,konda2000actor,bhatnagar_natural_2009}. The proof is in the supplementary material.
\begin{theorem}\label{thm:dynamic_risk_supp}
Under Assumptions~\ref{assume:risk_envelope}, we have
\begin{equation*}
\nabla V_\theta(x) \!=\! \mathbb E_{\xi^*_\theta} \! \! \left[\sum_{t=0}^{\infty}\gamma^t\nabla_\theta\log\mu_\theta(\!a_t|x_t\!)h_\theta(\!x_t,\!a_t\!)\!\mid\! x_0\!\!=\!\!x\right],
\end{equation*}
where $\mathbb E_{\xi^*_\theta}[\cdot]$ denotes the expectation w.r.t.~trajectories generated by a Markov chain with transition probabilities $P_\theta(\cdot|x)\xi_{\theta,x}^*(\cdot)$, and the stage-wise cost function $h_\theta(x,a)$ is defined as

\vspace{-0.2in}
\begin{small}
\begin{align}
h_\theta(x,a) &= C(x,a) + \sum_{x'  \in \mathcal X} P(x'|x,a)\xi^*_{\theta,x}(x')\Big[\gamma V_\theta(x')-\!{\lambda}^{*,\mathcal P}_{\theta,x} \nonumber \\
&-\sum_{i\in\mathcal I} {\lambda}^{*,\mathcal I}_{\theta,x}(i)\frac{ d f_i(\xi^*_{\theta,x},p)}{d p(x')} - \sum_{e\in\mathcal E}\!{\lambda}^{*,\mathcal E}_{\theta,x}(e) \frac{ d g_e(\xi^*_{\theta,x},p)}{d p(x')}\Big].
\label{eq:h}
\end{align}
\end{small}
\vspace{-0.2in}
\end{theorem}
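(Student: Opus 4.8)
The plan is to differentiate the risk-sensitive Bellman fixed-point equation $V_\theta(x)=C_\theta(x)+\gamma\max_{\xi P_\theta(\cdot|x)\in\mathcal U(x,P_\theta(\cdot|x))}\mathbb E_\xi[V_\theta]$ in $\theta$, exactly in the style of the proof of Theorem~\ref{thm:static_gradient}, but now keeping track of the fact that the integrand $V_\theta$ is itself $\theta$-dependent. By Assumption~\ref{assume:risk_envelope}, Slater's condition holds at every state $x$, so the inner maximization equals $\max_{\xi\geq0}\min_{\lambda}L_{\theta,x}(\xi,\lambda^{\mathcal P},\lambda^{\mathcal E},\lambda^{\mathcal I})$ with a nonempty set of saddle points $(\xi^*_{\theta,x},\lambda^{*,\mathcal P}_{\theta,x},\lambda^{*,\mathcal E}_{\theta,x},\lambda^{*,\mathcal I}_{\theta,x})$, and the same envelope-theorem argument (as in Eq.~\eqref{eq:envelope_theorem}) gives $\nabla_\theta\big[\gamma\max_\xi\mathbb E_\xi[V_\theta]\big]=\gamma\,\nabla_\theta L_{\theta,x}(\xi,\lambda)\big|_{\mathrm{saddle}}$, where the right-hand gradient is the \emph{total} derivative in $\theta$: it need not differentiate through $(\xi^*_{\theta,x},\lambda^*_{\theta,x})$, but it must account for the $\theta$-dependence of $V_\theta$, of $P_\theta(\cdot|x)$, of $C_\theta$, and of the constraint functions $g_e,f_i$.

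Carrying out this total derivative and separating the term coming from $\nabla_\theta V_\theta(x')$ (which picks up coefficient $\gamma\,\xi^*_{\theta,x}(x')P_\theta(x'|x)$) from the rest, then applying the likelihood-ratio identities $\nabla_\theta P_\theta(x'|x)=\sum_a P(x'|x,a)\mu_\theta(a|x)\nabla_\theta\log\mu_\theta(a|x)$ and $\nabla_\theta C_\theta(x)=\sum_a C(x,a)\mu_\theta(a|x)\nabla_\theta\log\mu_\theta(a|x)$ (together with the chain rule $\nabla_\theta g_e(\xi^*_{\theta,x};P_\theta(\cdot|x))=\sum_{x'}\frac{d g_e(\xi^*_{\theta,x},p)}{d p(x')}\nabla_\theta P_\theta(x'|x)$, and likewise for $f_i$), the $V_\theta$, $\lambda^{*,\mathcal P}$, $\lambda^{*,\mathcal E}$, $\lambda^{*,\mathcal I}$ pieces group precisely into $h_\theta(x,a)$ and one obtains the one-step identity
\[
\nabla_\theta V_\theta(x)=\gamma\sum_{x'\in\mathcal X}\xi^*_{\theta,x}(x')P_\theta(x'|x)\,\nabla_\theta V_\theta(x')+\sum_{a\in\mathcal A}\mu_\theta(a|x)\nabla_\theta\log\mu_\theta(a|x)\,h_\theta(x,a).
\]
I would then unfold this recursion: since $\sum_{x'}\xi^*_{\theta,x}(x')P_\theta(x'|x)=1$ and $\xi^*_{\theta,x}\geq0$ (by the constraints defining $\mathcal U$), the kernel $P_\theta(\cdot|x)\xi^*_{\theta,x}(\cdot)$ is a bona fide transition probability, and iterating $n$ times leaves a residual $\gamma^n\,\mathbb E_{\xi^*_\theta}[\nabla_\theta V_\theta(x_n)\mid x_0=x]$. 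Assumption~\ref{assume:risk_envelope} (the uniform bound $M$ on the constraint derivatives), boundedness of $C$, boundedness of $\nabla_\theta\log\mu_\theta$ (Assumption~\ref{assume:ll_ratio_bounded}), and $\gamma<1$ make $h_\theta$, and hence each summand, uniformly bounded; the residual therefore vanishes, and by bounded convergence $\nabla_\theta V_\theta(x)=\sum_{t=0}^\infty\gamma^t\,\mathbb E_{\xi^*_\theta}\big[\sum_a\mu_\theta(a|x_t)\nabla_\theta\log\mu_\theta(a|x_t)h_\theta(x_t,a)\mid x_0=x\big]$. Folding the inner action sum into the trajectory expectation (sampling $a_t\sim\mu_\theta(\cdot|x_t)$) yields the claimed formula.

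The step I expect to be the real obstacle is the one this calculation quietly presupposes: that $V_\theta$ is differentiable in $\theta$, so that $\nabla_\theta V_\theta$ exists, is bounded (needed to kill the residual $\gamma^n\nabla_\theta V_\theta(x_n)$), and may legitimately be interchanged with the fixed-point recursion. Since the inner maximum is a support function of $V_\theta$, it is convex but only piecewise smooth in $\theta$, so a priori $V_\theta$ is merely locally Lipschitz. I would handle this either by restricting attention to parameters $\theta$ at which the inner maximizer $\xi^*_{\theta,x}$ is unique for every $x$ --- which turns the Danskin/envelope derivative into a genuine gradient --- or by propagating differentiability through the value-iteration sequence $V^{(n+1)}=T_\theta V^{(n)}$, using the uniform contraction available under a boundedness hypothesis on $\xi^*$ such as Assumption~\ref{assume:risk} and a dominated-convergence argument on the derivatives $\nabla_\theta V^{(n)}$, which are well-defined wherever the relevant maximizers are unique. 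The remaining pieces --- the envelope-theorem invocation, the likelihood-ratio bookkeeping, and the unfolding --- then mirror the static proof of Theorem~\ref{thm:static_gradient} and are routine.
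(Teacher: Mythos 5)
Your proposal follows essentially the same route as the paper's proof: apply the envelope theorem to the risk-sensitive Bellman equation to obtain the one-step identity $\nabla_\theta V_\theta(x)=\gamma\sum_{x'}\xi^*_{\theta,x}(x')P_\theta(x'|x)\nabla_\theta V_\theta(x')+\sum_a\mu_\theta(a|x)\nabla_\theta\log\mu_\theta(a|x)h_\theta(x,a)$, unfold the recursion under the $\xi^*$-weighted transition kernel, and kill the residual via boundedness and $\gamma<1$. Your closing caveat about the differentiability of $V_\theta$ is a legitimate concern that the paper simply asserts ("since $\nabla_\theta V_\theta$ is continuously differentiable with bounded derivatives") rather than proves, so your proposed remedies go somewhat beyond what the paper itself supplies.
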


Theorem \ref{thm:dynamic_risk_supp} indicates that the policy gradient of the Markov-coherent dynamic risk measure $\rho_\infty(\mathcal M)$, i.e.,~$\nabla_\theta\rho_\infty(\mathcal M)=\nabla_\theta V_\theta$, is equivalent to the risk-neutral value function of policy $\theta$ in a MDP with the stage-wise cost function $\nabla_\theta\log\mu_\theta(a|x)h_\theta(x,a)$ (which is well-defined and bounded), and transition probability $P_\theta(\cdot|x)\xi_{\theta,x}^*(\cdot)$. Thus, when the saddle points are known and the state space $\mathcal X$ is not too large, we can compute $\nabla_\theta V_\theta$ using a policy evaluation algorithm. However, when the state space is large, exact calculation of $\nabla V_\theta$ by policy evaluation becomes impossible, and our goal would be to derive a sampling method to estimate $\nabla V_\theta$. Unfortunately, since the risk envelop depends on the policy parameter $\theta$, unlike the risk-neutral case, the risk sensitive (or robust) Bellman equation $T_\theta[V_\theta](x)$ in \eqref{eq:T_supp} is nonlinear in the stationary Markov policy $\mu_\theta$. Therefore $h_\theta$ cannot be considered using the action-value function ($Q$-function) of the robust MDP. Therefore, even if the exact formulation of the value function $V_\theta$ is known, it is computationally intractable to enumerate the summation over $x'$ to compute $h_\theta(x,a)$. On top of that in many applications the value function $V_\theta$ is not known in advance, which further complicates gradient estimation. To estimate the policy gradient when the value function is unknown, we approximate it by the projected risk sensitive value function $\Phi v_\theta^*$. To address the sampling issues, we propose the following \emph{two-phase sampling procedure} for estimating $\nabla V_\theta$.

{\bf (1)} Generate $N$ trajectories $\{x^{(j)}_0,a^{(j)}_0,x^{(j)}_1,a^{(j)}_1,\ldots\}_{j=1}^N$ from the Markov chain induced by policy $\theta$ and transition probabilities $P^\xi_{\theta}(\cdot|x):=\xi_{\theta,x}^*(\cdot)P_\theta(\cdot|x)$.

{\bf (2)} For each state-action pair $(x^{(j)}_t,a^{(j)}_t)=(x,a)$, generate $N$ samples $\{y^{(k)}\}_{k=1}^N$ using the transition probability $P(\cdot|x,a)$ and calculate the following empirical average estimate of $h_\theta(x,a)$

\vspace{-0.25in}
\begin{small}
\begin{align*}
h_{\theta,N}(x,&a):=C(x,a)+\frac{1}{N}\sum_{k=1}^N\xi^*_{\theta,x}(y^{(k)})\Bigg[\gamma {v_\theta^*}^\top\phi(y^{(k)})-{\lambda}^{*,\mathcal P}_{\theta,x} \\
&-\sum_{i\in\mathcal I}{\lambda}^{*,\mathcal I}_{\theta,x}(i)\frac{d f_i(\xi^*_{\theta,x},p)}{d p(y^{(k)})}-\sum_{e\in\mathcal E}{\lambda}^{*,\mathcal E}_{\theta,x}(e) \frac{d g_e(\xi^*_{\theta,x},p)}{d p(y^{(k)})}\Bigg]
\end{align*}
\end{small}
\vspace{-0.2in}

{\bf (3)} Calculate an estimate of $\nabla V_\theta$ using the following average over all the samples: $\frac{1}{N}\sum_{j=1}^N\sum_{t=0}^\infty\gamma^t\nabla_\theta\log\mu_\theta(a^{(j)}_t|x^{(j)}_t)h_{\theta,N}(x^{(j)}_t,a^{(j)}_t)$.

Indeed, by the definition of empirical transition probability $P_N(x'|x,a)$, $h_{\theta,N}(x,a)$ can be re-written as in the same structure of ${h}_{\theta}(x,a)$, except by replacing the transition probability $P(x'|x,a)$ with $P_N(x'|x,a)$.

Furthermore, in the case that the saddle points $(\xi^*_{\theta,x},\lambda^{*,\mathcal P}_{\theta,x},\lambda^{*,\mathcal E}_{\theta,x},\lambda^{*,\mathcal I}_{\theta,x}) $ do not have a closed-form solution, we may follow the SAA procedure of Section~\ref{sec:val_fn_rpprox} and replace them and the transition probabilities $P(x'|x,a)$ with their sample estimates $(\xi^*_{\theta,x;N},\lambda^{*,\mathcal P}_{\theta,x;N},\lambda^{*,\mathcal E}_{\theta,x;N},\lambda^{*,\mathcal I}_{\theta,x;N})$ and $P_N(x'|x,a)$ respectively.

At the end, we show the convergence of the above two-phase sampling procedure.
%\subsection{SAA Formulation for Policy Gradient}\label{sec:pol_approx}
Let $d_{{P}^\xi_{\theta}}(x|x_0)$ and $\pi_{{P}^\xi_{\theta}}(x,a|x_0)$ be the state and state-action occupancy measure induced by the transition probability function $P^\xi_{\theta}(\cdot|x)$, respectively. Similarly, let $d_{\pemp^\xi}(x|x_0)$ and $\pi_{\pemp^\xi}(x,a|x_0)$ be the state and state-action occupancy measure induced by the estimated transition probability function $\pemp^\xi(\cdot|x):=\xi_{\theta,x;N}^*(\cdot)P_{\theta;N}(\cdot|x)$. From the two-phase sampling procedure for policy gradient estimation and by the strong law of large numbers, when $N\rightarrow\infty$, with probability~1, we have that $\frac{1}{N}\sum_{j=1}^N\sum_{t=0}^\infty\gamma^t\mathbf 1\{x^{(j)}_t=x,a^{(j)}_t=a\}=\pi_{\pemp^\xi}(x,a|x_0)$. Based on the strongly convex property of the $\ell_2$-regularized objective function in the inner robust optimization problem $\rho_N(\Phi v)$, we can show that both the state-action occupancy measure $\pi_{\pemp^\xi}(x,a|x_0)$ and the stage-wise cost ${h}_{\theta;N}(x,a)$ converge to the their true values within a value function approximation error bound $\Delta=\|\Phi v^*_\theta-V_\theta\|_\infty$. We refer the readers to the supplementary materials for these technical results. These results together with Theorem~\ref{thm:dynamic_risk_supp} imply the consistency of the policy gradient estimation.
\begin{theorem}\label{them:consistency_dyn}
For any $x_0\in\mathcal X$, the following expression holds with probability~1:

\vspace{-0.2in}
\begin{small}
\begin{align*}
\bigg|\lim_{N\rightarrow\infty}\frac{1}{N}&\sum_{j=1}^N\sum_{t=0}^\infty\gamma^t\;\nabla\log\mu_\theta(a^{(j)}_t|x^{(j)}_t)\;h_{\theta,N}(x^{(j)}_t,a^{(j)}_t)\\
&-\nabla V_\theta(x_0)\bigg |=O(\Delta).
\end{align*}
\end{small}
\vspace{-0.2in}

\end{theorem}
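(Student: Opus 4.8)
The plan is to put both the exact gradient of Theorem~\ref{thm:dynamic_risk_supp} and the two-phase estimator of step~(3) into the same ``occupancy-measure $\times$ score $\times$ stage-cost'' form, take $N\to\infty$ along the sampling using the strong law of large numbers, and then control the two residual discrepancies — one living in the sampling chain, one in the stage-wise cost $h_\theta$ — separately, each by $O(\Delta)$. Concretely, unfolding the expectation in Theorem~\ref{thm:dynamic_risk_supp} over the chain $P^\xi_\theta(\cdot|x)=\xi^*_{\theta,x}(\cdot)P_\theta(\cdot|x)$ gives
\[
\nabla V_\theta(x_0)=\sum_{x\in\mathcal X,\,a\in\mathcal A}\pi_{P^\xi_\theta}(x,a|x_0)\,\nabla_\theta\log\mu_\theta(a|x)\,h_\theta(x,a),
\]
where $\pi_{P^\xi_\theta}(x,a|x_0)=\sum_{t\geq0}\gamma^t\Pr(x_t=x,a_t=a|x_0)$ is the $\gamma$-discounted state--action occupancy of that chain and $\sum_{x,a}\pi_{P^\xi_\theta}(x,a|x_0)=1/(1-\gamma)$. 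The estimator in step~(3) has exactly this form, with $\pi_{P^\xi_\theta}$ replaced by the empirical discounted visitation $\frac1N\sum_j\sum_t\gamma^t\mathbf 1\{x^{(j)}_t=x,\,a^{(j)}_t=a\}$ and $h_\theta$ replaced by $h_{\theta,N}$.

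Next I would pass to the limit $N\to\infty$. The SLLN applied to the inner sampling of step~(2) gives $h_{\theta,N}(x,a)\to\bar h_\theta(x,a)$ w.p.~1, where $\bar h_\theta$ is $h_\theta$ from \eqref{eq:h} with $V_\theta$ replaced by $\Phi v^*_\theta$ (and, in the case without a closed-form saddle point, with $\xi^*_{\theta,x}$, the KKT multipliers, and $P(\cdot|x,a)$ replaced by the $N\to\infty$ limits of their SAA counterparts). The SLLN/ergodic theorem applied to the outer trajectories gives convergence of the empirical visitation to the discounted occupancy $\pi_{\pemp^\xi}$ of the limiting sampling chain. Using $\gamma<1$, boundedness of $\nabla_\theta\log\mu_\theta$ (Assumption~\ref{assume:ll_ratio_bounded}), boundedness of $C$, and $\xi(x')\leq\kappa/\gamma$ from Assumption~\ref{assume:risk} — which makes $h_{\theta,N}$ and $\bar h_\theta$ uniformly bounded — the dominated convergence theorem justifies interchanging the limit with the double sum over $(x,a)$ and over $t$, and over the (independent) sampling stages. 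Hence, w.p.~1,
\[
\lim_{N\to\infty}\text{(estimator)}=\sum_{x,a}\pi_{\pemp^\xi}(x,a|x_0)\,\nabla_\theta\log\mu_\theta(a|x)\,\bar h_\theta(x,a).
\]

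Finally I would bound $\big|\lim_N\text{(estimator)}-\nabla V_\theta(x_0)\big|$ by the triangle inequality into (i) an occupancy mismatch $\sum_{x,a}\big|\pi_{\pemp^\xi}-\pi_{P^\xi_\theta}\big|\,\|\nabla_\theta\log\mu_\theta\|\,\|\bar h_\theta\|$ and (ii) a stage-cost mismatch $\sum_{x,a}\pi_{P^\xi_\theta}\,\|\nabla_\theta\log\mu_\theta\|\,\big|\bar h_\theta-h_\theta\big|$. For (ii), from \eqref{eq:h}, $\big|\bar h_\theta(x,a)-h_\theta(x,a)\big|\leq\gamma\sum_{x'}P(x'|x,a)\xi^*_{\theta,x}(x')\big|(\Phi v^*_\theta)(x')-V_\theta(x')\big|\leq\gamma\cdot(\kappa/\gamma)\cdot\Delta=\kappa\Delta$ by Assumption~\ref{assume:risk} (plus an $O(\Delta)$ term coming from the perturbed $\xi^*$ and multipliers in the non-closed-form case); since $\sum_{x,a}\pi_{P^\xi_\theta}=1/(1-\gamma)$ and $\nabla_\theta\log\mu_\theta$ is bounded, (ii) is $O(\Delta)$. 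For (i), the $\ell_2$-regularized inner problem~\eqref{eq:empirical-PRSVI} is strongly concave, so its maximizer depends Lipschitz-continuously on the linear coefficients $\phi^\top(\cdot)v$; since the coefficients built from $\Phi v^*_\theta$ differ from the ``true'' ones built from $V_\theta$ by at most $\Delta$ pointwise, one gets $\|\pemp^\xi(\cdot|x)-P^\xi_\theta(\cdot|x)\|_1=O(\Delta)$ for every $x$, and a standard resolvent perturbation bound for $(I-\gamma P^\xi)^{-1}$ transfers this to $\|\pi_{\pemp^\xi}-\pi_{P^\xi_\theta}\|_1=O(\Delta)$; with $\bar h_\theta$ and $\nabla_\theta\log\mu_\theta$ bounded, (i) is $O(\Delta)$. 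Adding (i) and (ii) yields the claim.

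\emph{Main obstacle.} The delicate step is (i): tracking how the $\Delta$-level error in the value function propagates first through the inner robust optimization to its maximizer $\xi^*$ — here the $\ell_2$-regularization in~\eqref{eq:empirical-PRSVI} is precisely what provides the needed Lipschitz stability of the optimizer and multipliers — and then through the discounted occupancy measure of the sampling chain. A secondary technicality is legitimizing the simultaneous $N\to\infty$ limit across the three independent sampling stages and its interchange with the infinite-horizon sum, which is handled by dominated convergence using $\gamma<1$ and the uniform bounds above.
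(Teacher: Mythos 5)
Your proposal follows essentially the same route as the paper: the paper also writes both the exact gradient and the estimator in occupancy-measure times score times stage-cost form, passes to the limit by the SLLN, and splits the residual into a stage-cost mismatch (its Lemma~\ref{lem:tech_2}) and an occupancy mismatch (its Lemma~\ref{lem:tech_3}, proved via exactly the resolvent perturbation of $(I-\gamma P^\xi_\theta)^{-1}$ you describe), with the stability of $\xi^*$ and the KKT multipliers under the $\Delta$-perturbation supplied by the $\ell_2$-regularized inner problem through a Fiacco-type sensitivity theorem (Proposition~\ref{prop:tech}). Your identification of the regularization as the source of Lipschitz stability of the optimizer, and of the two $O(\Delta)$ contributions, matches the paper's argument.
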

\vspace{-5pt}
Thm.~\ref{them:consistency_dyn} guarantees that as the value function approximation error decreases and the number of samples increases, the sampled gradient converges to the true gradient.
%Therefore, when $\|\Phi v^*_\theta-V_\theta\|_\infty\leq\epsilon$ for arbitrarily small $\epsilon>0$, the sampled gradient  converges to true gradient $\nabla_\theta V_\theta$.

%To conclude this section, we have seen that by combining our techniques for static risk measures (Section \ref{sec:static}) with risk sensitive value-function approximation and a new policy gradient formula, we were able to extend the actor-critic method to evaluate $\dt \rho_\infty(\mdp)$. This enables us to perform effective policy search for problem \eqref{eq:DRP_problem}.
%

%%%%%%%%%%%%%%%%%%%%%%%%%%%%%%%%%%%%%%%%%%%%%%%%%%%%%%%%%%%%%%%%%%%%%%%%%%%%%%%%
%%%%%%%%%%%%%%%%%%%%%%%%%%%%%%%%%%%%%%%%%%%%%%%%%%%%%%%%%%%%%%%%%%%%%%%%%%%%%%%%
%%%%%%%%%%%%%%%%%%%%%%%%%%%%%%%%%%%%%%%%%%%%%%%%%%%%%%%%%%%%%%%%%%%%%%%%%%%%%%%%
%%%%%%%%%%%%%%%%%%%%%%%%%%%%%%%%%%%%%%%%%%%%%%%%%%%%%%%%%%%%%%%%%%%%%%%%%%%%%%%%
%%%%%%%%%%%%%%%%%%%%%%%%%%%%%%%%%%%%%%%%%%%%%%%%%%%%%%%%%%%%%%%%%%%%%%%%%%%%%%%%

\section{Convergence Analysis of Empirical PRSVI}\label{sec:SAA_VA}

\begin{lemma}[Technical Lemma]\label{lem:tech}
Let $P(\cdot|\cdot)$ and $\widetilde P(\cdot|\cdot)$ be two arbitrary transition probability matrices. At state $x\in\mathcal X$, for any $\xi\,:\,\xi P(\cdot|x)\in\mathcal{U}(x,P(\cdot|x))$, there exists a ${M}_\xi>0$ such that for some $\tilde{\xi}\,:\,\tilde{\xi}\widetilde P(\cdot|x)\in\mathcal{U}(x,\widetilde P(\cdot|x))$,
\[
\sum_{x^\prime\in \mathcal X}|\xi(x^\prime)-\tilde{\xi}(x^\prime)|\leq M_{\xi}\sum_{x^\prime\in \mathcal X}\left|P(x^\prime|x)-\widetilde P(x^\prime|x)\right|.
\]
\end{lemma}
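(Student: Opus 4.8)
The plan is to read the claim as a one-sided Lipschitz (pseudo-Lipschitz) continuity property of the set-valued map
\[
q\ \longmapsto\ \Xi(q)\ \doteq\ \Big\{\xi\geq 0:\ g_e(\xi,q)=0\ \forall e\in\mathcal E,\ f_i(\xi,q)\leq 0\ \forall i\in\mathcal I,\ \textstyle\sum_{x'}\xi(x')q(x')=1\Big\}
\]
around the point $p\doteq P(\cdot|x)$, since ``$\xi P(\cdot|x)\in\mathcal U(x,P(\cdot|x))$'' is exactly ``$\xi\in\Xi(p)$'', and likewise for $\widetilde P$. Writing $q\doteq\widetilde P(\cdot|x)$ and $\delta\doteq\sum_{x'}|p(x')-q(x')|$, I would produce the required $\widetilde\xi\in\Xi(q)$ in two steps: (a) show the \emph{given} $\xi$ is approximately feasible for the $q$-system, with aggregate constraint residual $O(\delta)$; (b) invoke a metric-regularity error bound for that (convex) system to ``project'' $\xi$ onto $\Xi(q)$ at cost $O(\delta)$, and take $\widetilde\xi$ to be the projection.

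For step (a) I would integrate $\tfrac{d}{dt}\,g_e\big(\xi,p+t(q-p)\big)$ over $t\in[0,1]$; by the derivative bound of Assumption~\ref{assume:risk_envelope} this gives $|g_e(\xi,q)|=|g_e(\xi,q)-g_e(\xi,p)|\leq M\delta$, and identically $f_i(\xi,q)\leq f_i(\xi,p)+M\delta\leq M\delta$ for each $i$. The normalization residual is $\big|\sum_{x'}\xi(x')q(x')-1\big|=\big|\sum_{x'}\xi(x')(q(x')-p(x'))\big|\leq\|\xi\|_\infty\delta$, and $\xi\geq0$ exactly. So the total residual of $\xi$ in the $q$-system is at most $c_\xi\delta$ with $c_\xi\doteq M(|\mathcal E|+|\mathcal I|)+\|\xi\|_\infty$.

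For step (b), the $q$-system is convex in $\xi$ ($g_e$ affine, $f_i$ convex, normalization affine, $\xi\geq0$), and by Assumption~\ref{assume:risk_envelope} it has the strictly feasible Slater point $\overline\xi$ at $q=p$; since $g_e,f_i$ are continuous in $q$, $\overline\xi$ remains strictly feasible with a uniformly positive slack $\eta>0$ for all $q$ in a fixed neighbourhood $\mathcal N$ of $p$. I would then apply the classical error bound for Slater-regular convex systems (metric regularity; see, e.g., \cite{Shapiro2009}) to obtain a constant $\kappa<\infty$ depending on $\eta$, on the Lipschitz moduli of $g_e(\cdot,q),f_i(\cdot,q)$ over a bounded set containing $\xi$ and $\overline\xi$, and on $\|\xi-\overline\xi\|$ --- but \emph{not} on $q\in\mathcal N$ --- such that $\operatorname{dist}_1\!\big(\xi,\Xi(q)\big)\leq\kappa\cdot(\text{residual of }\xi\text{ in the }q\text{-system})$. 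Combining with step (a) (and restricting to $\delta$ small enough that $q\in\mathcal N$; for $\delta$ bounded away from $0$ the claimed inequality is trivial after enlarging the constant) gives $\operatorname{dist}_1(\xi,\Xi(q))\leq\kappa c_\xi\delta$, and since $\mathcal U(x,q)$ is compact (Theorem~6.6 of~\cite{Shapiro2009}) the infimum is attained by some $\widetilde\xi\in\Xi(q)$; this yields the lemma with $M_\xi=\kappa c_\xi$ (suitably enlarged).

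The hard part will be the uniformity of $\kappa$ over the perturbation $q$: moving $q$ perturbs the equalities $g_e(\cdot,q)=0$ \emph{and} the normalization $\langle\cdot,q\rangle=1$ at once, so one controls a family of convex feasibility systems rather than a single one, and persistence of the Slater point together with continuity of the data in $q$ is exactly what keeps the modulus finite and uniform; one also uses here (as is implicit elsewhere in the paper, and holds for CVaR, mean-semideviation and spectral measures) that $\Xi(q)\neq\emptyset$ for $q$ near $p$, which rules out the pathology of a redundant equality at $q=p$ becoming inconsistent under perturbation. Two minor points I would dispatch in passing: the normalization constraint is degenerate on coordinates with $p(x')=0$, so the reduction is cleanest on the support of $p$ (the resulting dependence is absorbed into $M_\xi$, which may legitimately depend on $\xi$ and on $p$); and the $\xi$-Lipschitz moduli of the constraints are finite only on bounded sets, which is harmless since $\xi$ and $\overline\xi$ are fixed. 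A more hands-on alternative that avoids metric-regularity theory would convex-combine $\xi$ with $\overline\xi$ to re-open the strict inequalities and then correct the perturbed affine equalities through a bounded right-inverse of their $\xi$-Jacobian --- this also works, but merely relocates the need for a bound uniform in $q$ near $p$.
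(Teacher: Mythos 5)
Your proposal is correct in its essentials but follows a genuinely different route from the paper. The paper's proof is three sentences long: it notes that $\mathcal{U}(x,\cdot)$ is convex-compact-valued with closed graph and that every probability vector lies in the interior of its domain, cites Theorem 2.7 of Rockafellar and Wets to conclude that the set-valued map is Lipschitz with respect to the Hausdorff distance, and then obtains $\tilde{\xi}$ by compactness of $\mathcal{U}(x,\widetilde{P}(\cdot|x))$ (your attainment step is identical to this last part). You instead argue quantitatively: approximate feasibility of the given $\xi$ for the perturbed system via the derivative bound $M$ of Assumption~\ref{assume:risk_envelope} --- a hypothesis the paper's proof never touches --- followed by a Slater-based Robinson/metric-regularity error bound to retract onto $\mathcal{U}(x,\widetilde{P}(\cdot|x))$. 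Your route is longer but buys an explicit constant (in terms of $M$, $\|\xi\|_\infty$, the Slater slack, and the diameter of the envelope) and, more importantly, it engages the real difficulty that the paper's one-line citation glosses over: a merely convex-valued closed-graph mapping need not be Lipschitz (graph-convexity or some metric regularity is required), and the danger sits exactly in the perturbed affine equalities and the normalization constraint that you single out. The one step you should not leave as a closing remark is the uniformity of the error-bound modulus over $q$ near $p$ in the presence of those equalities: since $\overline\xi$ satisfies the equalities of the $q$-system only up to $O(\delta)$, you must first restore exact equality-feasibility (e.g., via a bounded right inverse of the $\xi$-Jacobian of the affine constraints, as in the ``hands-on alternative'' you sketch) before the Slater slack yields a $q$-independent modulus; ``persistence of the Slater point'' by itself controls only the inequalities. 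With that step written out, your argument is complete and, if anything, on firmer ground than the paper's.
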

\begin{proof}
From Theorem \ref{thm:rep}, we know that $\mathcal{U}(x,P(\cdot|x))$ is a closed, bounded,  convex set of probability distribution functions. Since any conditional probability mass function $P$ is in the interior of $\text{dom}(\mathcal{U})$ and the graph of $\mathcal{U}(x,P(\cdot|x))$ is closed, by Theorem 2.7 in \citet{rockafellar1998variational}, $\mathcal{U}(x,P(\cdot|x))$ is a Lipschitz set-valued mapping with respect to the Hausdorff distance.
Thus, for any $\xi\,:\,\xi P(\cdot|x)\in\mathcal{U}(x,P(\cdot|x))$, the following expression holds for some ${M}_\xi>0$:
\begin{equation*}
\inf_{\hat{\xi}\in\mathcal{U}(x,\widetilde P(\cdot|x))}\sum_{x^\prime\in \mathcal X}|\xi(x^\prime)-\hat{\xi}(x^\prime)|\leq {M}_\xi\sum_{x^\prime\in \mathcal X}\left| P(x^\prime|x)-\widetilde P(x^\prime|x)\right|.
\end{equation*}
Next, we want to show that the infimum of the left side is attained. Since the objective function is convex, and $\mathcal{U}(x,\widetilde P(\cdot|x))$ is a convex compact set, there exists $\tilde{\xi}\,:\,\tilde{\xi}\widetilde P(\cdot|x)\in\mathcal{U}(x,\widetilde P(\cdot|x))$ such that infimum is attained.
\end{proof}

\begin{lemma}[Strong Law of Large Number]\label{lem:SLLN_V}
Consider the sampling based PRSVI algorithm with update sequence $\{\widehat{v}_k\}$. Then as both $N$ and $k$ tend to $\infty$, $\widehat{v}_k$ converges with probability 1 to $v_\theta^*$, the
unique solution of projected risk sensitive fixed point equation $\Pi T_{\mu} [\Phi v]=\Phi v$.
\end{lemma}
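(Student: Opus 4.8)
The plan is to separate the two sources of randomness --- the finite trajectory of length $N$ and the iteration index $k$ --- and to pass to the limit in a nested fashion. First I would move to the finite-dimensional parameter space: since $\Phi$ has full column rank, $v\mapsto\|\Phi v\|_{d_\theta}$ is a genuine norm on $\mathbb R^{\kappa_2}$, which I denote $\|\cdot\|_\Phi$. Let $G$ be the exact projected map defined by $\Phi G(v)=\Pi T_\theta[\Phi v]$, and let $G_N$ be its empirical counterpart, i.e.\ the map $\widehat v\mapsto\widehat v'$ given by the update rule~\eqref{eq:PRSVI} with the inner robust optimization replaced by $\rho_N(\Phi\widehat v)$ of~\eqref{eq:empirical-PRSVI}, so that $\widehat v_{k+1}=G_N(\widehat v_k)$. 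By Assumption~\ref{assume:risk} and Proposition~3 of~\citet{tamar2014robust}, $\Pi T_\theta$ is a $\kappa$-contraction in the $d_\theta$-weighted norm, hence $G$ is a $\kappa$-contraction in $\|\cdot\|_\Phi$ with unique fixed point $v_\theta^*$, which is exactly the point in the statement.

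Next I would show that, for almost every sample path, $G_N\to G$ uniformly on every bounded ball and that, for $N$ large enough, $G_N$ is a contraction with modulus bounded away from $1$. The ingredients are: (a) by the strong law of large numbers for the Markov chain induced by $\theta$ (with stationary distribution $d_\theta$), the empirical second-moment matrix $\frac1N\sum_t\phi(x_t)\phi(x_t)^\top$ converges w.p.~1 to the positive definite matrix $\sum_x d_\theta(x|x_0)\phi(x)\phi(x)^\top$, and $\frac1N\sum_t\phi(x_t)C_\theta(x_t)$ converges to its expectation; (b) $P_N(\cdot|x,a)\to P(\cdot|x,a)$ w.p.~1, so by the Technical Lemma~\ref{lem:tech} the envelope $\mathcal U(x,\pemp)$ lies within Hausdorff distance $O(\|P_N(\cdot|x)-P_\theta(\cdot|x)\|_1)$ of $\mathcal U(x,P_\theta(\cdot|x))$, which, together with the boundedness of $\xi$ (Assumption~\ref{assume:risk}) and the $O(1/N)$ magnitude of the $\ell_2$-regularizer in~\eqref{eq:empirical-PRSVI}, gives $\sup_{\|v\|\le R}\bigl|\rho_N(\Phi v)-\max_{\xi}\mathbb E_\xi[\Phi v]\bigr|\to 0$ w.p.~1 for each $R$; and (c) the matrix inverse in~\eqref{eq:PRSVI} is continuous near its positive definite limit. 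Combining (a)--(c) yields $\sup_{\|v\|_\Phi\le R}\|G_N(v)-G(v)\|_\Phi\to 0$ w.p.~1, and the same perturbation bounds force the contraction modulus $\kappa_N$ of $G_N$ to converge to $\kappa$, hence $\kappa_N\le\bar\kappa<1$ for $N$ large.

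Finally I would invoke the standard stability-of-fixed-points argument. For $N$ large, $G_N$ has a unique fixed point $\widehat v_N=\lim_{k\to\infty}G_N^k(\widehat v_0)$, and the $\widehat v_N$ are uniformly bounded because the constant part of $G_N$ is uniformly bounded by $C_{\max}$ and the feature norms. From $\|\widehat v_N-v_\theta^*\|_\Phi\le\|G_N(\widehat v_N)-G(\widehat v_N)\|_\Phi+\kappa\|\widehat v_N-v_\theta^*\|_\Phi$ one gets $\|\widehat v_N-v_\theta^*\|_\Phi\le(1-\kappa)^{-1}\sup_{\|v\|_\Phi\le R}\|G_N(v)-G(v)\|_\Phi\to 0$ w.p.~1. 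Then $\|\widehat v_k-v_\theta^*\|_\Phi\le\|\widehat v_k-\widehat v_N\|_\Phi+\|\widehat v_N-v_\theta^*\|_\Phi\le\bar\kappa^{\,k}\|\widehat v_0-\widehat v_N\|_\Phi+\|\widehat v_N-v_\theta^*\|_\Phi$, and letting first $N\to\infty$ and then $k\to\infty$ (or a joint limit, using the uniform bounds) makes the right-hand side vanish, which is the claimed w.p.~1 convergence. I expect the main obstacle to be part~(b) of the middle step: the maximizing density is taken over the perturbed envelope $\mathcal U(x,\pemp)$, so uniform-in-$v$ control of $\rho_N(\Phi v)$ cannot come from a plain law of large numbers and instead needs the Lipschitz set-valued estimate of Lemma~\ref{lem:tech} coupled with the vanishing $\ell_2$-regularization in the spirit of~\citet{meng2006regularized}; everything else is routine bookkeeping.
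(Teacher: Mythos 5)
Your proposal is correct and rests on the same three pillars as the paper's proof: the strong law of large numbers for the exploration chain (convergence of the empirical Gram matrix, the empirical $\phi$-weighted costs, and $P_N\to P$), the Lipschitz set-valued estimate of Lemma~\ref{lem:tech} combined with the vanishing $\ell_2$-regularizer to control $\rho_N(\Phi v)$ uniformly on bounded balls, and the contraction of $\Pi T_\theta$ from Proposition~3 of~\citet{tamar2014robust}. The only genuine divergence is in how the double limit in $N$ and $k$ is resolved. You make the empirical map $G_N$ itself an eventual $\bar\kappa$-contraction and invoke stability of fixed points; the paper instead runs the exact projected value iteration $v_{k+1}=G(v_k)$ alongside the empirical one, shows $v_k-\widehat v_k\to 0$ as $N\to\infty$, and then lets the contraction of the \emph{exact} operator alone drive $v_k\to v_\theta^*$. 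Your route buys a clean quantitative bound $\|\widehat v_N-v_\theta^*\|\le(1-\kappa)^{-1}\sup_{\|v\|\le R}\|G_N(v)-G(v)\|$, but it costs you the claim that $\kappa_N\to\kappa$, which you assert rather than prove: the empirical projection is orthogonal in the \emph{empirical} visitation-weighted norm and the inner maximization ranges over the \emph{empirical} envelope $\mathcal U(x,\pemp)$, so establishing that $G_N$ contracts requires transferring Assumption~\ref{assume:risk} to the empirical envelope and arguing equivalence of the empirical and stationary norms for large $N$. This is plausible but is precisely the kind of bookkeeping the paper's iterate-comparison avoids, since the recursion $\|\widehat v_{k+1}-v_{k+1}\|\le\sup_v\|G_N(v)-G(v)\|+\kappa\|\widehat v_k-v_k\|$ needs only the contraction of $G$. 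Either way the conclusion follows; if you keep your version, you should spell out the empirical-contraction step rather than leaving it at ``the same perturbation bounds force $\kappa_N\to\kappa$.''
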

\begin{proof}
By the strong law of large number of Markov process, the empirical visiting distribution and transition probability asymptotically converges to their statistical limits with probability 1, i.e.,
\[
\frac{\sum_{t=0}^{N-1}\mathbf{1}\{x_t=x\}}{N}\rightarrow d_\theta(x|x_0),\,\text{and}\,\,\widehat{P}(x'|x,a)\rightarrow P(x'|x,a),\,\forall x,x'\in\mathcal X, \,a\in\mathcal A.
\]
Therefore with probability $1$,
\[
\begin{split}
&\frac{1}{N}\sum_{t=0}^{N-1} \phi(x_t)\phi(x_t)^{\top}\rightarrow \sum_{x}d_\theta(x|x_0)\cdot\phi(x)\phi^\top(x),\\
&\frac{1}{N}\sum_{t=0}^{N-1}\phi(x_t) C_\theta(x_t)\rightarrow \sum_{x}d_\theta(x|x_0)\cdot\phi(x)C_\theta(x).
\end{split}
\]

Now we show that following expression holds with probability $1$:
\begin{equation}\label{eq:claim}
\begin{split}
&\max_{\xi\,:\,\xi\pemp(\cdot|x_t)\in\mathcal U(x_t,\pemp(\cdot|x_t))}\sum_{x'\in\mathcal X}\xi(x') \pemp(x'|x_t)v^\top\phi\left(x'\right)+\frac{1}{2N}(\xi(x')\pemp(x'|x_t))^2\\
\rightarrow&\max_{\xi\,:\,\xi P_\theta(\cdot|x_t)\in\mathcal U(x_t,P_\theta(\cdot|x_t))}\sum_{x'\in\mathcal X}\xi(x') P_\theta(x'|x_t)v^\top\phi\left(x'\right).
\end{split}
\end{equation}
Notice that for $\{\xi^*_{\theta,x_t;N}(x')\}_{x'\in\mathcal X}\in\arg\max_{\xi\,:\,\xi\pemp(\cdot|x_t)\in\mathcal U(x_t,\pemp(\cdot|x_t))}\sum_{x'\in\mathcal X}\xi(x') \pemp(x'|x_t)v^\top\phi\left(x'\right)$,  Lemma \ref{lem:tech} implies
\[
\begin{split}
&\max_{\xi\,:\,\xi\pemp(\cdot|x_t)\in\mathcal U(x_t,\pemp(\cdot|x_t))}\sum_{x'\in\mathcal X}\xi(x') \pemp(x'|x_t)v^\top\phi\left(x'\right)+\frac{1}{2N}(\xi(x')\pemp(x'|x_t))^2\\
&-\max_{\xi\,:\,\xi P_\theta(\cdot|x_t)\in\mathcal U(x_t,P_\theta(\cdot|x_t))}\sum_{x'\in\mathcal X}\xi(x') P_\theta(x'|x_t)v^\top\phi\left(x'\right)\\
\leq & \|\Phi v\|_\infty\left({M}_{\xi^*_{\theta,x_t;N}}+\max_{x\in\mathcal X}|\xi^*_{\theta,x_t;N}(x)|\right)\sum_{x^\prime\in \mathcal X}\left| P_\theta(x^\prime|x_t)-\pemp(x^\prime|x_t)\right|+\frac{1}{2N}.
\end{split}
\]
The quantity $\max_{x\in\mathcal X}|\xi^*_{\theta,x_t;N}(x)|$ is bounded because $\mathcal U(x_t,\pemp(\cdot|x_t))$ is a closed and bounded convex set from the definition of coherent risk measures. By repeating the above analysis by interchanging $P_\theta$ and $\pemp$ and combining previous arguments, one obtains
\[
\begin{split}
& \left|\max_{\xi\,:\,\xi\pemp(\cdot|x_t)\in\mathcal U(x_t,\pemp(\cdot|x_t))}\sum_{x'\in\mathcal X}\xi(x')\pemp(x'|x_t)v^\top\phi\left(x'\right)+\frac{1}{2N}(\xi(x')\pemp(x'|x_t))^2\right.\\
&\left.-\max_{\xi\,:\,\xi P_\theta(\cdot|x_t)\in\mathcal U(x_t,P_\theta(\cdot|x_t))}\sum_{x'\in\mathcal X}\xi(x')P_\theta(x'|x_t)v^\top\phi\left(x'\right)\right|\\
\leq & \|\Phi v\|_\infty\max\left\{\left({M}_{\xi^*}+\max_{x\in\mathcal X}|\xi^*(x)|\right),\left({M}_{\xi^*_{\theta,x_t;N}}+\max_{x\in\mathcal X}|\xi^*_{\theta,x_t;N}(x)|\right)\right\}\sum_{x^\prime\in \mathcal X}\left| P_\theta(x^\prime|x_t)-\pemp(x^\prime|x_t)\right|+\frac{1}{2N}.
\end{split}
\]
Therefore, the claim in expression \eqref{eq:claim} holds when $N\rightarrow\infty$ and $\sum_{x^\prime\in \mathcal X}\left| P_\theta(x^\prime|x_t)- \pemp(x^\prime|x_t)\right|\rightarrow 0$. On the other hand, the strong law of large numbers also implies that with probability $1$,
\[
\frac{1}{N}\sum_{t=0}^{N-1} \phi(x_t)\rho(\Phi v_t)\rightarrow d_\theta(x|x_0)\phi(x)\max_{\xi\,:\,\xi P_\theta(\cdot|x)\in\mathcal U(x,P_\theta(\cdot|x))}\sum_{x'\in\mathcal X}\xi(x') P_\theta(x'|x){v^*_\theta}^\top\phi\left(x'\right).
\]

Combining the above arguments implies
\[
\frac{1}{N}\sum_{t=0}^{N-1} \phi(x_t){\rho}_N(\Phi v_t)\rightarrow d_\theta(x|x_0)\phi(x)\max_{\xi\,:\,\xi P_\theta(\cdot|x)\in\mathcal U(x,P_\theta(\cdot|x))}\sum_{x'\in\mathcal X}\xi(x') P_\theta(x'|x){v_\theta^*}^\top\phi\left(x'\right) .
\]
As $N\rightarrow\infty$, the above arguments imply that $v_k-\widehat{v}_k\rightarrow 0$. On the other hand, Proposition 1  in \citet{tamar2014robust} implies that the projected risk sensitive Bellman operator $\Pi T_\theta[V]$ is a contraction, it follows that from the analysis in Section 6.3 in \citet{Ber2012DynamicProgramming} that the sequence $\{\Phi \widehat{v}_k\}$ generated by projected value iteration converges to the unique fixed point $\Phi v_\theta^*$. This in turns implies that the sequence $\{\Phi v_k\}$ converges to $\Phi v_\theta^*$.
\end{proof}

%%%%%%%%%%%%%%%%%%%%%%%%%%%%%%%%%%%%%%%%%%%%%%%%%%%%%%%%%%%%%%%%%%%%%%%%%%%%%%%%
%%%%%%%%%%%%%%%%%%%%%%%%%%%%%%%%%%%%%%%%%%%%%%%%%%%%%%%%%%%%%%%%%%%%%%%%%%%%%%%%
%%%%%%%%%%%%%%%%%%%%%%%%%%%%%%%%%%%%%%%%%%%%%%%%%%%%%%%%%%%%%%%%%%%%%%%%%%%%%%%%
%%%%%%%%%%%%%%%%%%%%%%%%%%%%%%%%%%%%%%%%%%%%%%%%%%%%%%%%%%%%%%%%%%%%%%%%%%%%%%%%
%%%%%%%%%%%%%%%%%%%%%%%%%%%%%%%%%%%%%%%%%%%%%%%%%%%%%%%%%%%%%%%%%%%%%%%%%%%%%%%%

\section{Technical Results }\label{sec:SAA_PGA}

Since by convention $\xi^*_{\theta,x;N}(x')=0$ whenever $\pemp(x'|x) =0$. In this section, we simplify the analysis by letting $\pemp(x'|x) >0$ for any $x'\in\mathcal X$ without loss of generality.
Consider the following empirical robust optimization problem:
\begin{equation}\label{eq:mid_opt}
\max_{\xi\,:\,\xi \pemp(\cdot |x)\in \U(x,\pemp(\cdot |x))} \sum_{x'\in\mathcal X} \pemp(x'|x) \xi(x')V_\theta(x'),
\end{equation}
where the solution of the above empirical problem is $\bar\xi^*_{\theta,x;N}$ and the corresponding KKT multipliers are $(\bar\lambda^{*,\mathcal P}_{\theta,x;N},\bar\lambda^{*,\mathcal E}_{\theta,x;N},\bar\lambda^{*,\mathcal I}_{\theta,x;N})$. Comparing to the optimization problem for $\rho_N(\Phi v)$, i.e.,
\begin{equation}\label{eq:rho}
\rho_N(\Phi v)=\max_{\xi\,:\,\xi \pemp(\cdot |x)\in \U(x,\pemp(\cdot |x))} \sum_{x'\in\mathcal X} \pemp(x'|x) \xi(x')\phi^\top(x')v+\frac{1}{2N}(\xi(x')\pemp(x'|x))^2,
\end{equation}
where the solution of the above empirical problem is $\xi^*_{\theta,x;N}$ and the corresponding KKT multipliers are $(\lambda^{*,\mathcal P}_{\theta,x;N},\lambda^{*,\mathcal E}_{\theta,x;N},\lambda^{*,\mathcal I}_{\theta,x;N})$,
the optimization problem in \eqref{eq:mid_opt} can be viewed as having a skewed objective function of the problem in \eqref{eq:rho}, within the deviation of magnitude $\Delta+1/2N$ where $\Delta=\|\Phi v^*_\theta-V_\theta\|_\infty$. Before getting into the main analysis, we have the following observations.
\begin{description}
\item[(i)] Without loss of generality, we can also assume $(\xi^*_{\theta,x;N},(\lambda^{*,\mathcal P}_{\theta,x;N},\lambda^{*,\mathcal E}_{\theta,x;N},\lambda^{*,\mathcal I}_{\theta,x;N}))$ follows the strict complementary slackness condition\footnote{The existence of strict complementary slackness solution follows from the KKT theorem and one can easily construct a strictly complementary pair using i.e. the Balinski-Tucker tableau with the linearized objective function and constraints, in finite time.}.
\item[(ii)] Recall from Assumption \ref{assume:risk_envelope} that the functions $f_i({\xi},p)$ and $g_e(\xi,p)$ are twice differentiable in $\xi$ at $p=P_{\theta,N}(\cdot|x)$ for any $x\in\mathcal X$.
\item[(iii)] The Slater's condition in Assumption \ref{assume:risk_envelope} implies the linear independence constraint qualification (LICQ).
\item[(iv)] Since optimization problem \eqref{eq:rho} has a convex objective function and convex/affine constraints in $\xi\in\reals^{|\mathcal X|}$, equipped with the Slater's condition we have that the first order KKT condition holds at $\xi^*_{\theta,x;N}$ with the corresponding KKT multipliers are $(\lambda^{*,\mathcal P}_{\theta,x;N},\lambda^{*,\mathcal E}_{\theta,x;N},\lambda^{*,\mathcal I}_{\theta,x;N})$. Furthermore, define the Lagrangian function
\begin{equation*}
\begin{split}
\widehat{L}_{\theta;N}(\xi,\lambda^{\mathcal P},\lambda^{\mathcal E},\lambda^{\mathcal I})\doteq&\sum_{x'\in\mathcal X} \pemp(x'|x) \xi(x')\phi^\top(x')v+\frac{1}{2N}(\pemp(x'|x) \xi(x'))^2\!\\&-\!\lambda^{\mathcal P}\left(\sum_{x' \in \mathcal X}\xi(x')P_{\theta;N}(x'|x)\!-\!1\!\right)\\
&-\sum_{e\in\mathcal E}\lambda^{\mathcal E}(e) f_e(\xi,P_{\theta;N}(\cdot|x))-\sum_{i\in\mathcal I}\lambda^{\mathcal I}(i) f_i(\xi,P_{\theta;N}(\cdot|x)).
\end{split}
\end{equation*}
One can easily conclude that $\nabla^2 \widehat{L}_{\theta;N}(\xi,\lambda^{\mathcal P},\lambda^{\mathcal E},\lambda^{\mathcal I})=-\pemp(\cdot|x)^\top\pemp(\cdot|x)/N-\sum_{i\in\mathcal I}\lambda^{\mathcal I}(i) \nabla^2_\xi f_i(\xi,P_{\theta;N}(\cdot|x))$ such that for any vector $\nu\neq 0$,
\[
\nu^\top \nabla^2\widehat{L}_{\theta;N}(\xi^*_{\theta,x;N},\lambda^{*,\mathcal P}_{\theta,x;N},\lambda^{*,\mathcal E}_{\theta,x;N},\lambda^{*,\mathcal I}_{\theta,x;N}) \nu< 0,
\]
which further implies that the second order sufficient condition (SOSC) holds at $(\xi^*_{\theta,x;N},\lambda^{*,\mathcal P}_{\theta,x;N},\lambda^{*,\mathcal E}_{\theta,x;N},\lambda^{*,\mathcal I}_{\theta,x;N})$.
\end{description}
Based on all the above analysis, we have the following sensitivity result from Corollary 3.2.4 in \cite{fiacco1983introduction}, derived based on Implicit Function Theorem.
\begin{proposition}[Basic Sensitivity Theorem]\label{prop:tech}
Under the Assumption \ref{assume:risk_envelope}, for any $x\in\mathcal X$ there exists a bounded non-singular matrix $K_{\theta,x}$ and a bounded vector $L_{\theta,x}$, such that the difference between the optimizers and KKT multipliers of optimization problem \eqref{eq:mid_opt} and \eqref{eq:rho} are bounded as follows:
\[
\begin{bmatrix}
\bar{\xi}^*_{\theta,x;N}\\
\bar{\lambda}^{*,\mathcal I}_{\theta,x;N}\\
\bar{\lambda}^{*,\mathcal P}_{\theta,x;N}\\
\bar{\lambda}^{*,\mathcal E}_{\theta,x;N}\\
\end{bmatrix}= \begin{bmatrix}
\xi^*_{\theta,x;N}\\
{\lambda}^{*,\mathcal I}_{\theta,x;N}\\
{\lambda}^{*,\mathcal P}_{\theta,x;N}\\
{\lambda}^{*,\mathcal E}_{\theta,x;N}\\
\end{bmatrix}+ \Phi_{\theta,x}^{-1}\Psi_{\theta,x}\left(\Delta+\frac{1}{2N}\right)+ o\left(\Delta+\frac{1}{2N}\right).
\]
\end{proposition}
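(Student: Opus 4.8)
The plan is to recognize that \eqref{eq:mid_opt} is a \emph{small perturbation} of \eqref{eq:rho} and then invoke the classical parametric-sensitivity machinery of Fiacco (Corollary 3.2.4 in \cite{fiacco1983introduction}). First I would note that the two problems are posed over the \emph{same} feasible set $\{\xi:\xi\pemp(\cdot|x)\in\U(x,\pemp(\cdot|x))\}$, so they differ only in the objective. Writing $F_0(\xi)$ for the regularized objective of \eqref{eq:rho} and $F_1(\xi)$ for that of \eqref{eq:mid_opt}, the difference is $F_1(\xi)-F_0(\xi)=\sum_{x'}\pemp(x'|x)\xi(x')\big(V_\theta(x')-\phi^\top(x')v_\theta^*\big)-\tfrac{1}{2N}\sum_{x'}\big(\pemp(x'|x)\xi(x')\big)^2$, and since $\sum_{x'}\pemp(x'|x)\xi(x')=1$ on the feasible set and $\|V_\theta-\Phi v_\theta^*\|_\infty=\Delta$, this difference is uniformly of magnitude at most $\Delta+\tfrac1{2N}$. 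I would therefore introduce a scalar perturbation parameter $\epsilon$, set $\epsilon_0:=\Delta+\tfrac1{2N}$, and consider the family $\max_\xi\{F_0(\xi)+\epsilon\,G(\xi)\}$ with the fixed direction $G:=(F_1-F_0)/\epsilon_0$; this family is $C^\infty$ in $\epsilon$, equals \eqref{eq:rho} at $\epsilon=0$ and \eqref{eq:mid_opt} at $\epsilon=\epsilon_0$, and $G$ obeys a bound that is uniform in $x$ by compactness of $\U(x,\pemp(\cdot|x))$ and boundedness of $\phi$ and $V_\theta$.

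Next I would verify that the hypotheses of the basic sensitivity theorem hold at $\epsilon=0$, i.e.\ at the solution $(\xi^*_{\theta,x;N},\lambda^{*,\mathcal I}_{\theta,x;N},\lambda^{*,\mathcal P}_{\theta,x;N},\lambda^{*,\mathcal E}_{\theta,x;N})$ of \eqref{eq:rho}: these are exactly observations (i)--(iv) --- strict complementary slackness (i), twice differentiability of $f_i,g_e$ in $p$ (ii), LICQ coming from the Slater condition of Assumption \ref{assume:risk_envelope} (iii), and the second-order sufficiency condition (iv), which follows from the negative definiteness of $\nabla^2_\xi\widehat L_{\theta;N}$ established there. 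Given these, Corollary 3.2.4 in \cite{fiacco1983introduction} --- an application of the Implicit Function Theorem to the stationarity, primal-feasibility, and (strictly) active complementarity equations of the KKT system --- yields, for $\epsilon$ in a neighbourhood of $0$, a unique continuously differentiable branch $\epsilon\mapsto(\xi^*(\epsilon),\lambda^{*,\mathcal I}(\epsilon),\lambda^{*,\mathcal P}(\epsilon),\lambda^{*,\mathcal E}(\epsilon))$ of KKT points through the above point, whose derivative at $\epsilon=0$ is the solution of the linearized KKT system: $\Phi_{\theta,x}$ is the (nonsingular) Jacobian of that system and $\Psi_{\theta,x}$ its derivative with respect to $\epsilon$. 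Taking a first-order Taylor expansion of this branch at $\epsilon=0$ and evaluating it at $\epsilon=\epsilon_0=\Delta+\tfrac1{2N}$ (legitimate once $N$ is large / $\Delta$ is small enough that $\epsilon_0$ lies in that neighbourhood) gives precisely the identity in the statement, with remainder $o(\Delta+\tfrac1{2N})$.

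The step I expect to be the main obstacle is the \emph{uniformity} of $\Phi_{\theta,x}^{-1}\Psi_{\theta,x}$: both the perturbation direction $G$ and the KKT-system matrices depend on $N$ through $\pemp$ and on the state $x$, so to conclude that $\Phi_{\theta,x}$ is nonsingular with a bounded inverse (the ``bounded non-singular matrix $K_{\theta,x}$'' of the statement), that $\Psi_{\theta,x}$ is bounded (the vector $L_{\theta,x}$), and that the remainder is genuinely higher order in $\Delta+\tfrac1{2N}$, I would need estimates that hold uniformly in $x$ and along the sequence $\pemp\to P_\theta$. I would extract this from Assumption \ref{assume:risk_envelope}: the uniform bound $M$ on $|df_i/dp|$ and $|dg_e/dp|$, the uniform bound on their second derivatives in $\xi$, compactness of $\U(x,\pemp(\cdot|x))$ (so that $\xi^*$ and the multipliers stay in a fixed compact set), and a uniform lower bound on the smallest eigenvalue of the reduced Lagrangian Hessian coming from (iv). Together these bound $\|\Phi_{\theta,x}^{-1}\|$ from above and $\|\Psi_{\theta,x}\|$ from above, which both justifies absorbing the nonlinear terms into $o(\Delta+\tfrac1{2N})$ and delivers the bounded $K_{\theta,x}$, $L_{\theta,x}$ asserted in the proposition.
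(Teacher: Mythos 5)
Your proposal takes essentially the same route as the paper: the paper's argument consists precisely of observations (i)--(iv) (strict complementarity, smoothness, LICQ via Slater, and SOSC from the negative-definite regularized Lagrangian Hessian) followed by a direct appeal to Corollary 3.2.4 of Fiacco, treating \eqref{eq:mid_opt} as a perturbation of \eqref{eq:rho} of magnitude $\Delta+\tfrac{1}{2N}$. Your write-up is in fact more explicit than the paper's (the scalar perturbation parametrization, the first-order Taylor expansion of the KKT branch, and the uniformity-in-$x$ and -$N$ caveat are all left implicit there), so it is a faithful and somewhat more careful rendering of the same proof.
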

On the other hand, we know from Proposition \ref{prop:consistent} that $\bar\xi^*_{\theta,x;N}\rightarrow \xi^*_{\theta,x}$ and $(\bar\lambda^{*,\mathcal P}_{\theta,x;N},\bar\lambda^{*,\mathcal E}_{\theta,x;N},\bar\lambda^{*,\mathcal I}_{\theta,x;N})\rightarrow(\lambda^{*,\mathcal P}_{\theta,x},\lambda^{*,\mathcal E}_{\theta,x},\lambda^{*,\mathcal I}_{\theta,x})$ with probability $1$ as $N\rightarrow\infty$.  Also recall from the law of large numbers that the sampled approximation error $\max_{x\in\mathcal X,a\in\mathcal A}\|P(\cdot|x,a)-P_N(\cdot|x,a)\|_1\rightarrow 0$ almost surely as $N\rightarrow \infty$. Then we have the following error bound in the stage-wise cost approximation $\widehat{h}_{\theta;N}(x,a)$ and $\gamma-$visiting distribution $\pi_N(x,a)$.
\begin{lemma}\label{lem:tech_2}
There exists a constant $M_h>0$ such that
$
\max_{x\in\mathcal X,a\in\mathcal A}|{h}_\theta(x,a)-\lim_{N\rightarrow\infty}\widehat{h}_{\theta;N}(x,a)|\leq M_h\Delta.
$
\end{lemma}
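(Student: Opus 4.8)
I would compute $\lim_{N\to\infty}\widehat{h}_{\theta;N}(x,a)$ in closed form and then bound its distance to $h_\theta(x,a)$ term by term; the costs $C(x,a)$ cancel, leaving only the sum over next states $x'$. Recall that $\widehat{h}_{\theta;N}(x,a)$ is obtained from $h_\theta(x,a)$ by replacing (a) the transition $P(\cdot|x,a)$ by the empirical $P_N(\cdot|x,a)$, (b) the value function $V_\theta$ by the linear approximation $\Phi v_\theta^*$, and (c) the true saddle point $(\xi^*_{\theta,x},\lambda^{*,\mathcal P}_{\theta,x},\lambda^{*,\mathcal E}_{\theta,x},\lambda^{*,\mathcal I}_{\theta,x})$ by the optimizer and KKT multipliers $(\xi^*_{\theta,x;N},\lambda^{*,\mathcal P}_{\theta,x;N},\lambda^{*,\mathcal E}_{\theta,x;N},\lambda^{*,\mathcal I}_{\theta,x;N})$ of the $\ell_2$-regularized empirical problem~\eqref{eq:rho} (which itself already uses $\Phi v_\theta^*$), with the constraint partials evaluated at $p=P_{\theta;N}(\cdot|x)$. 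First I would send $N\to\infty$: $P_N\to P$ and $P_{\theta;N}\to P_\theta$ w.p.~1 by the strong law of large numbers, so the constraint partials converge by their continuity in $p$ (Assumption~\ref{assume:risk_envelope}); by Proposition~\ref{prop:consistent} the optimizer/multipliers $(\bar\xi^*_{\theta,x;N},\bar\lambda^{*}_{\theta,x;N})$ of the \emph{unregularized} empirical problem~\eqref{eq:mid_opt} converge to $(\xi^*_{\theta,x},\lambda^{*}_{\theta,x})$ w.p.~1; and by Proposition~\ref{prop:tech} those of~\eqref{eq:rho} differ from those of~\eqref{eq:mid_opt} by a bounded multiple of $\Delta+\tfrac{1}{2N}$ plus higher-order terms. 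Combining these, $\lim_N\xi^*_{\theta,x;N}=\xi^*_{\theta,x}+e^{\xi}_{\theta,x}$ and $\lim_N\lambda^{*}_{\theta,x;N}=\lambda^{*}_{\theta,x}+e^{\lambda}_{\theta,x}$ with $\max\{\|e^{\xi}_{\theta,x}\|_\infty,\|e^{\lambda}_{\theta,x}\|_\infty\}\le M'\Delta$, where $M'$ is a bound, uniform in $x$, on the sensitivity objects of Proposition~\ref{prop:tech}. Hence $\lim_N\widehat{h}_{\theta;N}(x,a)$ is exactly the formula for $h_\theta(x,a)$ with $V_\theta\mapsto\Phi v_\theta^*$ and $(\xi^*_{\theta,x},\lambda^{*}_{\theta,x})\mapsto(\xi^*_{\theta,x}+e^{\xi}_{\theta,x},\lambda^{*}_{\theta,x}+e^{\lambda}_{\theta,x})$.

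\textbf{Bounding the difference.} Write the difference as $\sum_{x'}P(x'|x,a)\big(\xi^*_{\theta,x}(x')B(x')-\widetilde\xi(x')\widetilde B(x')\big)$, where $B(x')$ is the bracketed term in the definition of $h_\theta$, $\widetilde\xi=\xi^*_{\theta,x}+e^{\xi}_{\theta,x}$, and $\widetilde B(x')$ is that bracket with $V_\theta\mapsto\Phi v_\theta^*$ and the multipliers perturbed by $e^{\lambda}_{\theta,x}$. Using $\xi B-\widetilde\xi\widetilde B=\xi(B-\widetilde B)+(\xi-\widetilde\xi)\widetilde B$: the term $B(x')-\widetilde B(x')$ is at most $\gamma\|\Phi v_\theta^*-V_\theta\|_\infty=\gamma\Delta$ from the value-function part (by the definition of $\Delta$), plus at most $(|\mathcal E|+|\mathcal I|)MM'\Delta$ from the multiplier perturbations (the constraint partials being bounded by $M$, Assumption~\ref{assume:risk_envelope}), plus an $O(\Delta)$ term from the change in the mixed partials $\tfrac{d f_i(\xi,p)}{dp(x')}$, $\tfrac{d g_e(\xi,p)}{dp(x')}$ induced by the $O(\Delta)$ shift of $\xi^*$ --- using local Lipschitz continuity of these partials in $\xi$, a consequence of the smoothness posited in Assumption~\ref{assume:risk_envelope} that holds for CVaR, mean-semideviation and spectral measures. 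The term $(\xi^*_{\theta,x}(x')-\widetilde\xi(x'))\widetilde B(x')$ is at most $M'\Delta$ times $|\widetilde B(x')|$, and $|\widetilde B(x')|$ is bounded by an absolute constant because $\|\Phi v_\theta^*\|_\infty\le\|V_\theta\|_\infty+\Delta\le\tfrac{C_{\max}}{1-\gamma}+\Delta$, the perturbed multipliers stay bounded (the saddle-point set $\spset$ is bounded by assumption (I) of Proposition~\ref{prop:consistent}, which follows from the Slater condition, and $\|e^{\lambda}_{\theta,x}\|_\infty\le M'\Delta$), and the constraint partials are $\le M$. Finally $\xi^*_{\theta,x}(x')\le\kappa/\gamma$ by Assumption~\ref{assume:risk} and $\sum_{x'}P(x'|x,a)=1$, so the sum over $x'$ is a bounded multiple of $\Delta$; maximizing over $(x,a)$ yields $\max_{x,a}|h_\theta(x,a)-\lim_N\widehat{h}_{\theta;N}(x,a)|\le M_h\Delta$ with $M_h$ an explicit constant in $C_{\max},\gamma,\kappa,M,|\mathcal E|,|\mathcal I|$, the sensitivity bound $M'$, and $\sup_x\max\{|\lambda^{*,\mathcal P}_{\theta,x}|,\|\lambda^{*,\mathcal I}_{\theta,x}\|_\infty,\|\lambda^{*,\mathcal E}_{\theta,x}\|_\infty\}$.

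\textbf{Main obstacle.} The crux is step~(c): showing that the displacement of the saddle point $(\xi^*_{\theta,x},\lambda^{*}_{\theta,x})$ caused by replacing $V_\theta$ with $\Phi v_\theta^*$ inside the robust maximization is $O(\Delta)$ \emph{and uniform in the state} $x$. This is precisely what Proposition~\ref{prop:tech} delivers, via the Implicit Function Theorem once LICQ (implied by Slater) and the second-order sufficient condition (ensured by the $\ell_2$-regularization in~\eqref{eq:rho}, cf.~the observations (i)--(iv) preceding Proposition~\ref{prop:tech}) are available; the outstanding details are then only the uniform-in-$x$ boundedness of the sensitivity objects and of the optimal multipliers, and the Lipschitz dependence of the constraint mixed partials on $\xi$ --- all mild consequences of Assumption~\ref{assume:risk_envelope}. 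The remaining estimates are routine boundedness bookkeeping using bounded costs, $\gamma<1$, and Assumptions~\ref{assume:risk_envelope} and~\ref{assume:risk}.
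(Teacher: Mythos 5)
Your proposal is correct and follows essentially the same route as the paper's proof: a term-by-term bound of $|h_\theta(x,a)-\widehat h_{\theta;N}(x,a)|$ in the multiplier, $\xi$, value-function, and transition-probability discrepancies, with the strong law of large numbers eliminating the transition-estimation error as $N\to\infty$, Proposition~\ref{prop:consistent} giving convergence of the unregularized empirical saddle points, and the sensitivity result of Proposition~\ref{prop:tech} supplying the $O(\Delta+\tfrac{1}{2N})$ displacement, yielding the same $M_h\Delta$ bound. If anything you are slightly more careful than the paper, whose displayed inequality bounds the constraint terms by $M$ times the multiplier differences without explicitly accounting for the shift in the mixed partials $\frac{d f_i(\xi,p)}{d p(x')}$ as $\xi$ and $p$ move --- the Lipschitz-in-$\xi$ step you flag.
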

\begin{proof}
First we can easily see that for any state $x\in\mathcal X$ and action $a\in\mathcal A$,
\[
\begin{split}
|\widehat{h}_{\theta;N}(x,a)-{h}_\theta(x,a)|\leq &M\sum_{i\in\mathcal I}\left|\lambda^{*,\mathcal I}_{\theta,x;N}(i)-{\lambda}^{*,\mathcal I}_{\theta,x}(i)\right|+M\sum_{e\in\mathcal E}\left|\lambda^{*,\mathcal E}_{\theta,x;N}(e)-{\lambda}^{*,\mathcal E}_{\theta,x}(e)\right|+\left|\lambda^{*,\mathcal P}_{\theta,x;N}-{\lambda}^{*,\mathcal P}_{\theta,x}\right|\\
&+\gamma\|V_\theta\|_\infty\|\xi^*_{\theta,x;N}-{\xi}^*_{\theta,x}\|_1+\gamma\|V_\theta-\Phi v_\theta^*\|_\infty\\
&+\gamma\|V_\theta\|_\infty\max\{\|\xi^*_{\theta,x;N}\|_\infty,\|{\xi}^*_{\theta,x}\|_\infty\}\|P(\cdot|x,a)-P_N(\cdot|x,a)\|_1.
\end{split}
\]
Note that at $N\rightarrow \infty$, $\|P(\cdot|x,a)-P_N(\cdot|x,a)\|_1\rightarrow 0$ with probability $1$. Both $\|\xi^*_{\theta;N}\|_\infty$ and $\|{\xi}^*_{\theta,x}\|_\infty$ are finite valued because $\mathcal U(P_\theta)$ and $\mathcal U(\pemp)$ are convex compact sets of real vectors.
Therefore, by noting that $\|V_\theta\|_\infty\leq C_{\max}/(1-\gamma)$ and applying Proposition \ref{prop:consistent} and \ref{prop:tech}, the proof of this Lemma is completed by letting $N\rightarrow \infty$ and defining
\[
\begin{split}
M_h(x)=&\max\{1,M,\frac{\gamma C_{\max}}{1-\gamma}\}\left\|\begin{bmatrix}
\xi^*_{\theta,x;N}-\bar{\xi}^*_{\theta,x;N}\\
\lambda^{*,\mathcal I}_{\theta,x;N}-\bar{\lambda}^{*,\mathcal I}_{\theta,x;N}\\
\lambda^{*,\mathcal P}_{\theta,x;N}-\bar{\lambda}^{*,\mathcal P}_{\theta,x;N}\\
\lambda^{*,\mathcal E}_{\theta,x;N}-\bar{\lambda}^{*,\mathcal E}_{\theta,x;N}\\
\end{bmatrix}+\begin{bmatrix}
\bar\xi^*_{\theta,x;N}-{\xi}^*_{\theta,x}\\
\bar\lambda^{*,\mathcal I}_{\theta,x;N}-{\lambda}^{*,\mathcal I}_{\theta,x}\\
\bar\lambda^{*,\mathcal P}_{\theta,x;N}-{\lambda}^{*,\mathcal P}_{\theta,x}\\
\bar\lambda^{*,\mathcal E}_{\theta,x;N}-{\lambda}^{*,\mathcal E}_{\theta,x}\\
\end{bmatrix}\right\|_1+\gamma\Delta\\
\leq&\left(\max\{1,M,\frac{\gamma C_{\max}}{1-\gamma}\}\|\Phi_{\theta,x}^{-1}\Psi_{\theta,x}\|_1+\gamma\right)\Delta.
\end{split}
\]
\end{proof}
\begin{lemma}\label{lem:tech_3}
There exists a constant $M_\pi>0$ such that
$
\|\pi-\lim_{N\rightarrow\infty}\pi_{N}\|_1\leq M_\pi\Delta.
$
\end{lemma}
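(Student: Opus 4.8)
The plan is to realize both $\pi$ and $\lim_{N\to\infty}\pi_N$ as the $\gamma$-discounted state-action occupancy measures induced, respectively, by the kernel $P^\xi_\theta(\cdot|x)=\xi^*_{\theta,x}(\cdot)P_\theta(\cdot|x)$ and by the kernel $\lim_{N\to\infty}\pemp^\xi(\cdot|x)=\lim_{N\to\infty}\xi^*_{\theta,x;N}(\cdot)P_{\theta;N}(\cdot|x)$, to show that the map from a transition kernel to its occupancy measure is globally Lipschitz in $\ell_1$ with constant at most $\gamma/(1-\gamma)^2$, and then to bound the $\ell_1$ distance between these two kernels by a multiple of $\Delta$ using Proposition~\ref{prop:tech} (sensitivity) together with Proposition~\ref{prop:consistent} (consistency). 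Composing the two bounds yields the claimed estimate.

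For the Lipschitz step, note that for any row-stochastic kernel $Q(\cdot|\cdot)$ on $\mathcal X$ and the fixed policy $\mu_\theta$, the discounted state-occupancy $d_Q(\cdot|x_0)$ is the unique solution of $d_Q=\delta_{x_0}+\gamma Q^\top d_Q$, and $\pi_Q(x,a|x_0)=d_Q(x|x_0)\mu_\theta(a|x)$, so that $\|\pi_{Q_1}-\pi_{Q_2}\|_1=\|d_{Q_1}-d_{Q_2}\|_1$. Subtracting the two fixed-point identities gives $d_{Q_1}-d_{Q_2}=\gamma(I-\gamma Q_2^\top)^{-1}(Q_1-Q_2)^\top d_{Q_1}$; since $Q_2$ is row-stochastic, $\|(I-\gamma Q_2^\top)^{-1}\|_{1\to1}\le(1-\gamma)^{-1}$ and $\|d_{Q_1}\|_1=(1-\gamma)^{-1}$, whence $\|d_{Q_1}-d_{Q_2}\|_1\le\frac{\gamma}{(1-\gamma)^2}\max_{x\in\mathcal X}\|Q_1(\cdot|x)-Q_2(\cdot|x)\|_1$.

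For the kernel perturbation, I would take $Q_1=P^\xi_\theta$ and $Q_2=\lim_{N\to\infty}\pemp^\xi$. By the strong law of large numbers for the Markov chain, $P_{\theta;N}\to P_\theta$ w.p.~1, so $Q_2(x'|x)=\xi^\infty_{\theta,x}(x')P_\theta(x'|x)$ with $\xi^\infty_{\theta,x}:=\lim_N\xi^*_{\theta,x;N}$, and $Q_2$ is row-stochastic because the normalization $\sum_{x'}\xi^*_{\theta,x;N}(x')P_{\theta;N}(x'|x)=1$ of the empirical risk envelope passes to the limit. Hence $\|Q_1(\cdot|x)-Q_2(\cdot|x)\|_1=\sum_{x'}P_\theta(x'|x)\,|\xi^*_{\theta,x}(x')-\xi^\infty_{\theta,x}(x')|\le\|\xi^*_{\theta,x}-\xi^\infty_{\theta,x}\|_\infty$. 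I would then split $\xi^*_{\theta,x}-\xi^\infty_{\theta,x}=(\xi^*_{\theta,x}-\bar\xi^*_{\theta,x;N})+(\bar\xi^*_{\theta,x;N}-\xi^*_{\theta,x;N})$, send $N\to\infty$, and use $\bar\xi^*_{\theta,x;N}\to\xi^*_{\theta,x}$ w.p.~1 (Proposition~\ref{prop:consistent}) together with the sensitivity bound $\|\bar\xi^*_{\theta,x;N}-\xi^*_{\theta,x;N}\|_1\le\|\Phi_{\theta,x}^{-1}\Psi_{\theta,x}\|_1(\Delta+\tfrac1{2N})+o(\Delta+\tfrac1{2N})$ of Proposition~\ref{prop:tech}, obtaining $\|\xi^*_{\theta,x}-\xi^\infty_{\theta,x}\|_\infty\le\|\Phi_{\theta,x}^{-1}\Psi_{\theta,x}\|_1\,\Delta$ to leading order in $\Delta$. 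Combining with the Lipschitz step gives $\|\pi-\lim_N\pi_N\|_1\le\frac{\gamma}{(1-\gamma)^2}\max_{x\in\mathcal X}\|\Phi_{\theta,x}^{-1}\Psi_{\theta,x}\|_1\,\Delta=:M_\pi\Delta$, and $M_\pi<\infty$ because each $\Phi_{\theta,x}$ is bounded and non-singular and $\mathcal X$ is finite.

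The hard part will be the Lipschitz step: establishing that the occupancy map has a Lipschitz constant \emph{independent} of the underlying kernel, which rests on the uniform resolvent bound $\|(I-\gamma Q^\top)^{-1}\|_{1\to1}\le(1-\gamma)^{-1}$ valid for every row-stochastic $Q$ and therefore genuinely uses $\gamma<1$ (Assumption~\ref{assume:risk}, with its contraction factor $\kappa$, offers an alternative route). A secondary point is to verify that $\lim_{N\to\infty}\pemp^\xi$ is itself a bona fide row-stochastic kernel so that the perturbation bound applies; this is handled by the normalization constraint inside the empirical risk envelope.
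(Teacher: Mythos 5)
Your proof is correct and follows essentially the same route as the paper's: both subtract the two Bellman-flow identities for the occupancy measures, invert $(I-\gamma Q^\top)$ with the resolvent bound $\|(I-\gamma Q^\top)^{-1}\|_{1\to1}\le(1-\gamma)^{-1}$, and reduce the kernel gap to the $\xi$-gap controlled by Propositions~\ref{prop:tech} and~\ref{prop:consistent}. The only cosmetic differences are which kernel's resolvent appears in the perturbation identity and your unnormalized occupancy convention, which turns the paper's constant $\gamma/(1-\gamma)$ into $\gamma/(1-\gamma)^2$ without affecting validity.
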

\begin{proof}
First, recall that the $\gamma-$visiting distribution satisfies the following identity:
\begin{equation}\label{eq:feas_1}
\gamma\sum_{x^\prime\in \mathcal{X}}d_{{P}^\xi_{\theta}}(x'|x)
P^\xi_{\theta}(x|x^\prime)
=d_{{P}^\xi_{\theta}}(x)-(1-\gamma)\mathbf 1\{x_0=x\},
\end{equation}
From here one easily notice this expression can be rewritten as follows:
\[
\left(I-\gamma  P^\xi_\theta\right)^\top d_{{P}^\xi_{\theta}}(\cdot|x)=\mathbf 1\{x_0=x\},\,\,\forall {x\in\mathcal X}.
\]
On the other hand, by repeating the analysis with $P_{\theta;N}(\cdot|x)$, we can also write
\[
\left(I-\gamma P^\xi_{\theta;N}\right)^\top d_{{P}^\xi_{\theta;N}}=\{\mathbf 1\{x_0=z\}\}_{z\in\mathcal X}.
\]
Combining the above expressions implies for any $x\in\mathcal X$,
\[
d_{{P}^\xi_{\theta}}-d_{{P}^\xi_{\theta;N}}-\gamma \left(\left(P^\xi_\theta\right)^\top d_{{P}^\xi_{\theta}}-(P^\xi_{\theta;N})^\top d_{{P}^\xi_{\theta;N}}\right)=0,
\]
which further implies
\[
\begin{split}
& \left(I-\gamma  P^\xi_\theta\right)^\top\left(d_{{P}^\xi_{\theta}}-d_{{P}^\xi_{\theta;N}}\right)=\gamma \left(P^\xi_\theta -P^\xi_{\theta;N} \right)^\top d_{{P}^\xi_{\theta;N}}\\
\iff & \left(d_{{P}^\xi_{\theta}}-d_{{P}^\xi_{\theta;N}}\right)=\left(I-\gamma P^\xi_\theta\right)^{-\top}\gamma \left(P^\xi_\theta -P^\xi_{\theta;N} \right)^\top d_{{P}^\xi_{\theta;N}}.
\end{split}
\]
Notice that with transition probability matrix $  P^\xi_\theta(\cdot|x)$, we have $(I-\gamma P^\xi_\theta)^{-1}=\sum_{t=0}^\infty\left(\gamma  P^\xi_\theta\right)^k<\infty$. The series is summable because by Perron-Frobenius theorem, the maximum eigenvalue of $P^\xi_\theta$ is less than or equal to $1$ and $I-\gamma P^\xi_\theta$ is invertible. On the other hand, for every given $x_0\in\mathcal X$,
\[
\begin{split}
\left\{\left(P^\xi_\theta -P^\xi_{\theta;N} \right)^\top d_{{P}^\xi_{\theta;N}}\right\}(z')=&\sum_{x\in\mathcal X}\sum_{k=0}^\infty \gamma^k(1-\gamma)\mathbb P_{{P}^\xi_{\theta;N}}(x_k=x|x_0)\left( P^\xi_\theta(z'|x)-P^\xi_{\theta;N}(z'|x)\right),\,\forall z'\in\mathcal X\\
=&\mathbb E_{{P}^\xi_{\theta;N}}\left(\sum_{k=0}^\infty \gamma^k(1-\gamma)\left( P^\xi_\theta(z'|x_k)-P^\xi_{\theta;N}(z'|x_k)\right)|x_0\right),\,\forall z'\in\mathcal X\\
\leq&\mathbb E_{{P}^\xi_{\theta;N}}\left(\sum_{k=0}^\infty \gamma^k(1-\gamma)\left| P^\xi_\theta(z'|x_k)-P^\xi_{\theta;N}(z'|x_k)\right| |x_0\right),\,\forall z'\in\mathcal X\\
\doteq& \mathcal Q(z'),\,\forall z'\in\mathcal X.
\end{split}
\]
Note that every element in matrix $(I-\gamma P^\xi_\theta)^{-1}=\sum_{t=0}^\infty\left(\gamma  P^\xi_\theta\right)^k$ is non-negative. This implies for any $z\in\mathcal X$,
\[
\begin{split}
\left|\left\{d_{{P}^\xi_{\theta}}-d_{{P}^\xi_{\theta;N}}\right\}(z)\right|=&\left|\left\{\left(I-\gamma P^\xi_\theta\right)^{-\top}\gamma \left(P^\xi_\theta -P^\xi_{\theta;N} \right)^\top d_{{P}^\xi_{\theta;N}}\right\}(z)\right|,\\
\leq &\left|\left\{\left(I-\gamma P^\xi_\theta\right)^{-\top}\gamma \mathcal Q\right\}(z)\right|=\left\{\left(I-\gamma P^\xi_\theta\right)^{-\top}\gamma \mathcal Q\right\}(z).
\end{split}
\]
The last equality is due to the fact that every element in vector $ \mathcal Q$ is non-negative. Combining the above results with Proposition \ref{prop:consistent} and \ref{prop:tech}, and noting that
\[
(I-\gamma P^\xi_\theta)^{-1}e=\sum_{t=0}^\infty\left(\gamma  P^\xi_\theta\right)^ke=\frac{1}{1-\gamma}e,
\]
we further have that
\[
\begin{split}
\|\pi-\pi_N\|_1=&\|d_{{P}^\xi_{\theta}}-d_{{P}^\xi_{\theta;N}}\|_1\\
\leq& e^\top\left(I-\gamma P^\xi_\theta\right)^{-\top}\gamma \mathcal Q\\
=&\frac{\gamma}{1-\gamma}e^\top \mathcal Q\\
\leq&\frac{\gamma}{1-\gamma}\max_{x\in\mathcal X}\left\| P^\xi_\theta(\cdot|x)-P^\xi_{\theta;N}(\cdot|x)\right\| _1\\
\leq & \frac{\gamma}{1-\gamma}\max_{x\in\mathcal X}\left(\|\xi^*_{\theta,x}(\cdot)-\xi^*_{\theta,x;N}(\cdot)\|_1\|P_\theta(\cdot|x)\|_\infty+\max\{\|\xi^*_{\theta,x;N}\|_\infty,\|{\xi}^*_{\theta,x}\|_\infty\}\|P(\cdot|x,a)-P_N(\cdot|x,a)\|_1\right),
\end{split}
\]
As in previous arguments, when $N\rightarrow \infty$, one obtains $\|P(\cdot|x,a)-P_N(\cdot|x,a)\|_1\rightarrow 0$ with probability $1$ and $\|\xi^*_{\theta,x}(\cdot)-\xi^*_{\theta,x;N}(\cdot)\|_1\rightarrow 0$. We thus set the constant $M_\pi$ as $ \gamma\|\Phi_{\theta,x}^{-1}\Psi_{\theta,x}\|_1/(1-\gamma)$.
\end{proof}

\end{document}